\documentclass[twoside,11pt]{article}

\usepackage{blindtext}
\usepackage{amsmath} 

\usepackage{jmlr2e}
\newtheorem{assumption}{Assumption} 

\usepackage{amsmath,amssymb}
\usepackage{algorithm}
\usepackage{algpseudocode}
\usepackage{mathrsfs}
\usepackage{mathtools}
\usepackage{tikz}

\newcommand{\T}{^\top}
\usepackage{color} %textcolor
\usepackage{xcolor}
\usepackage{relsize}
\usepackage{amsmath}%required
\usepackage[normalem]{ulem}

\definecolor{green}{RGB}{20, 171, 132}

\definecolor{BlueforTZ}{RGB}{0,128,255}
\definecolor{GreenforTZ}{RGB}{63,127,127}

\definecolor{olive}{RGB}{128,128,0}
\definecolor{firebrick}{RGB}{178,34,34}

\usepackage{pdflscape}
\usepackage{adjustbox}

\newcommand{\rescellfirst}[9]{%
\begin{tabular}[c]{@{}l@{\hspace{3pt}}r@{}}
SAVE      & $#1\;(#2)\,/\,#3$ \\
LFQ  & $#4\;(#5)\,/\,#6$ \\
GD        & $#7\;(#8)\,/\,#9$
\end{tabular}%
}

\newcommand{\rescell}[9]{%
\begin{tabular}[c]{@{}r@{}}
$#1\;(#2)\,/\,#3$ \\
$#4\;(#5)\,/\,#6$ \\
$#7\;(#8)\,/\,#9$
\end{tabular}%
}

\usepackage{color} %textcolor
\definecolor{BlueforTZ}{RGB}{0,128,255}
\definecolor{GreenforTZ}{RGB}{63,127,127}

\usepackage{booktabs}
\usepackage{array}
\usepackage{graphicx}
\usepackage{pdflscape}
\usepackage{lastpage}

\jmlrheading
{}
{}
{1--\pageref{LastPage}}
{}
{}
{}
{Tuoyi Zhao, Yale University, \texttt{tuoyi.zhao@yale.edu}; 
 Chengchun Shi, London School of Economics and Political Science, \texttt{c.shi7@lse.ac.uk}; 
 Zhengling Qi, George Washington University, \texttt{qizhengling@email.gwu.edu}; 
 Lan Wang, University of Miami, \texttt{lanwang@mbs.miami.edu}}
{}

\ShortHeadings
{Sparse Additive Off-Policy Evaluation}
{Zhao, Shi, Qi and Wang}

\firstpageno{1}

\begin{document}
\title{Sparse Additive Off-Policy Evaluation for Reinforcement
Learning with Potentially Limited Number of Trajectories}

\author{}

\editor{}

\maketitle

%%%%%%%%%%%%%%%%%%%%%%%%%%%%%%%%%%%%%%%%%%%%%%%%%%%%%%%%%%%%%%%%%%%%%%%%%%%%%%
\vspace{-.4cm}
\centerline{Tuoyi Zhao$^1$, Chengchun Shi$^2$, Zhengling Qi$^3$ and Lan Wang$^4$}
\vspace{.4cm}
\centerline{\it $^1$ Yale University}
\centerline{\it $^2$ London School of Economics and Political Science}
\centerline{\it $^3$ George Washington University}
\centerline{\it $^4$ University of Miami}

\vspace{.55cm}
%\fontsize{9}{11.5pt plus.8pt minus.6pt}\selectfont%

\begin{quotation}
\noindent {\it Abstract:}
We develop a new framework for flexible, nonlinear, and interpretable off-policy evaluation for infinite-horizon reinforcement learning. To handle large state spaces and support transparent decision-making, we model the Q-function using a nonlinear function class with a sparse additive structure.

We derive high-probability finite-sample error bounds for estimating the 
value function of a target policy and show that the bounds depend only logarithmically on the ambient dimension $d$, thereby alleviating the curse of dimensionality. In contrast to most existing theory for off-policy evaluation, which typically assumes access to many trajectories,
our analysis guarantees accurate value estimation when either the number of trajectories or the time horizon is sufficiently large.

In addition, we propose a group-sparsity-based feature screening procedure that identifies, with high probability, a reduced feature set containing all relevant covariates. Numerical experiments demonstrate the effectiveness of the proposed approach.\\

\noindent {\it Key words and phrases:}
High-dimensional data, Off-policy evaluation,
Reinforcement learning, Sparsity.
\end{quotation}\par

\section{INTRODUCTION}
%Towards Optimal Off-Policy Evaluation for Reinforcement Learning with Marginalized Importance Sampling

Off-policy evaluation (OPE) is a cornerstone of reinforcement learning (RL). Its goal is to estimate the performance of a target policy using offline data (historical interaction data) generated by a potentially different behavior policy (\citet{sutton2018reinforcement}). Unlike online evaluation, where an agent directly interacts with the environment to test the policy, OPE avoids the risks and costs of deploying suboptimal or unsafe policies. This is especially important for domains where real-world interactions are expensive, risky, or ethically constrained. Examples include targeted advertising and marketing \citep{tang2013automatic,thomas2017predictive}, education \citep{mandel2014offline}, and healthcare \citep{murphy2001marginal,murphy2003optimal,thomas2017predictive}, where poor decisions can lead to wasted resources, lost opportunities, or even harm to individuals. 
%By enabling reliable policy evaluation  without active experimentation, OPE not only offers a safer and more cost-effective alternative to online testing, but also plays a key role in bridging the gap between theoretical advances in RL and their responsible deployment in high-stakes real-world settings.

Existing off-policy evaluation methods in RL can be broadly grouped into four categories: model-based approaches \citep{liu2018representation,gottesman2019combining,yin2020asymptotically,zhang2021autoregressive}, importance sampling (IS)-based methods \citep{liu2018breaking,wang2021projected}, value function-based methods \citep{luckett2019estimating,liao2021off}, and doubly robust (DR) estimation methods \citep{jiang2016doubly,thomas2016data,bibaut2021sequential,kallus2022efficiently,liao2022batch}. Each class of methods strikes a different balance between bias and variance.
However, the theoretical guarantees of most existing approaches rely on access to a sufficiently large number of trajectories, an assumption that is often unrealistic in real-world applications where data collection is costly or constrained. To address this challenge, \citet{kallus2022efficiently} proposed a double reinforcement learning framework tailored to the limited-trajectory regime. While promising, their approach still requires estimating density ratios, which remains a technically challenging and can be unstable in practice. More recently, \citet{shi2022statistical} investigated inference based on linear approximation of the $Q$-function 
and proved theoretical guarantees even with a limited number of trajectories, provided that the number of decision points is sufficiently large. 

Offline RL with linear function approximation, but without exploiting sparsity structures, has been extensively studied \citep{bertsekas1995neuro,lagoudakis2003,munos2003,alizadeh2003second,tosatto2017,duan2020minimax,jin2020provably,shi2022statistical}, among others. Building on this line of work, sparse learning in RL has also attracted significant attention. Early efforts focused on combining temporal-difference (TD) learning with $\ell_1$ regularization in on-policy settings \citep{kolter2009regularization,geist2011l1,hoffman2011regularized,painter2012greedy}, while \citet{liu2012regularized} extended these ideas to off-policy evaluation via regularized TD learning. 
However, these approaches largely lacked error bounds or consistency guarantees.
To address this gap, \citet{ghavamzadeh2011finite} proposed Lasso-TD and established in-sample guarantees for value function estimation, and \citet{farahmand2016regularized} developed a regularized policy iteration algorithm in nonparametric function spaces with accompanying statistical analysis. More recently, attention has shifted toward sparse linear MDPs: \citet{hao2021sparse} studied sparse linear approximation for offline RL and derived finite-sample error bounds that avoid polynomial dependence on the ambient dimension, thereby providing a theoretical foundation for the sample-efficiency benefits of sparsity. Their analysis was restricted to unstructured sparsity. Extensions include \citet{hao2021online}, who analyzed the on-policy setting, and \citet{golowich2023exploring}, who considered a broader class of $\ell_1$-bounded linear MDPs and designed algorithms achieving near-optimal on-policy performance. Beyond Lasso-type approaches, \citet{ma2023sequential} proposed a model-free variable selection framework using sequential knockoffs, which efficiently identifies minimal sufficient state representations.

In this work, we study infinite-horizon offline reinforcement learning, where 
the data-generating mechanism is modeled by a Markov decision process.
Our focus is off-policy evaluation in large state spaces, and we extend semiparamtric function approximation methods to incorporate group-wise sparsity structures.  We are particularly interested
in the high-dimensional regime where the state dimension $d$ is very large and satisfies
$d\gg \max\{n, T\}$,
with $n$ denoting the number of observed trajectories and $T$ the number of observed time points.

We propose a new algorithm for estimating the cumulative discounted value of a target policy in the infinite-horizon setting that performs policy evaluation and feature selection simultaneously.  {Motivated by sparse additive modeling in supervised learning
\citep{yuan2006model,ravikumar2009sparse,pmlr-v9-liu10a}, our approach
extends the infinite-horizon Bellman estimating framework of
\citet{shi2022statistical}, 
which is developed for low-dimensional state spaces.
In our high-dimensional setting, the
spline-expanded coefficient dimension can exceed $nT$. This necessitates group regularization and requires new theoretical analysis
to establish uniform concentration bounds for the empirical Bellman scores
under Markov dependence, while simultaneously accounting for both the diverging spline
dimension and the spline approximation error.}

We derive high-probability finite-sample error bounds that depend on the ambient dimension $d$ only through $\log d$ and $d_0$, the number of relevant features. In particular, the estimation error has no polynomial dependence on $d$, thereby mitigating the curse of dimensionality.  {Our theory guarantees accurate estimation when either the number of
trajectories $n$ or the number of decision points $T$ is sufficiently large.
%Under the mixing and stated growth conditions, increasing $T$ improves not
%only the magnitude of the finite-sample error bound but also its confidence
%level. Consequently, 
The proposed estimator remains consistent in probability
as $T\to\infty$ even when $n$ is fixed.} Furthermore, we propose a group-sparsity-aware feature screening procedure that, with high probability, can identify a reduced set containing all relevant features.
Our work thus significantly extends the tools and theory available for handling large state space for off-policy evaluation.

The rest of the paper is organized as follows.
We introduce the notation and background in Section 2.
Section 3 presents the sparsity-aware nonlinear sparse model and the estimation procedure. Section 4 drives the finite-sample error bounds for the value function under Markov errors. Numerical studies are reported in Sections. The online supplement contains 
additional technical proofs.

\section{PRELIMINARIES}

%In this section, we will introduce the framework of our %OPE problem and sparse MDP.

\subsection{Off-policy Evaluation for Reinforcement Learning }

 The dataset consists of $n$ independent trajectories: 
$
\{(X_{i,t}, A_{i,t}, R_{i,t}, X_{i,t+1}): 1 \le i \le n, \; 1 \le t \le T\}$,    
where each trajectory $\{(X_{0,t},A_{0,t},R_{0,t},X_{0,t+1}) : t \ge 0\}$ is generated according to a Markov decision process (MDP). At time $t$, a state $X_{0,t}\in\mathcal{X}^d$ is observed, an action $A_{0,t}\in\mathcal{A}$ is chosen according to the behavior policy $b(\cdot|X_{0,t})$, the system then transits to the next state $X_{0,t+1}\sim p(\cdot|X_{0,t},A_{0,t})$ where $p(\cdot|\cdot, \cdot)$ denotes the transition probability (also called transition kernel or transition dynamics), and an immediate reward $R_{0,t}$ is received with conditional mean 
$
r(x,a) = \mathbb{E}(R_{0,t}\mid X_{0,t}=x,A_{0,t}=a)$.
It is standard to assume $|R_{i,t}|\leq c_r$ for a positive constant $c_r$, $\forall \ i, t$.

We assume that the random trajectory satisfies the
following Markov assumption (MA): for any $\mathcal{B}\subset\mathcal{X}^d$,
\begin{align*}
    P\Big(X_{0, t+1} \in \mathcal{B} & \,\big|\, X_{0, t}=x, A_{0, t}=a,
   \{R_{0, j}\}_{0 \leq j<t}, \{X_{0, j}\}_{0 \leq j<t}, \{A_{0, j}\}_{0 \leq j<t}\Big) \\
   &= P\big(X_{0, t+1}\in \mathcal{B} \mid X_{0, t}=x, A_{0, t}=a\big),
\end{align*}
  and the conditional mean independence assumption (CMIA):
\begin{equation*}
    \begin{aligned}
        \mathbb{E}\Big(R_{0, t} \mid & X_{0, t}=x, A_{0, t}=a,\left\{R_{0, j}\right\}_{0 \leq j<t},\left\{X_{0, j}\right\}_{0 \leq j<t},\\  &\left\{A_{0, j}\right\}_{0 \leq j<t}\Big)  
= \mathbb{E}\left(R_{0, t} \mid X_{0, t}=x, A_{0, t}=a\right).
    \end{aligned}
\end{equation*}
If MA holds and $R_{0,t}$ is a deterministic function of $X_{0,t}$ and $A_{0,t}$, 
%and $X_{0,t+1}$,
then CMIA is automatically  satisfied  \citep{shi2022statistical}.

Let $\pi$ be a stationary policy of interest, which
chooses the action $a\in \mathcal{A}$ with probability
$\pi(a|x)$ given the state $x$. 
The performance
of $\pi$ is measured by the expected cumulative discounted reward, also called value function, defined as
$
V^{\pi}(x)=\sum_{t=0}^\infty \gamma^t \mathbb{E}^\pi\left(R_{0, t} \mid X_{0,0}=x\right)$, where
$\gamma \in [0,1)$ is the discount factor
that balances the trade-off
between the immediate and future rewards,
$\mathbb{E}^\pi(\cdot \mid X_{0,0}=x)$ denotes the expectation 
with respect to the random trajectory
when it begins with the initial state $x$ and
the actions along the trajectory are taken
according to $\pi$.
We also introduce the state-action value function
or the Q-function:
$$
Q^{\pi}( x,a)=\sum_{t=0}^\infty \gamma^{t} \mathbb{E}^\pi\left(R_{0, t} \mid X_{0,0}=x,A_{0,0} = a\right).
$$
The $Q$ function satisfies the Bellman consistency equations
$$ Q^{\pi}(x,a) = r(x,a) +{\gamma}\mathbb{E}^{\pi}(Q^{\pi}(x^{\prime},a^{\prime})\mid x,a),
$$
for all $(x,a)\in \mathcal{X}^d\times \mathcal{A}$.
It is known that $V^{\pi}(x)=\sum_a\pi(a|s)Q^{\pi}( x,a)$.
Our main goal is to estimate the cumulative discounted value of 
$\pi$ using the offline trajectories data.

\subsection{Q-function Modeling under Sparsity Constraint}

In many applications, the number of features can be very large, potentially exceeding $\max\{n, T\}$. In such high-dimensional settings, existing methods may perform poorly: from a theoretical perspective, coverage conditions based on eigenvalues of feature matrices may fail because the corresponding matrices are no longer full rank.
In this paper, we develop a new sparsity-aware Q-function modeling procedure to address the challenges associated with high-dimensional feature space. 

\begin{assumption}[\textbf{Sparse Q-function}]\label{sparseQ}
Let $\pi$ be the target policy. There exists an active feature set $\mathcal{K} \subseteq [d]$ with cardinality $|\mathcal{K}| \leq d_0 \ll d$. For each action $a \in \mathcal{A}$, there exists a subset $S_a \subseteq \mathcal{K}$ such that the Q-function depends only on the coordinates indexed by $S_a$, i.e.,
\begin{equation}
    Q^{\pi}(x,a) = Q^{\pi}(x_{S_a}, a), \quad \forall \ (x,a) \in \mathcal{X} \times \mathcal{A}.
\end{equation}

\end{assumption}

 {Because our goal is policy evaluation, we impose sparsity directly on the target-policy Q-function. Defined in terms of the original state coordinates, $S_a$ identifies the characteristics that determine how long-run policy performance varies with the initial state and action $a$, while the estimated component functions describe these relationships. Variables outside $S_a$ may affect rewards or transitions but provide no additional information about the expected cumulative return conditional on the selected variables and initial action.}
This differs from the sparse linear MDP framework of \citet{hao2021sparse}, which imposes sparsity on the transition structure relative to a prespecified feature dictionary. Our formulation instead captures sparsity in the Q-function after rewards and target-policy transition dynamics are combined.

 {The following proposition presents two alternative sufficient conditions for Assumption~\ref{sparseQ}. These conditions serve only to illustrate structural settings that imply Assumption~\ref{sparseQ}; the subsequent analysis imposes Assumption~\ref{sparseQ} directly and does not rely on either condition.}

\begin{proposition}\label{condition_sparseQ}
Assumption~\ref{sparseQ} holds if there exists an index set
$\mathcal K\subseteq[d]$ with $|\mathcal K|\ll d$ such that either of the
following conditions is satisfied:
\begin{itemize}
    \item \textbf{(Sufficient Condition 1)}
    The reward function and the full transition kernel depend on the current
    state only through $x_{\mathcal K}$:
    $r(x,a)=r(x_{\mathcal K},a)$ and
    $
    p(x'\mid x,a)=p(x'\mid x_{\mathcal K},a)$.

   \item \textbf{(Sufficient Condition 2)}
     {The reward function and the target policy depend only on
    $x_{\mathcal K}$:
    $
    r(x,a)=r(x_{\mathcal K},a)$,
    $
    \pi(a\mid x)=\pi(a\mid x_{\mathcal K})$,
    and the transition of the active state satisfies
    $p(x'_{\mathcal K}\mid x,a)
    =
    p(x'_{\mathcal K}\mid x_{\mathcal K},a)$.}
\end{itemize}
Under either condition,
$
Q^\pi(x,a)=Q^\pi(x_{\mathcal K},a)$,
so that $\mathcal K$ can serve as an active feature set in
Assumption~\ref{sparseQ}.
\end{proposition}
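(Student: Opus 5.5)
The plan is to show that $Q^\pi(x,a)$ depends on $x$ only through $x_{\mathcal K}$ by writing it as the series $Q^\pi(x,a)=\sum_{t\ge 0}\gamma^t \mathbb{E}^\pi(R_{0,t}\mid X_{0,0}=x,A_{0,0}=a)$. We then prove by induction on $t$ that each summand is a function of $(x_{\mathcal K},a)$ alone. Since $|R_{0,t}|\le c_r$ and $\gamma<1$, the series converges absolutely and uniformly, so the property passes to the sum. Once $Q^\pi(x,a)=Q^\pi(x_{\mathcal K},a)$ is established, we take $S_a=\mathcal K$ for every $a$ and obtain Assumption~\ref{sparseQ} with $d_0=|\mathcal K|$.

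\textbf{Sufficient Condition 1.} By CMIA and MA, $\mathbb{E}^\pi(R_{0,0}\mid x,a)=r(x,a)=r(x_{\mathcal K},a)$. For $t\ge1$, the tower property and the Markov property under $\pi$ give
\[
\mathbb{E}^\pi(R_{0,t}\mid X_{0,0}=x,A_{0,0}=a)=\int \mathbb{E}^\pi(R_{0,t-1}\mid X_{0,0}=x')\,p(dx'\mid x,a),
\]
where $\mathbb{E}^\pi(\cdot\mid X_{0,0}=x')$ averages the next action over $\pi(\cdot\mid x')$. The integrand is a bounded function of $x'$, and its exact form does not matter. Because the full kernel satisfies $p(\cdot\mid x,a)=p(\cdot\mid x_{\mathcal K},a)$, the integral depends only on $(x_{\mathcal K},a)$. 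This argument needs no induction and works even though $\pi$ may depend on all of $x'$.

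\textbf{Sufficient Condition 2.} Here the full next state may depend on all of $x$, so we instead prove the inductive claim
\[
g_t(x,a):=\mathbb{E}^\pi(R_{0,t}\mid X_{0,0}=x,A_{0,0}=a)=g_t(x_{\mathcal K},a)\quad\text{for all }t.
\]
The base case $t=0$ is the reward condition. For the step, condition on the next state:
\[
g_{t+1}(x,a)=\int \sum_{a'}\pi(a'\mid x')\,g_t(x',a')\,p(dx'\mid x,a).
\]
By the induction hypothesis and $\pi(a'\mid x')=\pi(a'\mid x'_{\mathcal K})$, the integrand is a function $h(x'_{\mathcal K})$ of the active coordinates only. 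Its integral against $p(dx'\mid x,a)$ therefore equals its integral against the marginal law of $X'_{\mathcal K}$. By assumption that marginal is $p(x'_{\mathcal K}\mid x_{\mathcal K},a)$, which closes the induction.

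The main point requiring care is that the conditional expectation $\mathbb{E}^\pi(R_{0,t}\mid X_{0,0}, A_{0,0})$ is in fact a deterministic function $g_t(x,a)$ satisfying the recursion above. This follows from MA, CMIA and the stationarity of $\pi$: rewards and transitions at time $t$ depend on the past only through $(X_{0,t},A_{0,t})$. I would state this recursion once as a lemma-style step before handling the two conditions, and then conclude with dominated convergence to pass from the summands to $Q^\pi$.
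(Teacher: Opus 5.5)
Your proof is correct, and its core verification matches the paper's. Under Condition~1 the target-policy transition operator $\mathcal P^\pi$ sends \emph{every} bounded function to one depending on $x$ only through $x_{\mathcal K}$. Under Condition~2 it preserves the class $\mathcal G_{\mathcal K}$ of functions of $(x_{\mathcal K},a)$, since the integrand then depends only on $x'_{\mathcal K}$ and only the marginal kernel of $X'_{\mathcal K}$ enters.

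Where you differ is in how you pass to $Q^\pi$. The paper shows that $\Gamma^\pi f=r+\gamma\mathcal P^\pi f$ maps $\mathcal G_{\mathcal K}$ into itself and runs value iteration $f_{m+1}=\Gamma^\pi f_m$ from $f_0=0$. It then uses the $\gamma$-contraction in sup norm, the fact that $Q^\pi$ is the unique bounded fixed point, and closedness of $\mathcal G_{\mathcal K}$ under uniform limits. You instead start from the series definition of $Q^\pi$ and propagate the property term by term through $g_{t+1}=\mathcal P^\pi g_t$. This is the $r^{(t)}$ induction the paper itself uses in its appendix discussion of sparse completeness.

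The two routes are the same computation in different packaging, because the iterates $f_m=\sum_{t<m}\gamma^t(\mathcal P^\pi)^t r$ are exactly your partial sums. Your version needs no contraction or fixed-point uniqueness argument. Its cost is the recursion for $g_t$, which rests on MA, CMIA, time-homogeneity and the Markov target policy; you rightly isolate it as a lemma. After that the limit step is immediate: equal terms give equal partial sums and equal limits, so dominated convergence is not actually needed under the paper's definition of $Q^\pi$. The paper's operator formulation has the advantage of presenting the result as an instance of a more general principle: Assumption~\ref{sparseQ} holds whenever $\mathcal G_{\mathcal K}$ is invariant under $\Gamma^\pi$.
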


\begingroup
 
\begin{remark}
Both sufficient conditions imply the joint-transition property
\begin{equation}\label{sparse_kernal}
p^\pi(x'_{\mathcal K},a'\mid x,a)
=
p^\pi(x'_{\mathcal K},a'\mid x_{\mathcal K},a),
\end{equation}
where \(p^\pi\) denotes the conditional distribution of
\((X'_{\mathcal K},A')\) under the target policy. Together with
\(r(x,a)=r(x_{\mathcal K},a)\), this property closes the Bellman
recursion on the reduced state and implies
\(Q^\pi(x,a)=Q^\pi(x_{\mathcal K},a)\).

Sufficient Condition~1 ensures \eqref{sparse_kernal} by requiring
the full transition kernel of \(X'\) to depend on \(x\) only
through \(x_{\mathcal K}\), without restricting the target policy.
Sufficient Condition~2 instead combines
$p(x'_{\mathcal K}\mid x,a)
=
p(x'_{\mathcal K}\mid x_{\mathcal K},a)$
and
$\pi(a'\mid x')
=
\pi(a'\mid x'_{\mathcal K})$,
which yields
\[
p^\pi(x'_{\mathcal K},a'\mid x,a)
=
p(x'_{\mathcal K}\mid x_{\mathcal K},a)
\pi(a'\mid x'_{\mathcal K}).
\]
Thus, the conditions are not nested: the first imposes a stronger
transition restriction, whereas the second adds a policy restriction
to a reduced-state transition condition. This policy restriction is
only sufficient for verifying \eqref{sparse_kernal}; it is not used
in the subsequent analysis, which relies directly on
Assumption~\ref{sparseQ}.

Property~\eqref{sparse_kernal} identifies a low-dimensional,
decision-relevant subsystem: under the target policy, the joint
transition of \((X'_{\mathcal K},A')\) depends on the current state
only through \(x_{\mathcal K}\). Thus, the Bellman continuation can
be expressed using the reduced state \(X_{\mathcal K}\). The goal is
to reduce the state information needed for policy evaluation, not to
obtain a sparse representation of the full transition kernel.
This perspective is related to the minimum sufficient MDP of
\citet{ma2023sequential}, although we do not require
\(\mathcal K\) to be minimal. In contrast,
\citet{hao2021sparse} retain the full state space and impose sparsity
on the transition representation relative to a prespecified feature
map. Property~\eqref{sparse_kernal} is only sufficient for verifying
Assumption~\ref{sparseQ}; the subsequent analysis imposes that
assumption directly.
\end{remark}

\endgroup

 \subsection{Notation}
We next introduce some notations. Denote $[d] = \{1,\dots,d\}$. For a $d$-dimensional vector $v = (v_1,\dots,v_d)\T$, let $\|v\|_1 = \sum_{i=1}^d|v_i|$,  $\|v\|_2 = \sum_{i=1}^d|v_i|^2$ and $\|v\|_{\infty} = \max_{i}|v_i|$. Let $(v,w)_{\oplus}= (v_1,\dots,v_d,w_1,\dots,w_d)\T$ denote the concatenation between vectors.   For an arbitrary matrix $A$, we define the matrix norms: $\|A\|_{max} = \max_{i,j} |A_{i,j}|$ and $\|A\|_{2} = \max_{\|x\|_2=1} \|Ax\|_2$. For any real value bounded function $f(x)$ on $\mathcal{X}$, $||f(x)||_{\infty} = \sup_{x\in \mathcal{X}}|f(x)|$. For
any set $\mathcal{S}$,  $\mathcal{S}^c $  denotes its complement, and $|\mathcal{S}|$  denotes its cardinality. In the case $\mathcal{S}$ is an index set, $v_{\mathcal{S}}$ denotes the $|\mathcal{S}|$-dimensional sub-vector $(v_i)_{i\in \mathcal{S}}$. For two generic sequences
$\{a(n)\}_{n\geq 1}$ and $\{b(n)\}_{n\geq 1}$, 
$a(n)\lesssim b(n)$ means that there exists some
positive constant $c$ such that $a(n)\leq cb(n)$, $\forall \ n$. 

\section{NONLINEAR SPARSE MODELING AND ESTIMATION FOR OFF-POLICY EVALUATION}

\subsection{Sparse Additive Modeling of the Q function} \label{section_method}

Let the state $x = (x_1,\ldots,x_d)^\top\in \mathcal{X}^d$. We assume a compact state space and, without loss of generality, take 
$\mathcal{X}^d = [0,1]^d$.  We consider a high-dimensional setting 
%in which the state dimension $d$ may far exceed both the number of trajectories and the time horizon, i.e., 
where $d \gg \max\{n, T\}$ and focus on binary actions, but the method extends straightforwardly to any finite action space.
%In high-dimensional reinforcement learning, %the relationship between state variables and the Q-function is often nonlinear. Purely linear models may miss important structures. 
To balance flexibility, interpretability, and statistical efficiency for high-dimensional RL, we adopt an additive modeling framework:  
\begin{equation}
Q^{\pi}(x,a) = \alpha_a^* + \sum_{k=1}^d q^{\pi}_{k,a}(x_k), \quad a \in \{0,1\},
\end{equation}
where $\alpha_a^*$ are unknown constants and $q^{\pi}_{k,a}(x_k)$ are unknown univariate functions. This additive structure captures nonlinear effects of individual state components without requiring the estimation of a fully nonparametric multivariate function, thereby mitigating the curse of dimensionality.
%while preserving interpretability.

%To enhance interpretability and improve estimation in the high-dimensional regime, we impose 
Under the sparse Q-function assumption (Assumption~\ref{sparseQ}),
%and further assume that 
%for each action $a \in \{0,1\}$,
\begin{equation}
Q^{\pi}(x,a) = Q^{\pi}(x_{S_a},a) = \alpha_a^* + \sum_{s \in S_a} q^{\pi}_{s,a}(x_s),
\quad a \in \{0,1\},
\end{equation}
where $S_a$ is the active feature set associated with action $a$. 
%Thus, state components with indices outside $S_0 \cup S_1$ do not affect the Q-function. 
For identifiability, we assume 
$\mathbb{E}_{x \sim \mathbb{G}} \{ q^{\pi}_{s,a}(x_s) \} = 0$
for a reference distribution $\mathbb{G}$ on $\mathcal{X}^d$ (e.g., the state visitation distribution).
%given a reference distribution $\mathbb{G}$ on $\mathcal{X}^d$ (e.g., the state visitation distribution), we assume without loss of generality that
%$\mathbb{E}_{x \sim \mathbb{G}} \{ q^{\pi}_{s,a}(x_s) \} = 0$.
\begingroup
 
We approximate each $q_{s,a}^{\pi}(x_s)$ by a linear combination of
basis functions and treat the basis coefficients associated with each
state variable $x_s$ as a group. 
The resulting representation is linear in the spline coefficients while retaining a flexible nonlinear dependence on the state variables,
with the spline approximation error explicitly accounted for in our theory.
%We allow the
%number of basis functions to increase with the sample size and explicitly
%account for the resulting spline approximation error in our theory.
\endgroup

\subsection{Assumptions for Function Approximation}

\begin{definition}[Smoothness]
For $\nu > 0$, let $\lfloor \nu \rfloor$ be the largest integer not exceeding $\nu$. A real-valued function $h$ on $[0,1]$ is said to be $\nu$-smooth if its $\lfloor \nu \rfloor$-th derivative $h^{(\lfloor \nu \rfloor)}$ satisfies
$
\sup_{z \neq z'} \frac{\left| h^{(\lfloor \nu \rfloor)}(z') - h^{(\lfloor \nu \rfloor)}(z) \right|}{|z'-z|^{\nu - \lfloor \nu \rfloor}} \leq c,
$
for some constant $c > 0$. The collection of such functions is denoted $\mathcal{H}_{\nu,c}$.
\end{definition}

\begin{assumption}[Function class]\label{assumption_smooth}
There exist constants $(\nu,c)$ with $\nu \geq 3$ such that $q^{\pi}_{s,a}(\cdot) \in \mathcal{H}_{\nu,c}$ for all $s \in S_a$ and $a \in \{0,1\}$.
\end{assumption}

 {Assumption~\ref{assumption_smooth} imposes smoothness conditions on  $q^{\pi}_{s,a}(\cdot)$.
We note that Assumption~\ref{assumption_smooth} can be relaxed to $\nu \geq 2$ for any $\epsilon > 0$ when $d$ is small and independent of $n$ and $T$.}
%Notice that $D^{\zeta^{*}}q_{s,a}^{\pi}(x) = D^{\zeta}Q^{\pi}(x,a)$ by setting $\zeta^{*} = (0,\dots,0,\|\zeta\|_1 ,0,\dots,0)$ whose $s$-th entry is $\|\zeta\|_1$ and $\zeta^* = \|\zeta\|_1$. As a result, $q_{s,a}^{\pi}(x) \in \mathcal{H}_{\nu,c,1}$ for all $s =1,\dots,d$. For a fixed value of $a$, $q_{s,a}^{\pi}(x)$ and $Q^{\pi}(x,a)$ share the same smoothness properties. 
%In our setting, we use the $L$ dimensional B-spline basis function with order $ m = \lfloor %\nu \rfloor$. 
%We further denote $\boldsymbol{\beta}_{s,a}^* = 0$ for $s\notin S_a$. We have
%\begin{equation}\label{pro_error}
%  \sup _{x \in \mathcal{X}, a \in \mathcal{A}}\left|Q^{\pi}( x, a)-\alpha^*_{a}-\sum_{s %=1}^d \Phi_s\T(x) \boldsymbol{\beta}_{s,a}^*\right| \leq d_0 C L^{-m }  
%\end{equation}
We approximate each component function $q_{s,a}^{\pi}(x_s)$ using $L$ B-spline basis functions of order $m = \lfloor \nu \rfloor$, denoted by  
$\Phi_s(x_s) = (\phi_{s,1}(x_s), \ldots, \phi_{s,L}(x_s))^\top$.  
A brief review of B-spline basis functions and their properties is provided in Appendix S2 of the online supplement. Since $\nu - 1 \leq m \leq \nu$, there exists a coefficient vector $\boldsymbol{\beta}_{s,a}^* \in \mathbb{R}^L$ such that for all $a \in \{0, 1\}$ and $s \in S_a$,  
$
\sup_{x_s \in \mathcal{X}} \big| q_{s,a}^{\pi}(x_s) - \Phi_s^\top(x_s)\boldsymbol{\beta}_{s,a}^* \big| \leq C L^{-m}$, 
for some constant $C > 0$  (e.g., \citealt{huang1998}, \citealt{schumaker2007}). Consequently,   
\begin{equation}\label{funcional_est}
\sup_{x \in \mathcal{X}} \Big| Q^{\pi}(x,a) - \alpha_a^* - \sum_{s \in S_a} \Phi_s^\top(x_s)\boldsymbol{\beta}_{s,a}^* \Big| \leq C d_0 L^{-m}, \quad a \in \{0, 1\}.
\end{equation}
The number of basis functions $L$ can diverge with $n$ and/or $T$, alleviating the bias of sparse linear modeling while incorporating the group structure in estimation. We take $L\asymp (nT)^{1/(2m-1)}$ for theoretical derivations.

\subsection{Regularized Estimation in High Dimensions}
The Bellman equation for reinforcement learning (
\citet{sutton2018reinforcement}) implies that $\forall \ i, t$,
\begin{equation}\label{10}
 \begin{aligned}
     \mathbb{E}\bigg[R_{i, t}+\gamma \sum_{a \in \mathcal{A}} Q^{\pi}\left(X_{i, t+1}, a\right) \pi\left( X_{i, t+1},a\right)- Q^{\pi}\left(X_{i, t}, A_{i, t}\right) \mid X_{i, t}, A_{i, t}\bigg]=0.
 \end{aligned}
 \end{equation}

Let $\boldsymbol{\beta}_{s,a}^* = 0$ for $s\notin S_a$,
Define the   basis functions  $\Phi (x) = (1,\Phi_1(x_1),\dots,\Phi_d(x_d))_{\oplus}$, where $\oplus$ denotes concatenation of vectors.
We approximate $Q^{\pi}(x,a)$ by
 \begin{equation}
 %Q^{\pi}(x,a) \approx   
 Q^{*}_{\pi}(x,a)=\Phi(x)^T(\alpha_a^*,\boldsymbol{\beta}_{1,a}^*,\dots,\boldsymbol{\beta}_{d,a}^*)_{\oplus}.
 \end{equation}
 Then it follows from (\ref{10}) that
\begin{equation}\label{fbellman}
 \begin{aligned}
 \mathbb{E}\bigg[\big\{R_{i, t}+\gamma \sum_{a \in \mathcal{A}} Q^{*}_{\pi}\left( X_{i, t+1}, a\right) \pi\left( X_{i, t+1},a\right)- Q^{*}_{\pi}\left(  X_{i, t}, A_{i, t}\right)\big\} \mid X_{i, t}, A_{i, t}\bigg]\approx 0.    
 \end{aligned}
 \end{equation}
To introduce the proposed estimation procedure, we next introduce some new notation. Let $\boldsymbol{\alpha}^* = (\alpha_0^*,\alpha_1^*)\T$.
Consider the $(2Ld)$-dimensional vector
$\boldsymbol{\beta}^* = (\boldsymbol{\beta}_{1,0}^*,
\ldots, \boldsymbol{\beta}_{d,0}^*,
\boldsymbol{\beta}_{1,1}^*,
\ldots, \boldsymbol{\beta}_{d,1}^*)_{\oplus}$.
Here and in the sequel, we drop 
$\pi$ from the superscript in defining $\boldsymbol{\alpha}^*$, $\boldsymbol{\beta}^*$ and other related notation for simplicity.
 We introduce the group index $G_{s}$ to denote the set of indices associated with $x_{s}$. 
This enables us to rewrite $\boldsymbol{\beta}^* = (\boldsymbol{\beta}_{G_1}^*,\dots,\boldsymbol{\beta}_{G_{2d}}^*)_{\oplus}$, that is, $\boldsymbol{\beta}_{G_s}^{*} = \boldsymbol{\beta}_{s,0}^{*}$,  $\boldsymbol{\beta}_{G_{s+d}}^{*} = \boldsymbol{\beta}_{s,1}^{*}$, for 
$1\leq s\leq d$.  Similarly, for $ 1\leq s\leq d$, denote the vector $\boldsymbol{\xi}_{G_s}(x, a) =\Phi_s(x_s) \mathbb{I}(a=0), ~\boldsymbol{\xi}_{G_{s+d}}(x, a)=\Phi_s(x_s) \mathbb{I}(a=1)$, $\boldsymbol{U}_{G_s}(x, a) =\Phi_s(x_s) \pi(0|x)$ and $\boldsymbol{U}_{G_{s+d}}(x, a)=\Phi_s(x_s) \pi(1|x)$, where $\mathbb{I}(\cdot)$ is the indicator function. Furthermore, we slightly abuse the notation and let
$\Phi_0(\cdot) = 1$, $\boldsymbol{\xi}_{G_0}(x, a) =(\mathbb{I}(a=0),\mathbb{I}(a=1))\T$, 
and let
$\boldsymbol{U}_{G_0}(x, a) =(\pi(0|x),\pi(1|x))\T$.

%Let $\{G_0, G_1, \dots G_{2d}\}$ be a partition of the index set $\{1,\dots,2(Ld+1)\}$ such that $G_0 = \{1,2\}$, and $G_1, \ldots, G_{2d}$ each has $L$ 
%elements. For $1\leq s\leq d$ and $a\in \{0,1\}$,
%G_{s+ad}$ denotes the size $L$ index set corresponding to %modeling $q_{s,a}^{\pi}(x_s)$. 

 %For $ 1\leq s\leq d$,  denote $\boldsymbol{\alpha}^* = (\alpha_0^*,\alpha_1^*)\T$, $\boldsymbol{\beta}_{G_s}^{*} = \boldsymbol{\beta}_{s,0}^{*}$,  $\boldsymbol{\beta}_{G_{s+d}}^{*} = \boldsymbol{\beta}_{s,1}^{*}$ and $\boldsymbol{\beta}^* = (\boldsymbol{\beta}_{G_1}\starT,\dots,\boldsymbol{\beta}_{G_{2d}}\starT)\T$. 
%Let $\Phi_s(x)$ denotes the B-spline basis functions %$\Phi_s(x) = B(x_s)= (B_1(x_s),\dots,B_L(x_s))\T$  which %we described in Appendix \ref{Bspline}.

%Note that our target policy is fixed, we will omit subscript $\pi$ in definitions of $\boldsymbol{u}, \boldsymbol{U},\widehat{\boldsymbol{U}},\boldsymbol{\Sigma},\widehat{\boldsymbol{\Sigma}},\boldsymbol{\alpha}^*,\widehat{\boldsymbol{\alpha}},\boldsymbol{\beta}^*,\widehat{\boldsymbol{\beta}}$  
%and use $q_{s,a}^{\pi}(x)$ and $\Phi_s(x)$ instead of $q_{s,a}^{\pi}(x_s)$ and $\Phi_s(x_s)$ for brevity.

%\LW{Make the notation $\boldsymbol{\xi}$ and $\boldsymbol{U}$ clearer by writing out
%$\boldsymbol{\xi}$ using the $\boldsymbol{\xi}_{G_k}(x, a)$ defined above. Similarly for $\boldsymbol{U}$.}

When estimating the $Q^{\pi}$ function and evaluating the relevance of specific features,
the group structure will be taken into account. 
Consider the short-hand notation  
$\boldsymbol{\xi}_{G_k,i,t}= \boldsymbol{\xi}_{G_k}(X_{i,t},A_{i,t})$ and $\boldsymbol{U}_{G_k,i,t}= \boldsymbol{U}_{G_k}(X_{i,t})$,
$0\leq k\leq 2d$. Define
%$\boldsymbol{\xi}_{i,t}= \boldsymbol{\xi}(X_{i,t},A_{i,t})$,  $\boldsymbol{U}_{i,t}= \boldsymbol{U}(X_{i,t})$ and  Here,
\begin{equation}
    \begin{aligned}
        \boldsymbol{\xi}_{i,t} &=\boldsymbol{\xi}(X_{i,t},A_{i,t})=(\boldsymbol{\xi}_{G_0,i,t},\boldsymbol{\xi}_{G_1,i,t},\dots,\boldsymbol{\xi}_{G_{2d},i,t})_{\oplus},
        \\
       \boldsymbol{U}_{i,t} &= \boldsymbol{U}(X_{i,t})=(\boldsymbol{U}_{G_0,i,t},\boldsymbol{U}_{G_1,i,t},\dots,\boldsymbol{U}_{G_{2d},i,t})_{\oplus}.
    \end{aligned}
\end{equation}

Note that $\boldsymbol{\xi}_{i,t}$ only depends on $X_{i,t} $ and $A_{i,t}$. It follows from (\ref{fbellman}) that
\begin{equation}\label{13}
\mathbb{E}\bigg[\boldsymbol{\xi}_{i,t}\bigg\{R_{i,t}-(\boldsymbol{\xi}_{i,t}-\gamma \boldsymbol{U}_{i,t+1})\T(\boldsymbol{\alpha}^*,\boldsymbol{\beta}^*)\bigg\} \bigg]\approx 0.
\end{equation}

A similar set of estimating equations was recently considered in \citep{shi2022statistical} for the case $d$ is small and fixed.
In our setting $d\gg \max\{n, T\}$, the above set of estimating equations is ill-posed as the number of unknown parameters may be much larger than the number of equations.

To address the challenge of high-dimensional parameter estimation and to explicitly incorporate the group-wise sparsity structure, we introduce the following group-Dantizg type 
procedure for estimating $\boldsymbol{\alpha}^*$ and $\boldsymbol{\beta}^*$. Specifically, the proposed
estimator $\widehat{\boldsymbol{\alpha}}$ and $\widehat{\boldsymbol{\beta}}$
are obtained by solving the following constrained optimization problem:
%In this section, we introduce the following regularized %estimator for system (\ref{13}):
\begin{equation}\label{dant_est}
\begin{aligned}
&\quad\quad\quad \quad\quad\quad\widehat{\boldsymbol{\alpha}},\widehat{\boldsymbol{\beta}}=\underset{\boldsymbol{\beta}}{\arg \min }\sum_{k =1}^{2d}\|\boldsymbol{\beta}_{G_k}\|_2, \;\;\;
\text{such that } \\
&\left\|\sum_{i=1}^n\sum_{t=0}^{T-1}\boldsymbol{\xi}_{G_0,i,t}\bigg\{R_{i,t}-(\boldsymbol{\xi}_{i,t}-\gamma \boldsymbol{U}_{i,t+1})\T(\boldsymbol{\alpha},\boldsymbol{\beta})_{\oplus}\bigg\}\right\|_{2}  \leqslant \frac{nT\lambda}{\sqrt{L}},\\& ~\text{ and for } 1\leq k\leq 2d,\\
&%\max_{k = 1,\dots,2d}
\left\|\sum_{i=1}^n\sum_{t=0}^{T-1}\boldsymbol{\xi}_{G_k,i,t}
\bigg\{R_{i,t}-(\boldsymbol{\xi}_{i,t}-\gamma \boldsymbol{U}_{i,t+1})\T(\boldsymbol{\alpha},\boldsymbol{\beta})_{\oplus}\bigg\}\right\|_{2} \leqslant nT\lambda. \\
\end{aligned}
\end{equation}
%\begingroup
 {
Lemma~\ref{feasiable_dantzig} in the Appendix shows that the oracle
coefficient vector
$(\boldsymbol{\alpha}^*,\boldsymbol{\beta}^*)_{\oplus}$ satisfies the
constraints with high probability. Specifically, when the empirical Bellman
score is evaluated at
$(\boldsymbol{\alpha}^*,\boldsymbol{\beta}^*)_{\oplus}$, it decomposes into
a centered stochastic fluctuation and a spline-approximation term that admits
a deterministic bound. After normalization by $nT$, the stochastic term is
controlled uniformly over the coefficient groups at the rate
$\lambda \asymp
\sqrt{L}\frac{\log(nT\vee d)}{\sqrt{nT}}$,
whereas the spline-approximation term is bounded by
$Cd_0L^{1/2-m}$. To ensure that the latter is asymptotically negligible
relative to $\lambda$, a convenient sufficient condition is $\frac{d_0\sqrt{nT}}{L^m}=o(1)$.
Under the choice $L\asymp(nT)^{1/(2m-1)}$, this condition reduces to
$d_0=o\left((nT)^{1/(4m-2)}\right)$.}

Given the estimators $\widehat{\boldsymbol{\alpha}}$ and $\widehat{\boldsymbol{\beta}}$, we estimate the $Q$-function or the value function at a given state $x$ as follows. Let $\mathbb{G}$ denote the reference distribution of $x$, and let $\mathbb{G}_s$ denote the corresponding marginal distribution of the feature $x_s$. Let $\widehat{q}_{s,a}(x_s) = \Phi_s\T(x_s) \widehat{\boldsymbol{\beta}}_{s,a}$. 
and $
\Tilde{\alpha}_a = \widehat{\alpha}_a + \sum_{s=1}^d \int_{x\in\mathcal{X}^d}\widehat{q}_{s,a}^{\pi}(x)d\mathbb{G}$. Our estimator of $q_{s,a}^{\pi}(x_s) $ is given by 
       $$ \tilde{q}_{s,a}(x_s) = \widehat{q}_{s,a}^{\pi}(x_s) - \int_{z\in\mathcal{X}}\widehat{q}_{s,a}^{\pi}(z)d\mathbb{G}_S, ~~1\leq s \leq d.
$$
Correspondingly, we estimate $ Q^{*}_{\pi}( x,a)$ by $\widehat{Q}^{\pi}(x,a) = \Tilde{\alpha}_a + \sum_{s=1}^d \tilde{q}_{s,a}^{\pi}(x_s)$, which can also be expressed as 
\begin{equation*}
    \widehat{Q}^{\pi}(x,a) 
= \Phi\T(x)(\widehat{\alpha}_a,\widehat{\boldsymbol{\beta}}_{1,a},\dots,\widehat{\boldsymbol{\beta}}_{d,a})_{\oplus}, \quad a\in \{0, 1\}.
\end{equation*}

\begin{remark}
Although \eqref{dant_est} draws on high-dimensional additive regression
\citep{yuan2006model,candes2007dantzig,pmlr-v9-liu10a}, its analysis must
accommodate Markov-dependent Bellman scores rather than independent
regression errors. The estimator is also partially penalized:
\((\alpha_0,\alpha_1)^\top\) are intrinsic model parameters and remain
unpenalized, while the spline coefficient groups are regularized.
Unlike a conventional regression intercept, these parameters are retained
explicitly rather than removed by centering. This formulation also permits
prespecified important covariates to remain unpenalized.
\end{remark}

    %Second, the constraints of group dantzig estimator in \citep{pmlr-v9-liu10a} are based on the relationship between LASSO \citep{tibshirani1996regression} and dantzig estimator  and then modify the constraints by 'mimicking' the construction of Group LASSO \citep{yuan2006model}. However, in our problem, construct the corresponding LASSO type optimization problem is impracticable. The equations system (\ref{13}) comes from the conditional expectation with respect to $X_{i,t}$ and $A_{i,t}$ in (\ref{10}). Even though $\boldsymbol{\xi}_{i,t} $ is on $ \sigma\{(X_{i,t},A_{i,t})\}$, $\boldsymbol{U}_{i,t+1}$ still depends on $X_{i,t+1}$. In other words, we have no guarantees for $\mathbb{E}_n[(\boldsymbol{\xi}_{i,t}-\gamma \boldsymbol{U}_{i,t+1})\{R_{i,t}-(\boldsymbol{\xi}_{i,t}-\gamma \boldsymbol{U}_{i,t+1})\T(\boldsymbol{\alpha}\starT,\boldsymbol{\beta}\starT)\T\} ]\leq \lambda$ which appears in Proposition 1 in \citep{pmlr-v9-liu10a}. Our proposed constraints are based on the feasibility of $(\boldsymbol{\alpha}^*,\boldsymbol{\beta}\starT)\T$. The feasibility is proved in lemma \ref{feasiable_dantzig}. 
 
    %Third, unlike symmetric covariance matrix in previous linear model, the conditional expectation in (\ref{10}) leads to a non-symmetric "covariance" matrix $\widehat{\boldsymbol{\Sigma}}$, which play the same role as the classical covariance matrix. The non-symmetric $\widehat{\boldsymbol{\Sigma}}$ will bring more technique difficulties. We should be more careful with results which only holds true for symmetric matrix (projection matrix, positive definite matrix and so on).

\begin{remark}
The formulation extends naturally to other group structures in RL.
Groups may reflect scientifically meaningful feature sets, such as genes
sharing biological functions or pathways; collections of indicators
encoding categorical variables; or learned feature representations from
neural networks. The spline coefficients associated with each state
variable are one instance of this general structure. Treating each group
as a unit can improve value estimation and interpretability by identifying
the feature groups most relevant to policy evaluation.
\end{remark}

%\subsection{Algorithm}
\begin{remark}
 We observe that the objective function in (\ref{dant_est}) is equivalent to 
$$\label{dant_est2}
\begin{aligned}
\min_{\boldsymbol{\tau},\boldsymbol{\alpha},\boldsymbol{\beta}}\sum_{k =1}^{2d}\textbf{1}\T\boldsymbol{\tau}~~~\text{such that }~~~ \|\boldsymbol{\beta}_{G_k}\|_2\leq \tau_k,
\end{aligned}
$$
%$$
%\begin{aligned}
%&\left\|\sum_{i=1}^n\sum_{t=1}\T\boldsymbol{\xi}_{G_0,i,t}\bigg\{R_{i,t}-(\boldsymbol{\xi}_{i,t}-\gamma \boldsymbol{U}_{i,t+1})\T(\boldsymbol{\alpha}\T,\boldsymbol{\beta}\T)\T\bigg\}\right\|_{2}\\&\quad\quad\quad\quad\quad\quad\quad\quad\quad\quad\quad\quad\quad\quad\quad\quad\quad\quad\quad\quad \leqslant \frac{nT\lambda}{\sqrt{L}}\\
%\end{aligned}
%$$
% and for $1\leq k\leq 2d$,
%$$
%\begin{aligned}
%&\left\|
%\sum_{i=1}^n\sum_{t=1}\T\boldsymbol{\xi}_{G_k,i,t}\bigg\{R_{i,t}-(\boldsymbol{\xi}_{i,t}-\gamma \boldsymbol{U}_{i,t+1})\T(\boldsymbol{\alpha}\T,\boldsymbol{\beta}\T)\T\bigg\}\right\|_{2}\\&\quad\quad\quad\quad\quad\quad\quad\quad\quad\quad\quad\quad\quad\quad\quad\quad\quad\quad\quad\quad  \leqslant nT\lambda. \\
%\end{aligned}
%$$
where $\boldsymbol{1} = (1,1,\dots,1)\T \in \mathbb{R}^{2d}$ and $\boldsymbol{\tau} =(\tau_1,\dots,\tau_{2d})\T \in \mathbb{R}^{2d}$. With the same constraints as in (\ref{dant_est}), this becomes a second-order cone programming (SOCP) problem \citep{lobo1998applications, alizadeh2003second}.
%which can be solved using interior-point methods.
%(e.g., the SeDuMi  \citep{sedumi} or ECOS \citep{domahidi2013ecos}).
\end{remark}

\section{FINITE-SAMPLE ERROR BOUNDS}\label{mainsection}

\subsection{Additional Technical Conditions}

The Markov chain $\{X_{0,t}\}_{t\geq 0 }$ is assumed to have a unique invariant distribution with some density function $\mu(\cdot)$. The density function $\mu$ and the initial density  (i.e., density of the initial state) $\mu_0$ are both uniformly bounded away from 0 and $\infty$. This is a standard assumption in the RL literature for studying off-policy estimation.

We next introduce the
restricted minimum eigenvalue condition.
Recall that $S_0$ and $S_1$ denote the set of active features corresponding to $a\in\{0,1\}$, respectively.
Let $\mathcal{K} = S_0\cup S_1$ and denote its cardinality $|\mathcal{K}|= d_0$. We refer to $d_0$, the total number of active features,
as the sparsity size.  Let $\mathcal{J}$ denote the index set corresponding to all active features, and let $\mathcal{J}^C$ denote its complement. Define the set
$$
 \begin{aligned}
    \mathbb{C} = \bigg\{\delta \in \mathbb{R}^{2(dL+1)} : \delta\not= \boldsymbol{0},  \sum_{G_k\subset \mathcal{J}^C, k\not=0} \|\delta_{G_k}\|_2 \leq \sum_{G_k\subset \mathcal{J}}\|\delta_{G_k} \|_2\bigg\}.
 \end{aligned}
 $$    
It can be shown that $(\widehat{\boldsymbol{\alpha}}-\boldsymbol{\alpha}^{*},
\widehat{\boldsymbol{\beta}}-\boldsymbol{\beta}^{*})_{\oplus}$
belongs to the set $\mathbb{C}$ with high probability,
see the online supplement for the proof.

\begin{definition}[Restricted minimum eigenvalue] 
The restricted minimum eigenvalue of a matrix $K$ 
with respect to the set $\mathbb{C}$ is defined as
%$\boldsymbol{\lambda}_{min}^{r}(K,\mathbb{C})$ denotes the with respect %to set $\mathbb{C}$. 
    $
 \lambda_{min}^{r}(K,\mathbb{C}) = \min_{\delta \in \mathbb{C}}\frac{\delta\T K\delta}{\|\delta\|_2^2}.
 $
\end{definition}

Let $\Lambda=\sum_{t=0}^{T-1} \mathbb{E}\left\{\boldsymbol{\xi}_{0, t} \boldsymbol{\xi}_{0, t}^{\top}- \gamma^2 \boldsymbol{u}\left(X_{0, t}, A_{0, t}\right) \boldsymbol{u}^{\top}\left(X_{0, t}, A_{0, t}\right)\right\},$ where \\ $\boldsymbol{u}(x, a)=\mathbb{E}\left\{\boldsymbol{U}\left(X_{0,1}\right) \mid X_{0,0}=x, A_{0,0}=a\right\}$. The restricted eigenvalue condition is stated as follows:
\begin{assumption}\label{a2}
 There exists some $ \bar{c}>0$ such that $\lambda_{min}^{r}(\Lambda,\mathbb{C})\geq  \bar{c}T$. 
 \end{assumption}

 \begingroup
 
\begin{remark}[Coverage interpretation of Assumption~\ref{a2}]
Assumption~\ref{a2} is a restricted eigenvalue condition analogous to
those used in high-dimensional regression
\citep[e.g.,][]{bickel2009simultaneous}. It can also be interpreted as
a feature-level coverage condition for the behavior policy. In linear
off-policy evaluation, coverage is typically characterized through the
covariance matrix of the features observed under the behavior policy
\citep{duan2020minimax,hao2021sparse}. Our condition plays a similar
role but additionally accounts for the target-policy-weighted
next-state features appearing in the Bellman equation. Specifically,
$\boldsymbol U_{0,t+1}$ averages the spline features at the next state
$X_{0,t+1}$ over the possible next actions according to the target
policy, while $\boldsymbol u(X_{0,t},A_{0,t})$ is its conditional
expectation given the current state and action.

Let $\mu_t$ denote the marginal density of $X_{0,t}$ under the behavior
policy and define
$\bar\mu_T(x)=T^{-1}\sum_{t=0}^{T-1}\mu_t(x)$. Then
\[
\frac{\Lambda}{T}
=
\int_{\mathcal X^d}
\sum_{a\in\mathcal A}
\left\{
\boldsymbol{\xi}(x,a)\boldsymbol{\xi}^{\top}(x,a)
-
\gamma^2
\boldsymbol{u}(x,a)\boldsymbol{u}^{\top}(x,a)
\right\}
b(a\mid x)\bar\mu_T(x)\,dx.
\]
Thus, Assumption~\ref{a2} is equivalent to requiring, for every
$\delta\in\mathbb C$,
\[
\mathbb E_{\bar\mu_T,b}
\left[
\{\delta^{\top}\boldsymbol{\xi}(X,A)\}^2
-
\gamma^2
\{\delta^{\top}\boldsymbol{u}(X,A)\}^2
\right]
\geq
\bar c\|\delta\|_2^2.
\]
The behavior policy must therefore generate sufficient variation in
every relevant sparse spline direction after accounting for the
discounted target-policy continuation features. The condition may fail
if the behavior policy rarely selects a relevant action or visits
regions where a relevant basis function varies, or if the current and
continuation features are nearly collinear along a sparse direction.
No separate pointwise overlap or target-policy eigenvalue condition is
imposed; the required coverage is encoded through $\Lambda$ in the
feature directions entering the Bellman equation.

The factor $T$ in Assumption~\ref{a2} reflects that $\Lambda$ sums the
population matrices over $T$ time points:
$
\lambda_{\min}^{r}
\left(
\frac{\Lambda}{T},\mathbb C
\right)
\geq \bar c.
$
The normalized matrix $\Lambda/T$ may nevertheless vary with $T$
through $\bar\mu_T$. If $\bar\mu_T$ converges and the limiting Bellman
matrix has a restricted eigenvalue bounded away from zero, the
assumption holds for all sufficiently large $T$. However, increasing
$T$ cannot remedy a lack of coverage under the long-run behavior
distribution.

Unlike fixed-dimensional analyses, which typically impose an ordinary
minimum eigenvalue condition, we require the bound only over the sparse
cone $\mathbb C$, since the spline-expanded dimension may exceed $nT$.

\end{remark}
\endgroup

Our theory in Section 4.2 implies that the value function of interest can be consistently estimated when either $n$
 or $T$ is large.
    Assumption \ref{a3} below allows us to evaluate the accuracy of $\|\widehat{\boldsymbol{\Sigma}}-\boldsymbol{\Sigma} \|_{max}$ when $T$ is large.  However, it is not needed when $n$ is large.

\begin{assumption}\label{a3}
  The Markov chain $\{X_{0,t}\}_{t\geq 0 }$ is exponentially $\beta$ mixing.  
\end{assumption}

 \begin{remark} 
   When $X_{0,t} $ is stationary, Assumption \ref{a3} can be reduced to the assumption that the Markov chain $X_{0,t}$ is geometrically ergodic \citep{bradley2005basic}. 
   %and is a standard assumption in the RL literature.
   %The proof comes from lemma1 in \citep{meitz2021subgeometric} directly.(\textcolor{red}{ check the proof of . Do we really need $\mu_0 = \mu$?})
  {
 %The mixing condition has a separate role. 
 Assumption~\ref{a3}
does not imply the population coverage condition; rather, it controls the concentration of empirical
Bellman matrices around their population counterparts. Under
exponential $\beta$-mixing,
$\beta(k)\leq C_\beta\rho^k$ for some $\rho\in(0,1)$, the proof uses
blocks of length of order
$\log(nT\vee d)/|\log\rho|$ to make the coupling error negligible.
Thus, observations from a long trajectory contribute information,
although temporal dependence reduces the effective number of
approximately independent observations. 
%The constants deteriorate as
%$\rho$ approaches one. 
}
%Mixing is needed for our guarantees when
%information is obtained by letting $T$ increase, but it is not needed
%for cross-trajectory concentration when $T$ is fixed and
%$n\rightarrow\infty$.}
\end{remark}

\subsection{Finite-sample Error Bound for the Estimated Coefficients } 
Theorem \ref{main1}  below provides high-probability finite error bounds for the proposed estimator $(\widehat{\boldsymbol{\alpha}},\widehat{\boldsymbol{\beta}})_{\oplus}$. 
Let  $\widehat{\boldsymbol{\kappa}} = (\widehat{\boldsymbol{\alpha}}-\boldsymbol{\alpha}^{*},
\widehat{\boldsymbol{\beta}}-\boldsymbol{\beta}^{*})_{\oplus}$.
\begin{theorem}\label{main1} Under Assumption \ref{sparseQ}-\ref{a3}, we have 
    \begin{equation}
        ||{\widehat{\boldsymbol{\kappa}} }||_{1} \lesssim \frac{\log(nT \vee d)}{\sqrt{nT}}*d_0L,
        \end{equation}
        and \begin{equation}||\widehat{\boldsymbol{\kappa}}||_{2} \lesssim  \frac{\log(nT \vee d)}{\sqrt{nT}}*\sqrt{d_0L},
    \end{equation}
    with probability at least $1- \exp\{-c\log(nT \vee d)\}$ for some constant $c>0$.
\end{theorem}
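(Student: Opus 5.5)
The plan is the standard Dantzig-selector argument, adapted to the group structure and to the non-symmetric Bellman matrix. Write $\widehat{\boldsymbol{\Sigma}} = (nT)^{-1}\sum_{i,t}\boldsymbol{\xi}_{i,t}(\boldsymbol{\xi}_{i,t}-\gamma\boldsymbol{U}_{i,t+1})^\top$ and $\boldsymbol{\Sigma}$ for its population analogue. Let $\boldsymbol{g}(\boldsymbol{\theta})$ denote the normalized empirical score $(nT)^{-1}\sum_{i,t}\boldsymbol{\xi}_{i,t}\{R_{i,t}-(\boldsymbol{\xi}_{i,t}-\gamma\boldsymbol{U}_{i,t+1})^\top\boldsymbol{\theta}\}$, so that $\boldsymbol{g}(\widehat{\boldsymbol\theta})-\boldsymbol{g}(\boldsymbol\theta^*)=-\widehat{\boldsymbol{\Sigma}}\widehat{\boldsymbol\kappa}$.

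\textbf{Step 1 (feasibility and the cone).} By Lemma~\ref{feasiable_dantzig}, with probability at least $1-\exp\{-c\log(nT\vee d)\}$, the oracle $(\boldsymbol\alpha^*,\boldsymbol\beta^*)_\oplus$ satisfies all constraints in \eqref{dant_est} when $\lambda\asymp\sqrt{L}\log(nT\vee d)/\sqrt{nT}$. On this event, optimality of $\widehat{\boldsymbol\beta}$ gives
\[
\sum_{k\ge1}\|\widehat{\boldsymbol\beta}_{G_k}\|_2\le\sum_{k\ge1}\|\boldsymbol\beta^*_{G_k}\|_2 .
\]
The group triangle inequality on $\mathcal J$ and $\mathcal J^C$ (where $\boldsymbol\beta^*_{G_k}=0$) then yields $\widehat{\boldsymbol\kappa}\in\mathbb C$; the unpenalized $G_0$ block is placed on the $\mathcal J$ side, as in the definition of $\mathbb C$. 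Consequently
\[
\sum_{k}\|\widehat{\boldsymbol\kappa}_{G_k}\|_2\le 2\sum_{G_k\subset\mathcal J}\|\widehat{\boldsymbol\kappa}_{G_k}\|_2\le 2\sqrt{2d_0+1}\,\|\widehat{\boldsymbol\kappa}\|_2 .
\]
Since $\widehat{\boldsymbol\theta}$ and $\boldsymbol\theta^*$ are both feasible, the triangle inequality gives $\|(\widehat{\boldsymbol{\Sigma}}\widehat{\boldsymbol\kappa})_{G_k}\|_2\le 2\lambda$ for every $k\ge1$, and $\le 2\lambda/\sqrt L$ for $k=0$.

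\textbf{Step 2 (upper bound on the quadratic form).} By group-wise Cauchy--Schwarz,
\[
|\widehat{\boldsymbol\kappa}^\top\widehat{\boldsymbol{\Sigma}}\widehat{\boldsymbol\kappa}|\le\sum_k\|\widehat{\boldsymbol\kappa}_{G_k}\|_2\,\|(\widehat{\boldsymbol{\Sigma}}\widehat{\boldsymbol\kappa})_{G_k}\|_2\le 2\lambda\sum_k\|\widehat{\boldsymbol\kappa}_{G_k}\|_2\lesssim\lambda\sqrt{d_0}\,\|\widehat{\boldsymbol\kappa}\|_2 .
\]

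\textbf{Step 3 (lower bound via Assumption~\ref{a2}).} I would relate $\boldsymbol\Sigma$ to $\Lambda/T$ through the standard inequality used in \citet{shi2022statistical}. Since $\mathbb E[\boldsymbol U_{0,t+1}\mid X_{0,t},A_{0,t}]=\boldsymbol u$, we have $\delta^\top\boldsymbol\Sigma\delta=T^{-1}\sum_t\mathbb E[(\delta^\top\boldsymbol\xi)^2-\gamma(\delta^\top\boldsymbol\xi)(\delta^\top\boldsymbol u)]$. Using $ab\le (a^2+b^2)/2$, this gives $\delta^\top\boldsymbol\Sigma\delta\ge (2T)^{-1}\delta^\top\Lambda\delta+$ a nonnegative term proportional to $(1-\gamma)$, so that $\delta^\top\boldsymbol\Sigma\delta\ge (\bar c/2)\|\delta\|_2^2$ on $\mathbb C$.

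\textbf{Step 4 (transfer to the empirical matrix) --- the main obstacle.} For $\delta\in\mathbb C$ we have
\[
|\delta^\top(\widehat{\boldsymbol\Sigma}-\boldsymbol\Sigma)\delta|\le\|\widehat{\boldsymbol\Sigma}-\boldsymbol\Sigma\|_{\max}\|\delta\|_1^2\le\|\widehat{\boldsymbol\Sigma}-\boldsymbol\Sigma\|_{\max}\,L\,(\textstyle\sum_k\|\delta_{G_k}\|_2)^2\lesssim \|\widehat{\boldsymbol\Sigma}-\boldsymbol\Sigma\|_{\max}\,d_0L\,\|\delta\|_2^2 .
\]
I therefore need $\|\widehat{\boldsymbol\Sigma}-\boldsymbol\Sigma\|_{\max}\lesssim\log(nT\vee d)/\sqrt{nT}$ with the stated probability, uniformly over the $O(d^2L^2)$ entries. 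When $n$ is large this follows from Bernstein's inequality across independent trajectories, since the B-spline entries are bounded, and a union bound. When $T$ is large it requires Assumption~\ref{a3}: I would split each trajectory into blocks of length $\asymp\log(nT\vee d)$, couple them to independent blocks via Berbee's lemma at cost $nT\rho^{\text{block}}$, and apply Bernstein to the blocks. A refinement is needed here: B-spline entries have variance $O(1/L)$, which offsets the factor $L$. Combining this with the growth condition $d_0L\log(nT\vee d)/\sqrt{nT}=o(1)$ gives $\widehat{\boldsymbol\kappa}^\top\widehat{\boldsymbol\Sigma}\widehat{\boldsymbol\kappa}\ge(\bar c/4)\|\widehat{\boldsymbol\kappa}\|_2^2$.

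\textbf{Step 5 (conclusion).} Combining Steps 2 and 4 gives $\|\widehat{\boldsymbol\kappa}\|_2\lesssim\lambda\sqrt{d_0}=\sqrt{d_0L}\log(nT\vee d)/\sqrt{nT}$. For the $\ell_1$ bound, use $\|\widehat{\boldsymbol\kappa}\|_1\le\sqrt L\sum_k\|\widehat{\boldsymbol\kappa}_{G_k}\|_2\lesssim\sqrt{Ld_0}\,\|\widehat{\boldsymbol\kappa}\|_2$, which gives the stated $d_0L\log(nT\vee d)/\sqrt{nT}$ rate. The probability bound follows from a union bound over the feasibility event and the concentration event.
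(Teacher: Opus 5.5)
Your Steps 1, 2, 3 and 5 match the paper's proof of Theorem~\ref{main1}. The matching pieces are:
\begin{itemize}
\item the cone inequality from optimality and Lemma~\ref{feasiable_dantzig};
\item the bound $\|[\widehat{\boldsymbol\Sigma}\widehat{\boldsymbol\kappa}]_{G_k}\|_2\le 2\lambda$ from double feasibility;
\item the group-wise Cauchy--Schwarz bound $\lesssim\lambda\sqrt{d_0}\|\widehat{\boldsymbol\kappa}\|_2$;
\item the population curvature bound of Lemma~\ref{eigen_sigma};
\item the final $\ell_1$ conversion.
\end{itemize}
Step 3 has one small slip. With the unweighted inequality $ab\le(a^2+b^2)/2$, the leftover term is $\frac{1-\gamma}{2T}\sum_t\mathbb{E}\{(\delta^\top\boldsymbol\xi)^2-\gamma(\delta^\top\boldsymbol u)^2\}$. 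Assumption~\ref{a2} does not make this nonnegative, since it only controls the version with $\gamma^2$. Use $\gamma ab\le\frac12(a^2+\gamma^2b^2)$ instead. This gives exactly $\delta^\top\boldsymbol\Sigma\delta\ge(2T)^{-1}\delta^\top\Lambda\delta\ge(\bar c/2)\|\delta\|_2^2$, equivalent to the paper's Cauchy--Schwarz argument.

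The genuine gap is in Step 4. The rate you need, $\|\widehat{\boldsymbol\Sigma}-\boldsymbol\Sigma\|_{\max}\lesssim\log(nT\vee d)/\sqrt{nT}$, is not attainable here. The variance heuristic behind it is wrong for the paper's normalization $\Phi_k=\sqrt L\,\boldsymbol b(x_k)$ (Remark~\ref{basis_bound}).
\begin{itemize}
\item The $\boldsymbol\xi\boldsymbol\xi^\top$ part of a diagonal entry of $\widehat{\boldsymbol\Sigma}$ averages terms $L\,b_j(X_{i,t,s})^2\,\mathbb{I}(A_{i,t}=0)$. Each is bounded by $L$ and has mean of order one.
\item Its second moment is of order $L^2\mathbb{E}\{b_j(x_s)^4\}\asymp L$, because $b_j$ is supported on an interval of length $\asymp L^{-1}$.
\item So the variance is of order $L$, not $L^{-1}$. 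Generically the entry fluctuates at scale $\sqrt{L/(nT)}$, which exceeds your target by the polynomially diverging factor $\sqrt L/\log(nT\vee d)$.
\end{itemize}
Switching to the unscaled basis $\boldsymbol b$ does not help. The variance becomes $O(L^{-1})$, but by Lemma~\ref{Bs_bound_uni} the Gram eigenvalues drop to order $L^{-1}$, so Assumption~\ref{a2} cannot hold with a fixed $\bar c$. The ratio of perturbation to curvature does not depend on the normalization.

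Consequently, your growth condition $d_0L\log(nT\vee d)/\sqrt{nT}=o(1)$ does not make $d_0L\|\widehat{\boldsymbol\Sigma}-\boldsymbol\Sigma\|_{\max}$ small. Even a sharp Bernstein bound would require $d_0L^{3/2}\sqrt{\log(nT\vee d)/(nT)}=o(1)$.

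The paper instead uses Lemma~\ref{Asy_Sigma_infty}: $\|\widehat{\boldsymbol\Sigma}-\boldsymbol\Sigma\|_{\max}\lesssim L\log(nT\vee d)\sqrt{\log(nT\vee d)/(nT)}$. This comes from the crude bound $|\Theta_{i,t}|\le 2L$, two martingale-difference pieces (action and next state), and Berbee coupling for the state part. It then imposes condition \eqref{constraint1}, which amounts to $d_0L^2\{\log(nT\vee d)\}^{3/2}/\sqrt{nT}=o(1)$. With $L\asymp(nT)^{1/(2m-1)}$ one has $L^2/\sqrt{nT}=(nT)^{-(2m-5)/(4m-2)}$, so this is exactly where $m\ge3$ is needed.

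To repair Step 4, replace your claimed rate by Lemma~\ref{Asy_Sigma_infty} and your growth condition by \eqref{constraint1}. Step 5 then goes through unchanged.
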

\begingroup

The ambient dimension $d$ enters the
finite-sample error bounds only through the logarithmic factor
$\log(nT\vee d)$. To illustrate this dependence, suppose $d\asymp(nT)^\kappa$ for some fixed
$\kappa>0$. 
%Then $\log(nT\vee d)\asymp\log(nT)$.
Under
$L\asymp(nT)^{1/(2m-1)}$, 
$
\|\widehat{\boldsymbol{\kappa}}\|_1
\lesssim
d_0\log(nT)(nT)^{-\frac{2m-3}{4m-2}}$.
Consequently, if
$d_0=o\{(nT)^{1/(4m-2)}\}$, then
$
\|\widehat{\boldsymbol{\kappa}}\|_1
=
o\left\{
\log(nT)(nT)^{-\frac{m-2}{2m-1}}
\right\}
=
o(1)$
for $m\geq3$. Similarly,
$
\|\widehat{\boldsymbol{\kappa}}\|_2
\lesssim
\sqrt{d_0}\log(nT)(nT)^{-\frac{m-1}{2m-1}}$,
and $\ell_2$ consistency requires only
$
\sqrt{d_0}\log(nT)
=
o\left\{
(nT)^{\frac{m-1}{2m-1}}
\right\}$. The estimator thus remains consistent when the ambient dimension grows at
any fixed polynomial rate, provided that the sparsity size satisfies the
stated growth conditions.
The polynomial-growth regime is used only to illustrate the dimensional
dependence; the theorem applies to any growth rate of $d$ satisfying its
assumptions and yielding vanishing upper bounds. When $d$ and $d_0$ are
fixed, a spectral-norm argument replaces the high-dimensional max-norm
analysis and relaxes Assumption~\ref{assumption_smooth} to $\nu\geq2$,
or equivalently, $m=\lfloor\nu\rfloor\geq2$.

\endgroup

\begingroup
 
\begin{remark}[Approximate group sparsity]
The exact group-sparsity assumption can be relaxed by allowing the
coefficients outside a reference set $\mathcal S$ to be nonzero, provided
that their aggregate group norm
$
R_{\mathcal S}
:=
\sum_{k\notin\mathcal S}
\|\boldsymbol{\beta}_{G_k}^*\|_2
$
is sufficiently small. However, the condition
$R_{\mathcal S}=o(1)$ alone is generally insufficient to retain the
convergence rate obtained under exact sparsity. In particular, the
approximation tail contributes additional terms to the coefficient error,
and translating this error into value-function error introduces an
additional factor of $\sqrt{L}$. Consequently, for fixed $d_0$ and
$m\geq3$, these additional terms are of no larger order than the
exact-sparsity error if, up to logarithmic factors,
$
R_{\mathcal S}
=
O\left(
(nT)^{-\frac{m-1}{2m-1}}
\right)$ and
$L\asymp(nT)^{1/(2m-1)}$.
Thus, approximate sparsity can preserve the exact-sparsity convergence rate,
but doing so requires a quantitative decay rate for the coefficient tail
that is substantially stronger than $R_{\mathcal S}=o(1)$.
\end{remark}
\endgroup

\subsection{Finite-sample Error Bound of Value Function}
Given the target policy $\pi$, 
we estimate the expected cumulative discounted reward $V^{\pi}(x)$ by  $\widehat{V}^{\pi}(x) = \mathbb{E}^{\pi}\{\Phi(x)^T(\widehat{\alpha}_a,\widehat{\boldsymbol{\beta}}_{1,a},\dots,\widehat{\boldsymbol{\beta}}_{d,a})_{\oplus}\}.$
The next theorem provides a finite-sample error bound for $\widehat{V}^{\pi}(x)$.

\begin{theorem}\label{main2}
    Assume the conditions in Theorem  \ref{main1} are satisfied. With probability at least $1- \exp\{-c\log(nT \vee d)\}$, 
    we have
\begin{equation}
        \sup_{x\in \mathcal{X}^d }|\widehat{V}^{\pi}(x)-V^{\pi}(x)|\lesssim d_0L\frac{\log(nT \vee d)}{\sqrt{nT}}.
\end{equation}
If the density of distribution $\mathbb{G}$ is upper bounded from infinity, then for a deterministic policy $\pi$, we have 
\begin{equation}
        |\mathbb{E}_{x\sim \mathbb{G}}\widehat{V}^{\pi}(x)-\mathbb{E}_{x\sim \mathbb{G}}V^{\pi}(x)|\lesssim d_0\sqrt{L}\frac{\log(nT \vee d)}{\sqrt{nT}}.
\end{equation}
\end{theorem}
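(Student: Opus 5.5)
The plan is to write the error as a spline-approximation part plus a coefficient-estimation part, and to bound the latter using Theorem~\ref{main1} together with standard properties of B-splines. Fix $x$ and write
\[
\widehat V^{\pi}(x)-V^{\pi}(x)=\sum_{a}\pi(a\mid x)\bigl\{\widehat Q^{\pi}(x,a)-Q^{*}_{\pi}(x,a)\bigr\}+\sum_a\pi(a\mid x)\bigl\{Q^{*}_{\pi}(x,a)-Q^{\pi}(x,a)\bigr\}.
\]
By \eqref{funcional_est}, the second term is at most $Cd_0L^{-m}$ uniformly in $x$. Under $L\asymp(nT)^{1/(2m-1)}$ and $m\ge 3$, this is of smaller order than $d_0\sqrt L\,\log(nT\vee d)/\sqrt{nT}$: indeed $L^{-m}\sqrt{nT}/\sqrt L=(nT)^{1/2-(m+1/2)/(2m-1)}\to0$. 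So it can be absorbed into both displayed bounds.

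\textbf{Uniform bound.} We have $\widehat Q^{\pi}(x,a)-Q^*_\pi(x,a)=\Phi(x)^\top\widehat{\boldsymbol\kappa}_a$, where $\widehat{\boldsymbol\kappa}_a$ collects the intercept and spline blocks for action $a$. B-spline basis functions are nonnegative, sum to one, and are bounded by $1$, so $\|\Phi(x)\|_\infty\le 1$. Hence
\[
\sup_x|\Phi(x)^\top\widehat{\boldsymbol\kappa}_a|\le\|\widehat{\boldsymbol\kappa}\|_1 .
\]
Averaging over $\pi(\cdot\mid x)$ and invoking the $\ell_1$ bound of Theorem~\ref{main1} on its high-probability event gives the first claim, $d_0L\log(nT\vee d)/\sqrt{nT}$.

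\textbf{Integrated bound.} For deterministic $\pi$, write $a=\pi(x)$. Then $\mathbb E_{\mathbb G}\{\widehat V^\pi-V^\pi\}$ equals $\mathbb E_{\mathbb G}[\Phi(X)^\top\widehat{\boldsymbol\kappa}_{\pi(X)}]$ up to the negligible approximation term. Decompose $\widehat{\boldsymbol\kappa}_a$ into the intercept $\widehat\alpha_a-\alpha_a^*$ and the groups $\delta_{s,a}=\widehat{\boldsymbol\beta}_{s,a}-\boldsymbol\beta^*_{s,a}$. Since $\mathbb I\{\pi(x)=a\}\le1$ and the density of $\mathbb G$ is bounded, each group satisfies
\[
\bigl|\mathbb E_{\mathbb G}[\Phi_s(X_s)^\top\delta_{s,a}\,\mathbb I\{\pi(X)=a\}]\bigr|\le \mathbb E|\Phi_s(X_s)^\top\delta_{s,a}|\le C\|\delta_{s,a}\|_1\max_l\int\phi_{s,l}\le C\|\delta_{s,a}\|_1/L .
\]
Here I use that each B-spline has support of length $O(1/L)$ and integral $O(1/L)$. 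Summing over $s$ and $a$ gives a bound $C\|\widehat{\boldsymbol\kappa}\|_1/L+|\widehat\alpha-\alpha^*|$, which is of order $d_0\log(nT\vee d)/\sqrt{nT}$ plus the intercept error. This would already be better than claimed.

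The main obstacle is the intercept. It is unpenalized, and Theorem~\ref{main1} only controls it through the $\ell_2$ bound $\sqrt{d_0L}\log(nT\vee d)/\sqrt{nT}\le d_0\sqrt L\log(nT\vee d)/\sqrt{nT}$. Combining the two pieces gives exactly the stated rate $d_0\sqrt L\log(nT\vee d)/\sqrt{nT}$. If the $L^{-1}$ integral argument fails, I would fall back on a cruder route: use Cauchy–Schwarz with $\mathbb E\|\Phi_s(X_s)\|_2^2\lesssim 1/L$ per group, then sum over at most $O(d_0)$ effective groups via the cone $\mathbb C$. This again yields at most the stated order. The deterministic-policy restriction is used only so that the indicator weights do not destroy the banded, localized structure of the basis functions.
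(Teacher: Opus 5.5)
\textbf{There is a gap in the uniform bound, and it comes from the basis normalization.}

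In the paper, $\Phi_s$ is the rescaled basis $\sqrt L\,b_s$ (Remark~\ref{basis_bound}). The main text already signals this through $\lambda\asymp\sqrt L\log(nT\vee d)/\sqrt{nT}$, the approximation term $d_0L^{1/2-m}$, and the threshold $h=\sqrt{v/(8c_m^2L)}$. So $\sup_x\|\Phi(x)\|_\infty\le\sqrt L$, not $1$.

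You cannot switch to the unnormalized $b_s$ and keep the rates of Theorem~\ref{main1}. The matrix $\int b_sb_s^\top$ has eigenvalues of order $L^{-1}$. Under the paper's bounded-density conditions, a direction supported on a single active group lies in $\mathbb C$, and along it Assumption~\ref{a2} could only hold with $\bar c\lesssim L^{-1}$.

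In the correct scaling, your H\"older step gives $\sup_x|\Phi(x)^\top\widehat{\boldsymbol{\kappa}}_a|\le\sqrt L\|\widehat{\boldsymbol{\kappa}}\|_1\lesssim d_0L^{3/2}\log(nT\vee d)/\sqrt{nT}$. That is a factor $\sqrt L$ too large.

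The missing idea is to exploit the local support of the B-splines group by group. At each $x$ only $m$ entries of $\Phi_s(x_s)$ are nonzero, so $|\Phi_s(x_s)^\top\widehat{\boldsymbol{\kappa}}_{G_k}|\le\|\Phi_s(x_s)\|_2\|\widehat{\boldsymbol{\kappa}}_{G_k}\|_2\le c_m\sqrt L\|\widehat{\boldsymbol{\kappa}}_{G_k}\|_2$. Summing over groups and applying the cone inequality \eqref{56}, $\sum_k\|\widehat{\boldsymbol{\kappa}}_{G_k}\|_2\le 2\sqrt{2d_0+1}\|\widehat{\boldsymbol{\kappa}}\|_2$, then the $\ell_2$ rate of Theorem~\ref{main1} gives a bound of order $\sqrt L\cdot\sqrt{d_0}\cdot\sqrt{d_0L}\log(nT\vee d)/\sqrt{nT}\asymp d_0L\log(nT\vee d)/\sqrt{nT}$. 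This is the paper's argument. The $\ell_1$ rate in Theorem~\ref{main1} is itself derived from this group sum via $\|\cdot\|_1\le\sqrt L\|\cdot\|_2$ within each group. That conversion is exactly the $\sqrt L$ your $\ell_\infty/\ell_1$ pairing throws away.

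\textbf{Your integrated argument survives once the scaling is fixed.} With $\int B_{s,l}=\sqrt L\int b_{s,l}\lesssim L^{-1/2}$, the spline groups contribute at most $C\|\widehat{\boldsymbol{\kappa}}\|_1/\sqrt L\lesssim d_0\sqrt L\log(nT\vee d)/\sqrt{nT}$. The intercept contributes only $\|\widehat{\boldsymbol{\kappa}}\|_2\lesssim\sqrt{d_0L}\log(nT\vee d)/\sqrt{nT}$. So the bottleneck is the spline part, not the intercept, and your ``better than claimed'' rate was a symptom of the normalization slip.

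This $\ell_1$-integral route is a legitimate alternative to the paper's. The paper uses Jensen and the Gram bound $\lambda_{\max}\{\int\Phi_k\Phi_k^\top\}\le c^*$ to control each group by $\|\widehat{\boldsymbol{\kappa}}_{G_k}\|_2$, then invokes \eqref{56}. Yours needs only a bounded marginal density and $0\le\pi(a\mid x)\le 1$, so it never uses that $\pi$ is deterministic.

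Your fallback is wrong as stated. $\mathbb E\|\Phi_s(X_s)\|_2^2$ is of order $1$ for $b_s$ and of order $L$ for $\sqrt L\,b_s$, never $L^{-1}$. Bounding $\mathbb E|\Phi_s^\top\delta|$ by $\mathbb E\|\Phi_s\|_2\|\delta\|_2$ would only reproduce the uniform rate. The quantity that matters is the largest eigenvalue of the Gram matrix, which is what the paper uses.
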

 {The proof of the theorem relies on the following decomposition
\[
\big|
\widehat V^\pi(x)-V^\pi(x)
\big|
\leq
\big|
\widehat V^\pi(x)-V_\pi^*(x)
\big|
+
\big|
V_\pi^*(x)-V^\pi(x)
\big|,
\]
where $V_{\pi}^{*}(x)=
\sum_{a\in\mathcal A}
\left\{\alpha_a^*+
\sum_{k=1}^d\Phi_k^\top(x_k)\boldsymbol{\beta}_{k,a}^*\right\}\pi(a|x)$, 
and the two terms on the right-hand side represent the estimation error
and the spline approximation error, respectively.
Standard B-spline
approximation results imply that
$\sup_{x\in\mathcal X^d}
\big|V^\pi(x)-V_\pi^*(x)\big|
\lesssim d_0L^{-m}$.
Theorem~\ref{main1} is then used to control the estimation error. In
particular, it yields the uniform bound
$\sup_{x\in\mathcal X^d}\big|
\widehat V^\pi(x)-V_\pi^*(x)
\big|\lesssim
d_0L\frac{\log(nT\vee d)}{\sqrt{nT}}$.
For the integrated value error, when $\mathbb G$ has a uniformly bounded
density and $\pi$ is deterministic, the integral properties of the
B-spline basis yield the sharper bound
$
\big|
\mathbb E_{x\sim\mathbb G}\widehat V^\pi(x)
-
\mathbb E_{x\sim\mathbb G}V_\pi^*(x)
\big|
\lesssim
d_0\sqrt L\frac{\log(nT\vee d)}{\sqrt{nT}}.
$
Therefore,
\begin{equation}
\big|
\mathbb{E}_{x\sim\mathbb{G}}\widehat{V}^{\pi}(x)
-
\mathbb{E}_{x\sim\mathbb{G}}V^{\pi}(x)
\big|
\lesssim
d_0\sqrt{L}\frac{\log(nT\vee d)}{\sqrt{nT}}
+
d_0L^{-m}.
\end{equation}
For $L\asymp(nT)^{1/(2m-1)}$, the approximation error is
asymptotically negligible relative to the integrated estimation error
since
$
\frac{d_0L^{-m}}
{d_0\sqrt L\log(nT\vee d)/\sqrt{nT}}
%\asymp
%\frac{(nT)^{-1/(2m-1)}}{\log(nT\vee d)}
=
o(1).
$
The same conclusion holds for the uniform error bound,
whose estimation component is larger by a factor of $\sqrt L$. 
%Indeed,
%\[
%\frac{d_0L^{-m}}
%{d_0L\log(nT\vee d)/\sqrt{nT}}
%\asymp
%\frac{(nT)^{-3/\{2(2m-1)\}}}
%{\log(nT\vee d)}
%=
%o(1).
%\]
Thus, the spline approximation error can be absorbed into the respective
estimation errors, yielding the simplified bounds stated in
Theorem~\ref{main2}.}

%Under the stated choice $L\asymp(nT)^{1/(2m-1)}$, the functional
%approximation error is of smaller order than the estimation error, since
%\[
%\frac{d_0L^{-m}}
%{d_0\sqrt{L}\log(nT\vee d)/\sqrt{nT}}
%\asymp
%\frac{(nT)^{-1/(2m-1)}}{\log(nT\vee d)}
%=o(1).
%\]
%The corresponding requirement for the uniform error bound is weaker, since
%its estimation-error term
%$d_0L\log(nT\vee d)/\sqrt{nT}$ is larger by a factor of $\sqrt{L}$. Therefore, the functional approximation error can be absorbed into the
%estimation error, yielding the simplified bound stated in
%Theorem~\ref{main2}. 

\begin{remark}
 {Our result differs from the high-probability guarantee of
\citep[Theorem~4.8]{hao2021sparse} in how the trajectory length affects the confidence
level. In their notation, the data consist of $N=KL$ observations from
$K$ independent trajectories, each of length $L$. Their sample-size
condition requires
$N \gtrsim\frac{L s^2 \log(d^2/\delta)}
{C_{\min}^2(\Sigma,s)}
+
\frac{\gamma^2 Ls\log(s/\delta)}
{(1-\gamma)^2}$.
Because $N=KL$, this condition is equivalent to a lower bound on the
number of independent trajectories,
$
K
\gtrsim
\frac{s^2 \log(d^2/\delta)}
{C_{\min}^2(\Sigma,s)}
+
\frac{\gamma^2 s\log(s/\delta)}
{(1-\gamma)^2}.
$
Consequently, under their stated sufficient conditions, increasing the
trajectory length $L$ while holding $K$ fixed does not permit the failure
probability $\delta$ to converge to zero, even though the error bound on
the corresponding good event decreases with the total sample size. In
contrast, 
%the event in Theorem~\ref{main2} has probability at least
%$1-\exp\{-c\log(nT\vee d)\}$, which converges to one when $n$ is fixed
%and $T\to\infty$. Under the stated growth conditions, the error bound on
%this event also converges to zero. 
our result establishes
long-trajectory consistency both in the magnitude of the estimation
error and in the confidence level of the guarantee.}
\end{remark}

\subsection{Feature Screening}
Next, we propose a screening procedure to assess feature relevance. The procedure explicitly incorporates the group structure: if a feature is irrelevant, the entire group of coefficients associated with its basis functions is expected to be screened out, leading to more interpretable results. We define the set of selected features as 
\begin{equation}\label{var_selection}
    \widehat{\mathcal{K}} = \{1\leq s \leq d: \|\widehat{\boldsymbol{\beta}}_{G_s}\|_2 > h\quad \text{or}\quad \|\widehat{\boldsymbol{\beta}}_{G_{s+d}}\|_2 > h\},
\end{equation}
where $h$ is a predefined threshold. 
The effectiveness of this screening step relies on the marginal signal of active components not being dominated by estimation error, which vanishes under suitable conditions implied by our earlier error bounds. The following theorem provides a theoretical guarantee for the proposed screening procedure.

\begin{theorem}\label{main3}
    Suppose the conditions in Theorem (\ref{main1}) are satisfied. If there exists a constant $v$ such that $\max_{a\in \mathcal{A}}\mathbb{E}_{x\sim \mathbb{G}}(q_{s,a}^{\pi2}(x)) \geq v$ for all $s \in \mathcal{K}$, then setting $h = \sqrt{v/(8c_m^2L)}$, we have
$P(\mathcal{K}\subseteq \widehat{\mathcal{K}}) \geq 1- \exp\{-c\log(nT \vee d)\}$.
    %\end{equation}
\end{theorem}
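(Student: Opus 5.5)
The plan is to reduce the screening guarantee to the $\ell_2$ coefficient bound of Theorem~\ref{main1}. We work on the event $\mathcal{E}$ of Theorem~\ref{main1}, which has probability at least $1-\exp\{-c\log(nT\vee d)\}$, and show that on $\mathcal{E}$, for large enough $nT$, every $s\in\mathcal K$ satisfies $\max_a\|\widehat{\boldsymbol\beta}_{s,a}\|_2>h$. Fix $s\in\mathcal K$ and let $a$ attain $\mathbb E_{\mathbb G}(q^{\pi 2}_{s,a})\ge v$. The triangle inequality gives
\[
\|\widehat{\boldsymbol\beta}_{s,a}\|_2\ \ge\ \|\boldsymbol\beta^*_{s,a}\|_2-\|\widehat{\boldsymbol\kappa}\|_2 .
\]
So it suffices to prove two things: a lower bound $\|\boldsymbol\beta^*_{s,a}\|_2\ge 2h=\sqrt{v/(2c_m^2L)}$, and the estimate $\|\widehat{\boldsymbol\kappa}\|_2\le h$ on $\mathcal E$.

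\textbf{Lower bound on the true coefficients.} The spline approximation gives $\|q^\pi_{s,a}-\Phi_s^\top\boldsymbol\beta^*_{s,a}\|_\infty\le CL^{-m}$. Hence
\[
\big\{\mathbb E_{\mathbb G_s}(\Phi_s^\top\boldsymbol\beta^*_{s,a})^2\big\}^{1/2}\ \ge\ \sqrt v-CL^{-m}\ \ge\ \sqrt{v}/\sqrt2
\]
for $L$ large. Next we use the B-spline property from Appendix S2: B-splines are nonnegative, sum to one, and each point lies in the support of at most $m$ of them. Therefore $(\Phi_s^\top\boldsymbol\beta)^2\le(\sum_l\phi_l|\beta_l|)^2\le\sum_l\phi_l\beta_l^2$ by Jensen's inequality. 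Integrating against a density bounded by a constant, and using $\int\phi_l\lesssim 1/L$, gives
\[
\mathbb E_{\mathbb G_s}(\Phi_s^\top\boldsymbol\beta)^2\ \le\ c_m^2\|\boldsymbol\beta\|_2^2/L ,
\]
where $c_m$ is the constant in the statement. Combining the two displays yields $\|\boldsymbol\beta^*_{s,a}\|_2^2\ge vL/(2c_m^2 L^{\,2})\cdot L=v/(2c_m^2L)$, which is exactly $(2h)^2$. The main obstacle is this constant bookkeeping: the normalization of the basis must be chosen so that $c_m$ matches the Appendix S2 constant. If $\mathbb G$ is not assumed to have a bounded density, I would impose that assumption, since it was needed in Theorem~\ref{main2} as well.

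\textbf{Estimation error below the threshold.} On $\mathcal E$, Theorem~\ref{main1} gives $\|\widehat{\boldsymbol\kappa}\|_2\lesssim\sqrt{d_0L}\log(nT\vee d)/\sqrt{nT}$. We need this to be at most $h\asymp L^{-1/2}$, which is equivalent to
\[
L\sqrt{d_0}\,\log(nT\vee d)=o(\sqrt{nT}).
\]
With $L\asymp(nT)^{1/(2m-1)}$ and $d_0=o\{(nT)^{1/(4m-2)}\}$, the left side is $o\{(nT)^{3/(4m-2)}\}\log(nT\vee d)$. For $m\ge3$ this is $o(\sqrt{nT})$ under the growth conditions already imposed, with $\log d$ polynomial in $nT$. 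Thus on $\mathcal E$, for $nT$ large, $\|\widehat{\boldsymbol\beta}_{s,a}\|_2\ge 2h-h>h$ for every $s\in\mathcal K$. This gives $\mathcal K\subseteq\widehat{\mathcal K}$ with the stated probability; no union bound is needed because $\mathcal E$ controls all groups simultaneously.
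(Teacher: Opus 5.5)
\textbf{The gap is in your lower bound on $\|\boldsymbol\beta^*_{s,a}\|_2$.} Your overall plan is sound and is in substance the paper's argument. The paper argues by contradiction in $L_2(\mathbb G)$: if $s\notin\widehat{\mathcal K}$, then $\mathbb E\,\widehat q_{s,a}^{\,2}\le c_m^2Lh^2=v/8$, which is incompatible with $\mathbb E(q^*_{s,a})^2\ge v/4$ once $L\,\|\widehat{\boldsymbol\kappa}\|_2^2\to0$. You instead use the triangle inequality in coefficient space, which is equivalent. The problem is the step converting function norm into coefficient norm, which fails as written.

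In the paper, $\Phi_s=\sqrt L\,\boldsymbol b$ is the rescaled basis (Appendix~\ref{Bspline}), so its entries sum to $\sqrt L$, not one, and your Jensen step does not apply. Your inequality $\mathbb E_{\mathbb G_s}(\Phi_s^\top\boldsymbol\beta)^2\le c_m^2\|\boldsymbol\beta\|_2^2/L$ is in fact false. Take $\boldsymbol\beta=\boldsymbol 1$: then $\Phi_s^\top\boldsymbol\beta\equiv\sqrt L$ by the partition of unity, so the left side is $L$ while the right side is $c_m^2$. Done correctly, the density route gives $\lesssim\|\boldsymbol\beta\|_2^2$, with a constant depending on the density of $\mathbb G$ and on $c^*$ from Lemma~\ref{Bs_bound_uni}, not on $c_m$. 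Your ``combining'' line is also not a consequence of your displays. They would give $\|\boldsymbol\beta^*_{s,a}\|_2^2\ge vL/(2c_m^2)$, and $vL/(2c_m^2L^2)\cdot L$ simplifies to $v/(2c_m^2)$. So the value $v/(2c_m^2L)$ was matched to $h$ rather than derived.

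\textbf{The repair is the paper's own bound and needs no density assumption.} By Cauchy--Schwarz and Remark~\ref{basis_bound}, $|\Phi_s^\top(x_s)\boldsymbol\beta|\le c_m\sqrt L\,\|\boldsymbol\beta\|_2$ pointwise. Hence $\mathbb E_{\mathbb G}(\Phi_s^\top\boldsymbol\beta)^2\le c_m^2L\|\boldsymbol\beta\|_2^2$. Combined with your correct Minkowski bound $\mathbb E(\Phi_s^\top\boldsymbol\beta^*_{s,a})^2\ge v/2$, this gives exactly $\|\boldsymbol\beta^*_{s,a}\|_2^2\ge v/(2c_m^2L)=(2h)^2$. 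This crude sup-norm bound is precisely why $c_m$ and the factor $L$ appear in $h=\sqrt{v/(8c_m^2L)}$. So drop the extra bounded-density assumption on $\mathbb G$; imposing it would prove a weaker result than the one stated.

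\textbf{Two smaller fixes.} First, ``$2h-h>h$'' is false. Write $\|\widehat{\boldsymbol\beta}_{s,a}\|_2\ge 2h-\|\widehat{\boldsymbol\kappa}\|_2>h$, using that $\|\widehat{\boldsymbol\kappa}\|_2=o(h)$ on $\mathcal E$, so $\|\widehat{\boldsymbol\kappa}\|_2<h$ for large $nT$. Second, your rate requirement $L\sqrt{d_0}\log(nT\vee d)=o(\sqrt{nT})$ is the same as the paper's, whose $c_\kappa$ carries the factor $\sqrt{d_0}$. It follows from the growth condition \eqref{constraint1} already needed for Theorem~\ref{main1}, because $L\to\infty$. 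Cite that condition rather than ``$\log d$ polynomial in $nT$'', which on its own does not suffice.
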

\enlargethispage{2\baselineskip}
\section{NUMERICAL EXAMPLES}
We compare our proposed estimator
(denoted by GD, due to the group Dantzig strcuture used in estimation)  with the SAVE estimator from \citet{shi2022statistical} and  {Lasso-FQE (denoted by LFQ)} from \citet{hao2021sparse}. All results are summarized in Appendix~\ref{app:numerical}.
\subsection{Experiment 1}
In this model, we generate data from a Markovian decision process described below. 
The system evolves according to the following dynamics: 
\begin{align*}
    &x_{t+1,1:d} = A x_{t,1:d} + B a_t + \varepsilon_t, 
    && \varepsilon_t \sim \mathcal{N}(0, \sigma_{\text{trans}}^2 I), \\
    &r_t = w\T x_{t,1:4} + {\rho\bigl(x_{t,1}x_{t,3}+x_{t,2}x_{t,4}\bigr)} - c a_t + \eta_t, 
    && \eta_t \sim \mathcal{N}(0, \sigma_{\text{reward}}^2).
\end{align*}
The transition matrix $A$ is block diagonal, consisting of $d/2$ blocks of size $2 \times 2$, so that every pair of adjacent features is coupled. The elements within each $2 \times 2$ block are sampled from the interval $[0.80, 0.90]$ to ensure heterogeneity among features. The $d$-dimensional vector $B$ satisfies $\|B\|_{\infty} \leq 0.2$ and ensures that action can affect each state coordinate. The reward function is linear in the first four features $x_{1:4}$ and the action $a$ with coefficients $(w,c) =(0.9, -0.7, 0.9, 0.6,0.2)$, $\rho = 0.7$ and $\sigma_{\text{reward}} = 0.1$. The behavior policy belongs to the logistic policy class and is defined over the first $4$ features with coefficients $(1.2, -0.9, 0.6, 0.7)$. The target policy follows same class with with coefficients $(1, -0.5,1.5,-1)$. 

We consider a small state with $d =10$ and a larger state $d = 100 \gg \max\{n,T\}$, and  {vary the number of
independent trajectories and the trajectory length over
$n\in\{20,30,40,50\}$ and
$T\in\{20,30,40\}$}.
\begingroup   We set $\lambda = 0.001$, $L=8$ and $\gamma = 0.8$, and report the results based on 100 independent runs. In each run, we evaluate the bias of $\widehat{V}(x)$ at $x=0$, that is,
\(
\left[\widehat{V}(x) - V_{\text{MC}}(x)\right]\big|_{x = 0},
\)
where $V_{\text{MC}}(x)$ denotes the Monte Carlo estimate of the cumulative reward over a horizon of 100, averaged over 1000 repetitions. Tables~\ref{tab:d10_rho07} and~\ref{tab:d100_rho07} summarize the
results for $d=10$ and $100$, respectively. For $d=10$, GD remains nearly
unbiased across all combinations of $(n,T)$, with its RMSE generally
decreasing as either $n$ or $T$ increases. Its performance is comparable to
that of the other methods in this setting.
For $d=100$, GD continues to produce
small bias and low variability over the entire $(n,T)$ grid. Its RMSE 
generally improves as the available
sample size increases. Lasso-FQE remains numerically stable, but has a
substantial positive bias for shorter trajectories. However, SAVE is substantially less stable in the high-dimensional
setting. 
%Although it performs well for several combinations of $(n,T)$, 
It
occasionally produces extreme estimates, as reflected by the large standard
deviations and RMSEs at $(n,T)=(30,40)$, $(40,30)$, and $(50,30)$. 

Overall, GD provides the most
reliable performance across both the low-dimensional and high dimensional cases.

\endgroup

\subsection{Experiment 2: Cartpole}
We consider the CartPole environment from OpenAI Gym and manually incorporated null variables into the state, with $d_0 = 4$ and $d = 80$. The null variables were generated from a standard normal distribution independently. To encourage the pole to remain balanced for as long as possible, we set the discount rate $\gamma = 0.98$. The target policy selects \( A = 1 \) with probability $0.5$. 
%a choice commonly used in randomized controlled trials,
 The behavior policy also from logistic policy class and designed as (Push right if the pole is leaning to right.):
$
\pi(1|x)= \frac{1}{1+e^{-(x_3+x_4)}}.
$

We evaluate $V(x)$ at $x = 0$.
The true value function $V^{\pi}(x)|_{x=0}\approx 33.50$ is computed using Monte Carlo approximations with $10^5$ independent trajectories. Each trajectory consists of $T = 500$ decision points. We set $\lambda = 0.001$ and $L =8$.
\begingroup
 
The results are given in the online supplement to save space. We observe that GD
performs satisfactorily across the considered combinations of $(n,T)$.  Lasso-FQE exhibits a systematic negative bias when $T$ is
small, but this bias decreases substantially as the trajectory length
increases. SAVE is still substantially less stable. Although it performs well for some
settings, its standard deviation and RMSE can be very large, with particularly
extreme behavior at $(n,T)=(80,50)$, where the RMSE reaches $357.95$. 

Overall, GD delivers the most stable performance across the reported configurations, even though the true Q-function does not satisfy a purely additive structure.
\endgroup

\appendix

\section{Discussions on Sparsity Assumptions }\label{app_sparse}

\subsection{Sparse Q Function and Sparse Completeness}
Recall the sparse completeness. Let $\mathcal{G}_{\mathcal{K}} = \{f:\mathcal{X}^d \times \mathcal{A} \rightarrow \mathbb{R}|~ f(x,a) = f(x_{\mathcal{K}},a)\}$. Given the target policy $\pi$, define the following operator,
\begin{equation*}
\Gamma^{\pi}f(x,a) = r(x,a) +\gamma \mathbb{E}^{\pi}[f(x^{\prime},a^{\prime})|x,a].    
\end{equation*}

\begin{assumption}{\textbf{sparse completeness.}}
    Given the target policy $\pi$, there exists an active set $\mathcal{K} \subseteq [d], |\mathcal{K}|\leq d_0$ such that $\mathcal{G}_{\mathcal{K}}$ is closed under the operator $\Gamma^{\pi}$. 
\end{assumption}
The sparse completeness of target policy $\pi$ guarantees the $\mathcal{G}_{\mathcal{K}}$ can capture the operator $\Gamma^{\pi}$. Note that $f\equiv 0 $ belongs 
 to $\mathcal{G}_{\mathcal{K}}$ for all $\mathcal{K}$, the poilcy sparse completeness implies that $r \in \mathcal{G}_{\mathcal{K}}$. As such, $\mathbb{E}^{\pi}[f(x^{\prime},a^{\prime})|x,a] \in \mathcal{G}_{\mathcal{K}}$ if $f \in \mathcal{G}_{\mathcal{K}}$.
  Denote $r^{(t)}(x,a) = \mathbb{E}^\pi\left(R_{0, t} \mid X_{0,0}=x,A_{0,0} = a\right)$, $r^{(0)}(x,a) = r(x,a)$. And hence,
    \begin{equation}
        Q^{\pi}(x,a) = \sum_{t\geq 0} \gamma^{t}r^{(t)}(x,a).
    \end{equation}
To prove $Q \in \mathcal{G}_{\mathcal{K}}$, we only  need to prove 
\begin{equation}
    r^{(t)}(x,a) = r^{(t)}(x_{\mathcal{K}},a)
\end{equation}
%\newpage

By CMIA,
\begin{equation}\label{26}
    \begin{aligned}
      r^{(t)}(x,a) &= \mathbb{E}^\pi\left(R_{0, t} \mid X_{0,0}=x,A_{0,0} = a\right)\\ &=  \mathbb{E}^\pi[\mathbb{E}^\pi\left(R_{0, t} \mid X_{0,1}=x^{\prime},A_{0,1} = a^{\prime}\right)|X_{0,0}=x,A_{0,0} = a]\\
      &=\mathbb{E}^\pi[r^{(t-1)}(x^{\prime},a^{\prime})|X_{0,0}=x,A_{0,0} = a]\\
    \end{aligned}
\end{equation}
Note that $r^{(0)}(x,a) = r(x,a) \in \mathcal{G}_{\mathcal{K}}$. If $r^{(t-1)}(x,a) \in \mathcal{G}_{\mathcal{K}}$, $r^{(t)}(x,a) = \mathbb{E}^{\pi}[r^{(t-1)}(x^{\prime},a^{\prime})|X_{0,0}=x,A_{0,0} = a] \in \mathcal{G}_{\mathcal{K}}$. $Q \in \mathcal{G}_{\mathcal{K}}$ is proved by induction.
We proved the sparse completeness implied Assumption \ref{sparseQ} directly.  
\subsection{Sufficient State}
\begin{definition}{\textbf{Sufficient State.} \cite{ma2023sequential}} 
 We say the sufficient state $\mathcal{X}_{\mathcal{K}}$ in an MDP if $R_t \perp\!\!\!\perp$ $X_{t, \mathcal{K}^c} \mid\left(X_{t, \mathcal{K}}, A_t\right)$ and $X_{t+1,\mathcal{K}} \perp\!\!\!\perp X_{t,\mathcal{K}^c} \mid\left(X_{t,\mathcal{K}}, A_t\right)$, for all $t \geq 0$.
\end{definition}
If we further consider a online learning problem, with the definition of sufficient state, \cite{ma2023sequential} prove the following the proposition,
\begin{proposition}If $\mathcal{X}_{\mathcal{K}}$ is a sufficient state, then there exists an optimal policy that depends
only on $\mathcal{X}_{\mathcal{K}}$.
\end{proposition}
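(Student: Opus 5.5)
The statement to be proved is the final proposition: if $\mathcal{X}_{\mathcal{K}}$ is a sufficient state, then some optimal policy depends only on $\mathcal{X}_{\mathcal{K}}$. The plan is to show that the optimal Q-function $Q^{*}$ depends on $x$ only through $x_{\mathcal K}$. The greedy policy $\pi^*(x)\in\arg\max_a Q^*(x_{\mathcal K},a)$, with ties broken by a fixed rule, is then optimal and depends only on $x_{\mathcal K}$. Since we may assume any earlier result, the cleanest route is to mirror the sparse-completeness argument of the previous subsection, with the Bellman optimality operator in place of $\Gamma^{\pi}$. Set
\[
\Gamma^{*}f(x,a)=r(x,a)+\gamma\,\mathbb{E}\big[\max_{a'}f(X',a')\mid x,a\big],
\]
and show that $\mathcal{G}_{\mathcal K}$ is closed under $\Gamma^*$.

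The first step translates the two conditional independences of the sufficient-state definition into functional statements. The condition $R_t\perp\!\!\!\perp X_{t,\mathcal K^c}\mid(X_{t,\mathcal K},A_t)$ gives $r(x,a)=r(x_{\mathcal K},a)$. The condition $X_{t+1,\mathcal K}\perp\!\!\!\perp X_{t,\mathcal K^c}\mid(X_{t,\mathcal K},A_t)$ gives
\[
p(x'_{\mathcal K}\mid x,a)=p(x'_{\mathcal K}\mid x_{\mathcal K},a).
\]
For $f\in\mathcal G_{\mathcal K}$, the function $g(x')=\max_{a'}f(x'_{\mathcal K},a')$ depends only on $x'_{\mathcal K}$. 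Hence $\mathbb E[g(X')\mid x,a]$ integrates $g$ against the marginal $p(x'_{\mathcal K}\mid x,a)$, which depends only on $(x_{\mathcal K},a)$. Therefore $\Gamma^*f\in\mathcal G_{\mathcal K}$. This is the same mechanism as property \eqref{sparse_kernal}, with the max over $a'$ playing the role of the policy average; no condition on the other coordinates of $X'$ is needed.

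The second step uses that $\Gamma^*$ is a $\gamma$-contraction in sup norm on bounded functions, since $|R|\le c_r$. Start value iteration at $f_0\equiv0\in\mathcal G_{\mathcal K}$. By induction every iterate $f_j=(\Gamma^*)^j f_0$ lies in $\mathcal G_{\mathcal K}$, and $f_j\to Q^*$ uniformly. The class $\mathcal G_{\mathcal K}$ is closed under pointwise limits, because if each $f_j(x,a)=f_j(y,a)$ whenever $x_{\mathcal K}=y_{\mathcal K}$, then so does the limit. So $Q^*\in\mathcal G_{\mathcal K}$. Finally, the standard MDP fact that any policy greedy with respect to $Q^*$ is optimal yields the claim. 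Measurability of the argmax selection is trivial for a finite $\mathcal A$.

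The main obstacle is the first step: making rigorous that the sufficient-state independence, stated for the data-generating (behavior) process at each $t$, yields the kernel identities $r(x,a)=r(x_{\mathcal K},a)$ and $p(x'_{\mathcal K}\mid x,a)=p(x'_{\mathcal K}\mid x_{\mathcal K},a)$ for all $(x,a)$, rather than only almost surely under the behavior distribution. I would invoke the standing assumptions that the state densities $\mu_0$ and $\mu$ are bounded away from zero, together with positivity of the behavior policy on $\mathcal A$. These make the "almost every" statements hold on the full support, and versions of the kernels can then be chosen so that the identities hold everywhere.
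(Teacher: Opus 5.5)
Your argument is correct. There is nothing in the paper to match it against: the proposition is quoted from \citet{ma2023sequential} without proof. Your proposal therefore supplies a self-contained proof. It runs the closure-plus-contraction argument from the proof of Proposition~\ref{condition_sparseQ}, with the Bellman optimality operator $\Gamma^*$ in place of $\Gamma^\pi$.

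This route also explains why the sufficient-state definition needs no restriction on the policy. The $\max_{a'}$ in $\Gamma^*$ turns any $f\in\mathcal G_{\mathcal K}$ into a function of $x'_{\mathcal K}$ alone. That does the job of the clause $\pi(a'\mid x')=\pi(a'\mid x'_{\mathcal K})$ in Sufficient Condition~2, so only the marginal kernel of $X'_{\mathcal K}$ enters. Put differently, the sufficient-state conditions are Sufficient Condition~2 without its policy clause, and your greedy policy supplies that clause.

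Your only external inputs are two standard facts. First, the fixed point of $\Gamma^*$ is the optimal $Q$-function over all history-dependent policies. Second, a greedy selection from it is optimal. Both hold here because rewards are bounded and $\mathcal A$ is finite, and they are exactly what the paper's citation outsources.

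Your proposed fix for the step you flag as the main obstacle is slightly off. Positivity of the behavior policy is not a standing assumption: the remark after Assumption~\ref{a2} says no pointwise overlap condition is imposed, so you would be adding a hypothesis. Even with positivity and $\mu_0$ bounded below, conditional independence under the data distribution gives $r(x,a)=r(x_{\mathcal K},a)$ and $p(x'_{\mathcal K}\mid x,a)=p(x'_{\mathcal K}\mid x_{\mathcal K},a)$ only for almost every $x$. `Choosing versions' then modifies the given MDP on a null set rather than proving the claim for it. At best you get optimality for almost every initial state, and only if the transition kernels do not charge null sets.

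The cleaner route is to read the two independences as structural identities for the reward and transition kernels, holding for every $(x,a)$. That is exactly how the paper phrases its own Sufficient Condition~2. Under that reading the obstacle disappears and your proof goes through as written.
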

This proposition implies that the optimal policy $\pi^*$ only depend on the sufficient state $\mathcal{X}_{\mathcal{K}}$, and allows us to only focus on policies that are functions of sufficient states.  \\

\subsection{Proof of Proposition~\ref{condition_sparseQ}}

\begin{proof}
For an index set $\mathcal K\subseteq[d]$, define
\[
\mathcal G_{\mathcal K}
=
\left\{
f:\mathcal X^d\times\mathcal A\to\mathbb R:
f(x,a)=\widetilde f(x_{\mathcal K},a)
\text{ for some bounded function }\widetilde f
\right\}.
\]
Thus, $\mathcal G_{\mathcal K}$ is the class of bounded functions that
depend on the state only through $x_{\mathcal K}$.

For any bounded function $f$, define the target-policy transition
operator
\[
(\mathcal P^\pi f)(x,a)
=
\int_{\mathcal X^d}
\sum_{a'\in\mathcal A}
f(x',a')\pi(a'\mid x')P(dx'\mid x,a),
\]
where $P(\cdot\mid x,a)$ denotes the transition kernel. The corresponding
Bellman operator is
\[
(\Gamma^\pi f)(x,a)
=
r(x,a)+\gamma(\mathcal P^\pi f)(x,a).
\]
We show that, under either sufficient condition,
$\Gamma^\pi$ maps $\mathcal G_{\mathcal K}$ into itself.

First, suppose that Sufficient Condition 1 holds. Let $x,\bar x\in
\mathcal X^d$ satisfy $x_{\mathcal K}=\bar x_{\mathcal K}$. Because the
full transition kernel depends on the current state only through
$x_{\mathcal K}$, we have
$P(\cdot\mid x,a)=P(\cdot\mid\bar x,a)$.
Therefore, for any bounded function $f$,
\[
\begin{aligned}
(\mathcal P^\pi f)(x,a)
&=
\int_{\mathcal X^d}
\sum_{a'\in\mathcal A}
f(x',a')\pi(a'\mid x')P(dx'\mid x,a)
\\
&=
\int_{\mathcal X^d}
\sum_{a'\in\mathcal A}
f(x',a')\pi(a'\mid x')P(dx'\mid\bar x,a)
\\
&=
(\mathcal P^\pi f)(\bar x,a).
\end{aligned}
\]
Thus, $(\mathcal P^\pi f)(x,a)$ depends on the current state only through
$x_{\mathcal K}$. Since
$r(x,a)=r(x_{\mathcal K},a)$, it follows that
$\Gamma^\pi f\in\mathcal G_{\mathcal K}$ whenever
$f\in\mathcal G_{\mathcal K}$.

Next, suppose that Sufficient Condition 2 holds. Let
$f\in\mathcal G_{\mathcal K}$, so that
$f(x',a')=\widetilde f(x'_{\mathcal K},a')$ for some bounded function
$\widetilde f$. Let $P_{\mathcal K}(\cdot\mid x,a)$ denote the marginal
transition kernel of $X'_{\mathcal K}$. Because the target policy depends
only on the active coordinates, we have
$\pi(a'\mid x')=\pi(a'\mid x'_{\mathcal K})$.
Consequently,
\[
\begin{aligned}
(\mathcal P^\pi f)(x,a)
&=
\int_{\mathcal X^{|\mathcal K|}}
\sum_{a'\in\mathcal A}
\widetilde f(x'_{\mathcal K},a')
\pi(a'\mid x'_{\mathcal K})
P_{\mathcal K}(dx'_{\mathcal K}\mid x,a).
\end{aligned}
\]
The assumed reduced transition property implies that
$P_{\mathcal K}(\cdot\mid x,a)
=
P_{\mathcal K}(\cdot\mid\bar x,a)$,
whenever 
$x_{\mathcal K}=\bar x_{\mathcal K}$.
Hence,
\[
(\mathcal P^\pi f)(x,a)
=
(\mathcal P^\pi f)(\bar x,a),
\]
so that $\mathcal P^\pi f\in\mathcal G_{\mathcal K}$. Together with
$r\in\mathcal G_{\mathcal K}$, this gives
$\Gamma^\pi f\in\mathcal G_{\mathcal K}$.

Thus, under either sufficient condition, $\mathcal G_{\mathcal K}$ is
closed under the Bellman operator $\Gamma^\pi$. To complete the argument,
let $f_0=0$ and define the Bellman iterates
$f_{m+1}=\Gamma^\pi f_m$, $m\geq0$.
Because $f_0\in\mathcal G_{\mathcal K}$ and $\mathcal G_{\mathcal K}$ is
closed under $\Gamma^\pi$, we have
$f_m\in\mathcal G_{\mathcal K}$ for every $m$. Moreover, since
$\gamma<1$,
\[
\|\Gamma^\pi f-\Gamma^\pi g\|_\infty
\leq
\gamma\|f-g\|_\infty.
\]
Therefore, the Bellman iterates converge uniformly to the unique bounded
fixed point of $\Gamma^\pi$.
Since $Q^\pi$ is a fixed point, 
the limit must equal $Q^\pi$.
As $\mathcal G_{\mathcal K}$ is closed under uniform limits, it follows that
$Q^\pi\in\mathcal G_{\mathcal K}$. Equivalently,
\[
Q^\pi(x,a)=Q^\pi(x_{\mathcal K},a)
\qquad
\text{for every }(x,a)\in\mathcal X^d\times\mathcal A.
\]
Taking $S_a=\mathcal K$ for every $a\in\mathcal A$ and using
$|\mathcal K|\ll d$ establishes Assumption~\ref{sparseQ}.
\end{proof}

\section{Properties of B-spline basis functions}
\label{Bspline}

In this section, we summarize several properties of the B-spline basis
functions used in our analysis. Let $L$ denote the number of basis functions,
and let $m$ denote the order of the B-splines, where $m$ is fixed and chosen
according to the smoothness parameter $v$. Let
\[
\boldsymbol b(x)
=
\bigl(b_1(x),\ldots,b_L(x)\bigr)^\top
\]
be the normalized B-spline basis associated with a knot sequence
$\{\tau_j\}$ \citep{deboor1976}. We augment the knot sequence at the two
boundaries such that
\[
\tau_0=\tau_1=\cdots=\tau_m
<
\tau_{m+1}
<
\cdots
<
\tau_{L-1}
<
\tau_L=\tau_{L+1}=\cdots=\tau_{L+m}.
\]
We further assume that the distinct knots are equally spaced, namely,
\[
\tau_{m+1}-\tau_m
=
\cdots
=
\tau_L-\tau_{L-1}.
\]
Define
\[
d_j=\frac{\tau_{j+m}-\tau_j}{m}
\]
for all admissible $j$. Since $m$ is fixed and the knots are equally spaced,
we have $d_j\asymp L^{-1}$ uniformly in $j$.

Throughout the proofs, we use the rescaled B-spline basis
\[
B_j(x_{i,t})=\sqrt{L}\,b_j(x_{i,t}),
\qquad j=1,\ldots,L,\quad t\geq 1.
\]

\begin{lemma}
\label{bs_bound_uni}
For the knot sequence defined above and any $x\in\mathcal{X}$,
\[
\sum_{j=1}^{L} b_j(x)=1
\qquad\text{and}\qquad
0\leq b_j(x)\leq 1,\quad j=1,\ldots,L.
\]
Moreover, there exists a constant $c_m>0$, depending only on the order $m$,
such that
\[
\sup_{x\in\mathcal{X}}
\|\boldsymbol b(x)\|_2
\leq c_m.
\]
\end{lemma}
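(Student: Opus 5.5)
The plan is to derive all three claims from the classical de Boor recursion for normalized B-splines on the augmented knot sequence given above. Nonnegativity and partition of unity go together by induction on the order. For order one, $b_{j,1}(x)=\mathbb I(\tau_j\le x<\tau_{j+1})$, with the usual convention at the right endpoint $x=1$. For these indicators nonnegativity is immediate. Partition of unity also holds: the distinct knots cover $[0,1]$, and the repeated boundary knots only produce empty intervals.

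For order $k\ge 2$, I will use the Cox--de Boor recursion
\[
b_{j,k}(x)=\frac{x-\tau_j}{\tau_{j+k-1}-\tau_j}\,b_{j,k-1}(x)+\frac{\tau_{j+k}-x}{\tau_{j+k}-\tau_{j+1}}\,b_{j+1,k-1}(x),
\]
with the convention $0/0=0$ for repeated knots. On the support of $b_{j,k-1}$ both weights lie in $[0,1]$. So if the order-$(k-1)$ functions are nonnegative, so are the order-$k$ ones.

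To get partition of unity at order $k$, I will sum the recursion over $j$ and reindex the second term by $j\mapsto j-1$. This shows that the coefficient of each $b_{j,k-1}(x)$ in the sum equals
\[
\frac{x-\tau_j}{\tau_{j+k-1}-\tau_j}+\frac{\tau_{j+k-1}-x}{\tau_{j+k-1}-\tau_j}=1.
\]
Hence $\sum_j b_{j,k}(x)=\sum_j b_{j,k-1}(x)=1$ by the induction hypothesis. Evaluating at $k=m$ gives $\sum_{j=1}^L b_j(x)=1$ and $b_j\ge 0$. Together these immediately give $b_j(x)\le 1$.

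The main obstacle is the bookkeeping at the boundary. The boundary knots are coincident, so $0/0$ terms appear. I also have to check that the index ranges of the reindexed sums agree, so that no boundary term is lost. I will handle this by noting that any $b_{j,k-1}$ supported on a degenerate interval is identically zero. Its coefficient is then irrelevant, and the telescoping only involves functions with nondegenerate denominators.

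For the $\ell_2$ bound I will use local support. Each $b_j$ is supported on $[\tau_j,\tau_{j+m}]$, so at any $x$ at most $m$ basis functions are nonzero. Since $0\le b_j\le 1$, we have $b_j^2\le b_j$, and therefore
\[
\|\boldsymbol b(x)\|_2^2=\sum_j b_j(x)^2\le\sum_j b_j(x)=1.
\]
Thus one can even take $c_m=1$, which is uniform in $x$ and independent of $L$. If a more conservative constant is wanted, the local-support count gives $\|\boldsymbol b(x)\|_2\le\sqrt m$, which depends only on $m$ as the statement requires.
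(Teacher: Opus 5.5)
Your argument is correct, but it differs from the paper's in two respects. The paper does not prove the partition of unity or nonnegativity. It cites Marsden's identity and the standard positivity of B-splines from de Boor. It then bounds $\|\boldsymbol b(x)\|_2$ by local support: at most $m$ entries are nonzero and each is at most $1$, giving a constant of order $\sqrt m$.

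You instead derive both classical facts from the Cox--de Boor recursion by induction on the order, and your boundary bookkeeping is sound. After reindexing at order $k\le m$, the only unmatched terms are $b_{0,k-1}$ and $b_{L+m-k+1,k-1}$, supported on $[\tau_0,\tau_{k-1}]$ and $[\tau_{L+m-k+1},\tau_{L+m}]$. Both intervals are degenerate because of the repeated boundary knots. The same degeneracy ($\tau_0=\tau_m$) is what lets you drop $b_{0,m}$ and write the final sum over $j=1,\ldots,L$. One small wording fix: each weight lies in $[0,1]$ on the support of the function it multiplies, so the second weight should be checked on the support of $b_{j+1,k-1}$, not $b_{j,k-1}$.

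For the $\ell_2$ bound, your observation $b_j^2\le b_j$ together with $\sum_j b_j=1$ gives $\|\boldsymbol b(x)\|_2\le 1$ without any appeal to local support. So $c_m$ can be taken equal to $1$ for every order $m$. The paper's route is shorter because it outsources the classical facts. Yours is self-contained and yields a sharper, $m$-free constant, which would, for instance, simplify the threshold $h$ in Theorem~\ref{main3} to $\sqrt{v/(8L)}$.
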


\begin{proof}
The partition-of-unity property
\[
\sum_{j=1}^{L}b_j(x)=1
\]
follows from Marsden's identity \citep{deboor1976}. In addition, the
B-spline basis functions are nonnegative, and therefore
$0\leq b_j(x)\leq 1$ for every $j$.

Since B-splines of order $m$ have local support, at most $m$ components of
$\boldsymbol b(x)$ are nonzero at any fixed $x$. It follows that
\[
\|\boldsymbol b(x)\|_2^2
=
\sum_{j=1}^{L}b_j(x)^2
\leq c_m.
\]
This bound is purely a
property of the B-spline basis and does not depend on the distribution of
$X$.
\end{proof}
\begin{lemma}
\label{Bs_bound_uni}
There exists a constant $c^*>0$, independent of $L$, such that
\[
\sup_{x\in\mathcal{X}}
\|\boldsymbol B(x)\|_{\infty}
\leq \sqrt{L},
\qquad
\sup_{x\in\mathcal{X}}
\|\boldsymbol B(x)\|_2
\leq c_m\sqrt{L},
\]
and
\begin{equation}
\label{eq:Bspline_gram_bound}
(c^*)^{-1}
\leq
\lambda_{\min}
\left\{
\int_{\mathcal{X}}
\boldsymbol B(x)\boldsymbol B(x)^\top\,\mathrm{d}x
\right\}
\leq
\lambda_{\max}
\left\{
\int_{\mathcal{X}}
\boldsymbol B(x)\boldsymbol B(x)^\top\,\mathrm{d}x
\right\}
\leq
c^*.
\end{equation}
\end{lemma}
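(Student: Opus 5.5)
The three claims follow from standard one-dimensional facts about B-splines of fixed order $m$ on equally spaced knots, combined with the scaling $\boldsymbol B=\sqrt L\,\boldsymbol b$.

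The sup-norm bound is immediate from Lemma~\ref{bs_bound_uni}. Since $0\le b_j(x)\le 1$, we get $|B_j(x)|=\sqrt L\, b_j(x)\le \sqrt L$ for every $j$ and $x$. Likewise, $\|\boldsymbol B(x)\|_2=\sqrt L\,\|\boldsymbol b(x)\|_2\le c_m\sqrt L$. No further work is needed for these two bounds.

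For the Gram matrix, I would rewrite the quadratic form as
\[
\delta^\top\Big\{\int_{\mathcal X}\boldsymbol B\boldsymbol B^\top dx\Big\}\delta = L\int_0^1\Big(\sum_j\delta_j b_j(x)\Big)^2dx ,
\]
so it suffices to show $\int_0^1(\sum_j\delta_jb_j)^2dx\asymp L^{-1}\|\delta\|_2^2$ uniformly in $\delta$ and $L$.

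\emph{Upper bound.} At any $x$ at most $m$ of the $b_j$ are nonzero, so Cauchy--Schwarz gives $(\sum_j\delta_jb_j(x))^2\le m\sum_j\delta_j^2b_j(x)^2\le m\sum_j\delta_j^2 b_j(x)$. Integrating and using $\int b_j\,dx = d_j=(\tau_{j+m}-\tau_j)/m\asymp L^{-1}$ (the standard B-spline integral identity), we obtain the upper bound $\lesssim L^{-1}\|\delta\|_2^2$.

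\emph{Lower bound.} This is the main obstacle. I would invoke the de Boor stability theorem for B-splines in $L_2$ \citep{deboor1976}: there is a constant $D_m>0$ depending only on $m$ such that
\[
D_m^{-1}\sum_j\delta_j^2 d_j\le \Big\|\sum_j\delta_j b_j\Big\|_{L_2[0,1]}^2 .
\]
This bound comes from the existence of dual functionals $\lambda_j$ with $|\lambda_j f|\le C_m d_j^{-1/2}\|f\|_{L_2(\mathrm{supp}\, b_j)}$. Since each point lies in at most $m$ supports, summing over $j$ gives the inequality. Combining it with $d_j\asymp L^{-1}$ yields the lower bound $\gtrsim L^{-1}\|\delta\|_2^2$.

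The boundary knots have multiplicity $m$, so the supports of the first and last few $b_j$ are shorter. Their $d_j$ nevertheless remain $\asymp L^{-1}$ because $m$ is fixed, so the argument is unaffected. Multiplying both bounds by $L$ gives the eigenvalue bounds with a constant $c^*$ depending only on $m$ and the knot spacing, and not on $L$.
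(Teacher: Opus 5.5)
Your proposal is correct and takes essentially the same route as the paper. Both get the first two bounds from Lemma~\ref{bs_bound_uni} via $\boldsymbol B=\sqrt L\,\boldsymbol b$, and both get the Gram eigenvalue bounds from de Boor's $L_2$ stability inequality together with $d_j\asymp L^{-1}$ and rescaling by $L$; the only difference is that you prove the upper half directly (Cauchy--Schwarz over the at most $m$ active B-splines, $b_j^2\le b_j$, and $\int b_j=d_j$) instead of citing the upper half of the stability inequality, which is a valid and slightly more self-contained variation.
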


\begin{proof}
By Lemma~\ref{bs_bound_uni},
\[
\sup_{x\in\mathcal{X}}
\|\boldsymbol b(x)\|_{\infty}
\leq 1
\qquad\text{and}\qquad
\sup_{x\in\mathcal{X}}
\|\boldsymbol b(x)\|_2
\leq c_m.
\]
Since $\boldsymbol B(x)=\sqrt{L}\boldsymbol b(x)$, the first two
claims follow immediately.

It remains to establish the eigenvalue bounds. A standard stability property
of B-spline bases implies that there exist constants $c_1,c_2>0$, depending
only on the order $m$, such that, for every
$\boldsymbol c=(c_1,\ldots,c_L)^\top\in\mathbb{R}^L$,
\begin{equation}
\label{eq:Bspline_stability}
c_1\sum_{j=1}^L d_jc_j^2
\leq
\left\|
\sum_{j=1}^L c_jb_j
\right\|_{L_2(\mathcal{X})}^2
\leq
c_2\sum_{j=1}^L d_jc_j^2;
\end{equation}
see, for example, \citet{deboor1976}. Because the knots are equally spaced
and $m$ is fixed, $d_j\asymp L^{-1}$ uniformly in $j$. Hence, there exist
constants $\underline c,\overline c>0$, independent of $L$, such that
\[
\underline cL^{-1}\|\boldsymbol c\|_2^2
\leq
\left\|
\sum_{j=1}^L c_jb_j
\right\|_{L_2(\mathcal{X})}^2
\leq
\overline cL^{-1}\|\boldsymbol c\|_2^2.
\]
Equivalently,
\[
\underline cL^{-1}\|\boldsymbol c\|_2^2
\leq
\boldsymbol c^\top
\left\{
\int_{\mathcal{X}}
\boldsymbol b(x)\boldsymbol b(x)^\top\,\mathrm{d}x
\right\}
\boldsymbol c
\leq
\overline cL^{-1}\|\boldsymbol c\|_2^2.
\]
Therefore,
\[
\underline cL^{-1}
\leq
\lambda_{\min}
\left\{
\int_{\mathcal{X}}
\boldsymbol b(x)\boldsymbol b(x)^\top\,\mathrm{d}x
\right\}
\leq
\lambda_{\max}
\left\{
\int_{\mathcal{X}}
\boldsymbol b(x)\boldsymbol b(x)^\top\,\mathrm{d}x
\right\}
\leq
\overline cL^{-1}.
\]
Finally, since $\boldsymbol B(x)=\sqrt{L}\boldsymbol b(x)$,
\[
\int_{\mathcal{X}}
\boldsymbol B(x)\boldsymbol B(x)^\top\,\mathrm{d}x
=
L
\int_{\mathcal{X}}
\boldsymbol b(x)\boldsymbol b(x)^\top\,\mathrm{d}x.
\]
The desired result follows by taking
$c^*=\max\{\underline c^{-1},\overline c\}$.
\end{proof}

\begin{remark}
\label{basis_bound}
In the main paper, we use the augmented basis vector
\[
\boldsymbol\Phi(x)
=
\left(
1,
\boldsymbol\Phi_1(x)^\top,
\ldots,
\boldsymbol\Phi_d(x)^\top
\right)^\top,
\]
where, for each $k=1,\ldots,d$,
\[
\boldsymbol\Phi_k(x)
=
\boldsymbol B_k(x_k)
=
\left(
B_{k,1}(x_k),
\ldots,
B_{k,L}(x_k)
\right)^\top.
\]
For each state variable, the corresponding B-spline basis is constructed
using a deterministic sequence of equally spaced knots over its support.
Therefore, by Lemma~\ref{Bs_bound_uni},
\[
\sup_{x\in\mathcal{X}^d}
\|\boldsymbol\Phi(x)\|_{\infty}
\leq \sqrt{L},
\]
and
\[
\max_{1\leq k\leq d}
\sup_{x\in\mathcal{X}^d}
\|\boldsymbol\Phi_k(x)\|_2
\leq c_m\sqrt{L}.
\]
\end{remark}

\section{ Additional technical details
}

\begin{lemma}\label{eigen_sigma}
    Under assumption (A2) and (A3), $\boldsymbol{\lambda_{min}^r}(\boldsymbol{\Sigma})\geq \Bar{c}/2$.
\end{lemma}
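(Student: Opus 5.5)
The plan is to reduce the restricted eigenvalue of the (non-symmetric) population Bellman matrix $\boldsymbol{\Sigma}$ to that of the symmetric matrix $\Lambda$ in Assumption~\ref{a2}. I take $\boldsymbol{\Sigma}$ to be the normalized population Gram-type matrix of the estimating equations \eqref{13}, namely
\[
\boldsymbol{\Sigma}=\frac{1}{T}\sum_{t=0}^{T-1}\mathbb{E}\left\{\boldsymbol{\xi}_{0,t}\left(\boldsymbol{\xi}_{0,t}-\gamma\boldsymbol{U}_{0,t+1}\right)^\top\right\},
\]
so that $\widehat{\boldsymbol{\Sigma}}$ is its empirical analogue averaged over $i$ and $t$.

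\textbf{Step 1: remove the next-state features.} Fix $\delta\in\mathbb{C}$. Since $\boldsymbol{\xi}_{0,t}$ is measurable with respect to $(X_{0,t},A_{0,t})$, the Markov assumption and the tower property give
\[
\mathbb{E}\left\{\delta^\top\boldsymbol{\xi}_{0,t}\,\boldsymbol{U}_{0,t+1}^\top\delta\right\}=\mathbb{E}\left\{\delta^\top\boldsymbol{\xi}_{0,t}\,\boldsymbol{u}^\top(X_{0,t},A_{0,t})\delta\right\}.
\]
Here I use that the transition kernel is time-homogeneous, so that $\boldsymbol u$ as defined from $(X_{0,0},A_{0,0})$ is the right conditional mean at every $t$. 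Hence
\[
\delta^\top\boldsymbol{\Sigma}\delta=\frac1T\sum_{t}\mathbb{E}\left[(\delta^\top\boldsymbol{\xi}_{0,t})^2-\gamma(\delta^\top\boldsymbol{\xi}_{0,t})(\delta^\top\boldsymbol{u}_{0,t})\right].
\]

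\textbf{Step 2: an elementary inequality.} Applying $ab\le (a^2+b^2)/2$ with $a=\delta^\top\boldsymbol{\xi}_{0,t}$ and $b=\gamma\delta^\top\boldsymbol{u}_{0,t}$ gives
\[
\gamma(\delta^\top\boldsymbol{\xi}_{0,t})(\delta^\top\boldsymbol{u}_{0,t})\le \tfrac12(\delta^\top\boldsymbol{\xi}_{0,t})^2+\tfrac{\gamma^2}{2}(\delta^\top\boldsymbol{u}_{0,t})^2.
\]
Substituting this into Step 1 yields
\[
\delta^\top\boldsymbol{\Sigma}\delta\ge \frac{1}{2T}\,\delta^\top\Lambda\delta.
\]

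\textbf{Step 3: invoke the coverage condition.} By Assumption~\ref{a2}, $\delta^\top\Lambda\delta\ge \bar c\,T\|\delta\|_2^2$ for every $\delta\in\mathbb{C}$. Combining with Step 2 gives $\delta^\top\boldsymbol{\Sigma}\delta\ge(\bar c/2)\|\delta\|_2^2$. Taking the minimum over $\mathbb{C}$ then gives $\lambda^r_{\min}(\boldsymbol\Sigma,\mathbb C)\ge \bar c/2$.

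\textbf{Main obstacle.} The delicate point is bookkeeping rather than analysis. I must check that the normalization of $\boldsymbol\Sigma$ (division by $T$, or by $nT$ with $n$ i.i.d.\ trajectories, which is the same thing) matches the factor $T$ in Assumption~\ref{a2}. I must also check that the conditioning in Step 1 uses $\boldsymbol u$ exactly as defined, including that the $G_0$ block with the policy weights is handled consistently.

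If instead $\boldsymbol\Sigma$ denotes a sample-based or trajectory-averaged matrix, I would add a final step. There I would bound $\|\widehat{\boldsymbol\Sigma}-\boldsymbol\Sigma\|_{\max}$ using Assumption~\ref{a3} via a blocking/coupling argument, and combine it with the cone inequality
\[
\|\delta\|_1\le 2\sum_{G_k\subset\mathcal J}\|\delta_{G_k}\|_1\lesssim\sqrt{d_0L}\,\|\delta\|_2
\]
to absorb the perturbation $|\delta^\top(\widehat{\boldsymbol\Sigma}-\boldsymbol\Sigma)\delta|\le\|\delta\|_1^2\|\widehat{\boldsymbol\Sigma}-\boldsymbol\Sigma\|_{\max}$ into the $\bar c/2$ margin.
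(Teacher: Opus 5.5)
Your proof is correct, but it bounds the cross term differently from the paper. The first step is the same: you use the Markov property and the tower rule to replace $\boldsymbol U_{0,t+1}$ by $\boldsymbol u(X_{0,t},A_{0,t})$.

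The paper then applies Cauchy--Schwarz to the \emph{integrated} cross term. This gives $\delta^\top\boldsymbol\Sigma\delta\ge\eta_1-\gamma\eta_1^{1/2}\eta_2^{1/2}=\eta_1^{1/2}(\eta_1-\gamma^2\eta_2)/(\eta_1^{1/2}+\gamma\eta_2^{1/2})$. The paper uses the coverage condition twice: for the numerator, and to guarantee $\eta_1\ge\gamma^2\eta_2$, so that the prefactor is at least $1/2$.

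You instead apply Young's inequality pointwise. This gives $\delta^\top\boldsymbol\Sigma\delta\ge(2T)^{-1}\delta^\top\Lambda\delta$ for \emph{every} $\delta$. Equivalently, the symmetric part of $\boldsymbol\Sigma$ dominates $\Lambda/(2T)$ in the Loewner order, and the cone $\mathbb C$ enters only in the last line. Your route is shorter and gives a cone-free matrix comparison. The paper's intermediate bound $\eta_1-\gamma\sqrt{\eta_1\eta_2}$ is sharper, since its prefactor approaches $1$ when $\gamma^2\eta_2\ll\eta_1$. The paper does not exploit this, however, and both arguments end at $\bar c/2$.

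Your fallback paragraph is not needed. Here $\boldsymbol\Sigma$ is the population matrix, so no concentration or mixing argument enters this lemma. If you keep it, note that $\|\delta\|_1\le 2\sum_{G_k\subset\mathcal J}\|\delta_{G_k}\|_1$ does not follow from the cone, which is defined through group $\ell_2$ norms. Go instead through $\|\delta\|_1\le\sqrt L\sum_k\|\delta_{G_k}\|_2$, as in \eqref{kappa_l1_l2}, which still yields $\|\delta\|_1\lesssim\sqrt{d_0L}\,\|\delta\|_2$.
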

\begin{proof}
    Denote the marginal distribution of $X_{0,t}$ by $\mu_t$ and the average distribution $\Bar{\mu} = \sum_{t =0 }^{T-1}\mu_t/T$.  By the definition of $\boldsymbol{\Sigma}$, we have
    \begin{equation}
    \begin{aligned}
             \boldsymbol{\Sigma} &= \mathbb{E} \bigg\{\frac{1}{T}\sum_{t = 0}^{T-1}(\boldsymbol{\xi}_{0,t}(\boldsymbol{\xi}_{0,t}-\gamma U_{0,t+1})^T)\bigg\}\\
             &= \int_{x\in \mathcal{X}^d}\sum_{a \in \mathcal{A}}\boldsymbol{\xi}(x,a)(\boldsymbol{\xi}(x,a)-\gamma \mathbb{E}[U(X_{0,1})|X_{0,0} = x, A_{0,0}=a] )^Tb(a|x)\Bar{\mu}(x)dx\\
             &= \int_{x\in \mathcal{X}^d}\sum_{a \in \mathcal{A}}\boldsymbol{\xi}(x,a)(\boldsymbol{\xi}(x,a)-\gamma \boldsymbol{u}(x,a) )^Tb(a|x)\Bar{\mu}(x)dx\\
             &= \int_{x\in \mathcal{X}^d}\sum_{a \in \mathcal{A}}[\boldsymbol{\xi}(x,a)\boldsymbol{\xi}(x,a)^T-\gamma\boldsymbol{\xi}(x,a) \boldsymbol{u}(x,a)^T] b(a|x)\Bar{\mu}(x)dx\\
    \end{aligned}
    \end{equation}  
For $\forall \boldsymbol{a} \in \mathbb{R}^{2(1+dL)}$, denote
\begin{equation}
\begin{aligned}
      &\eta_1(\boldsymbol{a}) = \int_{x\in \mathcal{X}^d}\sum_{a \in \mathcal{A}}\{\boldsymbol{a}^T\boldsymbol{\xi}(x,a)\}^2b(a|x)\Bar{\mu}(x)dx \\
&\eta_2(\boldsymbol{a}) = \int_{x\in \mathcal{X}^d}\sum_{a \in \mathcal{A}}\{\boldsymbol{a}^Tu(x,a)\}^2b(a|x)\Bar{\mu}(x)dx \\
\end{aligned}
\end{equation}
It follows from Cauchy-Schwarz inequality that
\begin{equation}
\int_{x\in \mathcal{X}^d}\sum_{a \in \mathcal{A}}[\boldsymbol{a}^T\boldsymbol{\xi}(x,a)\boldsymbol{u}(x,a)^T\boldsymbol{a} ]b(a|x)\Bar{\mu}(x)dx \leq \eta_1(\boldsymbol{a})^{1/2}\eta_2(\boldsymbol{a})^{1/2}
\end{equation}
Therefore
\begin{equation}
    \boldsymbol{a}^T\boldsymbol{\Sigma} \boldsymbol{a} \geq \eta_1(\boldsymbol{a})-\gamma\eta_1(\boldsymbol{a})^{1/2}\eta_2(\boldsymbol{a})^{1/2} = \eta_1(\boldsymbol{a})^{1/2}(\frac{\eta_1(\boldsymbol{a})-\gamma^2\eta_2(\boldsymbol{a})}{\eta_1(\boldsymbol{a})^{1/2}+\gamma \eta_2(\boldsymbol{a})^{1/2}})
\end{equation}
Under the assumption (A3), if $\boldsymbol{a} \in \mathbb{C}$, $\eta_1(\boldsymbol{a})-\gamma^2\eta_2(\boldsymbol{a}) \geq \bar{c}\|\boldsymbol{a}\|_2^2 \geq 0$, $\eta_1(\boldsymbol{a})^{1/2}\geq \gamma \eta_2(\boldsymbol{a})^{1/2}$. We have
\begin{equation}
    \boldsymbol{a}^T\boldsymbol{\Sigma} \boldsymbol{a} \geq \frac{\bar{c}}{2}\|\boldsymbol{a}\|_2^2 ~~~ \forall \boldsymbol{a}\in \mathbb{C}
\end{equation}
\end{proof}

\begin{lemma}\label{feasiable_dantzig}
There exists a good event $\mathcal{G}_1$ satisfying
\[
P(\mathcal{G}_1)
\geq
1-\exp\{-c_2\log(nT\vee d)\}
\]
such that, on $\mathcal{G}_1$,
$(\boldsymbol{\alpha}^{*},\boldsymbol{\beta}^{*})_{\oplus}$ given by
\eqref{funcional_est} is feasible for the optimization problem
\eqref{dant_est} with
\[
\lambda
=
c_1\sqrt{L}\frac{\log(nT\vee d)}{\sqrt{nT}},
\]
where $c_1$ and $c_2$ are positive constants.
\end{lemma}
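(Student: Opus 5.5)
We must show that, with $\lambda=c_1\sqrt L\log(nT\vee d)/\sqrt{nT}$, the oracle vector $(\boldsymbol\alpha^*,\boldsymbol\beta^*)_\oplus$ satisfies every group constraint in \eqref{dant_est}. The plan is to decompose the empirical Bellman residual at the oracle into a martingale-difference part and a deterministic spline-bias part, and bound each uniformly over groups.

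\textbf{Decomposition.} Write the Bellman error of the true $Q^\pi$ as
\[
\varepsilon_{i,t}=R_{i,t}+\gamma\sum_{a}Q^\pi(X_{i,t+1},a)\pi(a\mid X_{i,t+1})-Q^\pi(X_{i,t},A_{i,t}).
\]
By \eqref{10}, together with MA and CMIA, $\mathbb E[\varepsilon_{i,t}\mid \mathcal F_{i,t}]=0$, where $\mathcal F_{i,t}$ is the history up to $(X_{i,t},A_{i,t})$. Moreover $|\varepsilon_{i,t}|\le c_r+2\|Q^\pi\|_\infty\lesssim c_r/(1-\gamma)$. Define the approximation residual $e(x,a)=Q^\pi(x,a)-Q^*_\pi(x,a)$. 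Then
\[
R_{i,t}-(\boldsymbol\xi_{i,t}-\gamma\boldsymbol U_{i,t+1})^\top(\boldsymbol\alpha^*,\boldsymbol\beta^*)_\oplus
=\varepsilon_{i,t}+e(X_{i,t},A_{i,t})-\gamma\sum_a e(X_{i,t+1},a)\pi(a\mid X_{i,t+1}).
\]
By \eqref{funcional_est}, $\|e\|_\infty\le Cd_0L^{-m}$, so the last two terms are bounded by $(1+\gamma)Cd_0L^{-m}$.

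\textbf{Bias term.} For $k\ge1$, Remark~\ref{basis_bound} gives $\|\boldsymbol\xi_{G_k,i,t}\|_2\le c_m\sqrt L$. Hence the bias contribution to the $k$-th constraint is at most
\[
nT\cdot c_m\sqrt L\cdot 2Cd_0L^{-m}=2Cc_m\,nT\,d_0L^{1/2-m}.
\]
For $G_0$ we have $\|\boldsymbol\xi_{G_0}\|_2\le1$, so the bound is $2Cd_0L^{-m}nT$, which is below $nT\lambda/\sqrt L$ under the same condition. Under the growth condition $d_0\sqrt{nT}L^{-m}=o(1)$ stated after \eqref{dant_est}, this is at most $nT\lambda/2$ for large $nT$.

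\textbf{Stochastic term.} We need
\[
\max_{1\le k\le 2d}\Big\|\sum_{i,t}\boldsymbol\xi_{G_k,i,t}\varepsilon_{i,t}\Big\|_2\le \tfrac12 nT\lambda
\]
with high probability. Since $\|\cdot\|_2\le\sqrt L\|\cdot\|_\infty$ on an $L$-vector, it suffices to bound $\max_{j}|\sum_{i,t}\xi_{j,i,t}\varepsilon_{i,t}|$ over all $2dL+2$ coordinates by $\tfrac12 nT\log(nT\vee d)/\sqrt{nT}\cdot c_1$. Each summand sequence, taken over $(i,t)$ ordered trajectory by trajectory, is a martingale difference sequence with respect to the natural filtration, because $\xi_{j,i,t}$ is $\mathcal F_{i,t}$-measurable and trajectories are independent. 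Its increments are bounded by $\sqrt L\cdot c$.

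A naive Azuma bound would therefore give $\sqrt L\sqrt{nT\log(dL)}$, which loses $\sqrt L$. Instead I would use Freedman's inequality with the conditional variance
\[
\sum_{i,t}\mathbb E[\xi_{j,i,t}^2\varepsilon_{i,t}^2\mid\mathcal F_{i,t}]\lesssim\sum_{i,t}B_j(X_{i,t})^2 .
\]
This is where the main difficulty lies: controlling the empirical second moment $\sum_{i,t}B_j^2(X_{i,t})$ uniformly in $j$. The mean is $O(nT)$, since $\int B_j^2\le c^*$ and the densities $\mu_t$ are bounded, but its fluctuation must be handled under Markov dependence.

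\textbf{Controlling the second moment.} When $n$ is large, trajectory-level independence gives Bernstein concentration directly. When $T$ is large, I would invoke Assumption~\ref{a3}: split each trajectory into blocks of length $\asymp\log(nT\vee d)/|\log\rho|$, couple them to independent blocks via Berbee's lemma at total cost $nT\rho^{\text{block}}\le (nT\vee d)^{-c}$, and apply Bernstein to the blocks. Since $B_j^2\le L$, the block sums are bounded by $L\log(nT\vee d)$. This yields $\sum_{i,t}B_j^2\lesssim nT+L\log^2(nT\vee d)\lesssim nT$ with failure probability $(nT\vee d)^{-c}$ uniformly in $j$, using $L\ll nT$. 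The same blocking applied directly to $\xi\varepsilon$ would serve as a fallback if the martingale route fails.

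\textbf{Conclusion.} On this event, Freedman's inequality gives, after a union bound over $2dL+2\le(nT\vee d)^{2}$ coordinates,
\[
|\textstyle\sum\xi_{j}\varepsilon|\lesssim\sqrt{nT\log(nT\vee d)}+\sqrt L\log(nT\vee d).
\]
Multiplying by $\sqrt L$ gives
\[
\max_k\Big\|\sum_{i,t}\boldsymbol\xi_{G_k,i,t}\varepsilon_{i,t}\Big\|_2\lesssim\sqrt{L\,nT}\log(nT\vee d)=nT\lambda/c_1,
\]
so the constraint holds for $c_1$ large. For $G_0$, the factor $\|\boldsymbol\xi_{G_0}\|\le1$ gives $\sqrt{nT\log(nT\vee d)}\le nT\lambda/(2\sqrt L)$. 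Intersecting the events defines $\mathcal G_1$, with $P(\mathcal G_1)\ge1-\exp\{-c_2\log(nT\vee d)\}$.
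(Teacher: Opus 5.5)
Your proposal is correct, and your decomposition into the Bellman martingale part $\varepsilon_{i,t}$ plus a spline-bias part, together with the bias bound of order $nT\,d_0L^{1/2-m}$ (and its $G_0$ analogue), matches the paper's proof. The stochastic term is where the routes differ.

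You pass from the group norm to coordinates via $\|\cdot\|_2\le\sqrt L\,\|\cdot\|_\infty$. A single coordinate $B_j$ can be as large as $\sqrt L$, so the deterministic increment bound is too crude. This forces you to use scalar Freedman with a random predictable variance $\sum_{i,t}B_j^2(X_{i,t})$, and you must then show this is $O(nT)$ uniformly in $j$. That step needs three things. First, trajectory independence or Berbee blocking under Assumption~\ref{a3}, done jointly across trajectories so that all $(n,T)$ regimes are covered. Second, uniformly bounded marginal densities $\mu_t$. These hold, since $\mu_0\le(\sup\mu_0/\inf\mu)\,\mu$ and $\mu$ is invariant, but you should say so.

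The paper avoids all of this. It treats $\boldsymbol\xi_{G_k,i,t}\varepsilon_{i,t}$ as an $L$-dimensional vector martingale difference and applies the rectangular matrix Freedman inequality (Tropp, 2011, Corollary~1.3). By the local support of B-splines, $\|\boldsymbol\xi_{G_k,i,t}\|_2\le c_m\sqrt L$ deterministically, even though individual coordinates can reach $\sqrt L$. Hence the predictable variance is at most $nT\,c_\varepsilon^2c_m^2L$ almost surely, and the group bound of order $\sqrt{nTL\log(nT\vee d)}$ follows directly, at the cost of only a $\log(L+1)$ dimension factor.

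So the paper's feasibility argument uses only the martingale structure and holds for any $(n,T)$ without mixing or density conditions. Your route yields a sharper coordinatewise $\ell_\infty$ control of the empirical Bellman score, but it needs extra machinery and assumptions that this lemma does not require.
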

\begin{proof}
Throughout this proof, $C_1,C_2,\ldots$ denote positive constants
that are local to the present proof and are re-indexed from $C_1$.
They are distinct from the lowercase constants
$c_1,c_2,\ldots$ appearing in the statements of the results.

Define
\begin{equation}
\varepsilon_{i,t}
=
R_{i,t}
+
\gamma
\sum_{a\in\mathcal A}
Q^\pi(X_{i,t+1},a)
\pi(a\mid X_{i,t+1})
-
Q^\pi(X_{i,t},A_{i,t}),
\end{equation}
and
\begin{equation}
\begin{aligned}
e_{i,t}
={}&
\gamma
\sum_{a\in\mathcal A}
\bigg\{
Q^\pi(X_{i,t+1},a)
-\alpha_a^*
-\sum_{k=1}^d
\Phi_k^T(X_{k,i,t+1})
\boldsymbol{\beta}_{k,a}^*
\bigg\}
\pi(a\mid X_{i,t+1})
\\
&-
\bigg\{
Q^\pi(X_{i,t},A_{i,t})
-\alpha_{A_{i,t}}^*
-\sum_{k=1}^d
\Phi_k^T(X_{k,i,t})
\boldsymbol{\beta}_{k,A_{i,t}}^*
\bigg\}.
\end{aligned}
\end{equation}

Under the smoothness condition on $q^\pi_{k,a}(\cdot)$ and the sparsity
assumption, $Q^\pi(x,a)$ is uniformly bounded by a constant $C_1$. It
follows that
\begin{equation}
|\varepsilon_{i,t}|
\leq
c_r+(1+\gamma)C_1
\leq
c_r+2C_1
=
c_\varepsilon,
\qquad
\forall i,t.
\end{equation}

Combined with \eqref{funcional_est}, the definition of $e_{i,t}$
implies that
\begin{equation}\label{E_r_bound}
|e_{i,t}|
\leq
(1+\gamma)d_0C_2L^{-m}
\leq
2d_0C_2L^{-m},
\qquad
\forall i,t.
\end{equation}
Our goal is to prove that, with high probability,
\begin{equation}
    \max_{k}
    \left\|
    \frac{1}{nT}
    \sum_{i,t}
    \boldsymbol{\xi}_{G_k,i,t}
    (\varepsilon_{i,t}-e_{i,t})
    \right\|_{2}
    \leq
    \lambda.
\end{equation}

For any $k=1,\dots,d$, by \eqref{E_r_bound},
\begin{align}
    \left\|
    \frac{1}{nT}
    \sum_{i,t}
    \boldsymbol{\xi}_{G_k,i,t}e_{i,t}
    \right\|_{2}
    &\leq
    \frac{1}{nT}
    \sum_{i,t}
    \left\|
    \boldsymbol{\xi}_{G_k,i,t}e_{i,t}
    \right\|_{2}
    \nonumber\\
    &\leq
    2d_0C_2L^{-m}
    \frac{1}{nT}
    \sum_{i,t}
    \left\|
    \boldsymbol{\xi}_{G_k,i,t}
    \right\|_{2}.
\end{align}
Moreover,
\begin{equation}
    \left\|
    \boldsymbol{\xi}_{G_k,i,t}
    \right\|_2
    \leq
    \max_k
    \sup_{x\in\mathcal X}
    \left\|
    \Phi_{G_k}(x)
    \right\|_2.
\end{equation}
Together with the bound in Remark~\ref{basis_bound}, we have
\begin{equation}
    \left\|
    \frac{1}{nT}
    \sum_{i,t}
    \boldsymbol{\xi}_{G_k,i,t}e_{i,t}
    \right\|_{2}
    \leq
    2d_0c_mC_2L^{1/2-m}.
\end{equation}

By the triangle inequality, we have
\begin{align}
&P\left(
\frac{1}{nT}
\left\|
\sum_{i,t}
\boldsymbol{\xi}_{G_k,i,t}
(\varepsilon_{i,t}-e_{i,t})
\right\|_{2}
\geq
\lambda
\right)
\nonumber\\
&\qquad\leq
P\left(
\frac{1}{nT}
\left\|
\sum_{i,t}
\boldsymbol{\xi}_{G_k,i,t}
\varepsilon_{i,t}
\right\|_{2}
\geq
\lambda
-
\frac{1}{nT}
\left\|
\sum_{i,t}
\boldsymbol{\xi}_{G_k,i,t}
e_{i,t}
\right\|_{2}
\right)
\nonumber\\
&\qquad\leq
P\left(
\frac{1}{nT}
\left\|
\sum_{i,t}
\boldsymbol{\xi}_{G_k,i,t}
\varepsilon_{i,t}
\right\|_{2}
\geq
\lambda
-
2d_0c_mC_2L^{1/2-m}
\right).
\end{align}
Denote
\begin{equation}
    \widetilde{\lambda}
    =
    \lambda
    -
    2d_0c_mC_2L^{1/2-m}.
\end{equation}

For any integer $1\leq g\leq nT$, let $i(g)$ and $t(g)$ be defined by
\begin{equation}
    i(g)
    =
    \left\lfloor\frac{g-1}{T}\right\rfloor+1,
    \qquad
    t(g)
    =
    g-\{i(g)-1\}T-1.
\end{equation}
Then
\begin{equation}
    g
    =
    \{i(g)-1\}T+t(g)+1,
\end{equation}
where $1\leq i(g)\leq n$ and $0\leq t(g)\leq T-1$.

We define the following filtration
$\{\boldsymbol{\sigma}(\mathcal{F}^{(g)})\}_{g=0}^{nT}$
iteratively. Begin with
\[
\mathcal{F}^{(0)}
=
\{X_{1,0},A_{1,0}\}.
\]
For $1\leq g\leq nT$, define
\[
\begin{aligned}
\mathcal{F}^{(g)}
={}&
\mathcal{F}^{(g-1)}
\cup
\left\{
R_{i(g),t(g)},
X_{i(g),t(g)+1},
A_{i(g),t(g)+1}
\right\},
&&\text{if }t(g)<T-1,
\\
\mathcal{F}^{(g)}
={}&
\mathcal{F}^{(g-1)}
\cup
\left\{
R_{i(g),T-1},
X_{i(g),T},
X_{i(g)+1,0},
A_{i(g)+1,0}
\right\},
&&\text{if }t(g)=T-1\text{ and }i(g)<n,
\\
\mathcal{F}^{(g)}
={}&
\mathcal{F}^{(g-1)}
\cup
\left\{
R_{n,T-1},
X_{n,T}
\right\},
&&\text{if }t(g)=T-1\text{ and }i(g)=n.
\end{aligned}
\]

By the Bellman equation, together with MA and CMIA,
\begin{align}
\mathbb{E}
\left\{
\varepsilon_{i(g),t(g)}
\mid
\boldsymbol{\sigma}(\mathcal{F}^{(g-1)})
\right\}
&=
\mathbb{E}
\left\{
\varepsilon_{i(g),t(g)}
\mid
X_{i(g),t(g)},A_{i(g),t(g)}
\right\}
\nonumber\\
&=0.
\end{align}
Notice that
$\boldsymbol{\xi}_{G_k,i,t}$ is a deterministic function of
$X_{i,t}$ and $A_{i,t}$. Therefore,
\begin{equation}
\mathbb{E}
\left\{
\boldsymbol{\xi}_{G_k,i(g),t(g)}
\varepsilon_{i(g),t(g)}
\mid
\boldsymbol{\sigma}(\mathcal{F}^{(g-1)})
\right\}
=
0.
\end{equation}
This indicates that
$\left\{
\boldsymbol{\xi}_{G_k,i(g),t(g)}
\varepsilon_{i(g),t(g)}
\right\}_{g=1}^{nT}$
is a vector-valued martingale difference sequence with respect to the
filtration
$\{\boldsymbol{\sigma}(\mathcal{F}^{(g)})\}_{g=0}^{nT}$.

The martingale difference sequence
$\boldsymbol{\xi}_{G_k}^{(g)}\varepsilon^{(g)}$
satisfies the following almost-sure bound:
\begin{equation}\label{Azuma_bd1}
    \left\|
    \boldsymbol{\xi}_{G_k}^{(g)}
    \varepsilon^{(g)}
    \right\|_2
    \leq
    c_{\varepsilon}
    \max_k
    \sup_{x\in\mathcal X}
    \|\Phi_k(x)\|_2
    \leq
    c_{\varepsilon}c_m\sqrt{L}.
\end{equation}
Moreover,
\begin{align}
&
\left\|
\sum_{g=1}^{nT}
\mathbb{E}
\left\{
\left(
\boldsymbol{\xi}_{G_k}^{(g)}
\varepsilon^{(g)}
\right)^T
\boldsymbol{\xi}_{G_k}^{(g)}
\varepsilon^{(g)}
\mid
\boldsymbol{\sigma}(\mathcal{F}^{(g-1)})
\right\}
\right\|_2
\nonumber\\
&\qquad\leq
c_{\varepsilon}^2
\sum_{g=1}^{nT}
\left\|
\left(\boldsymbol{\xi}_{G_k}^{(g)}\right)^T
\boldsymbol{\xi}_{G_k}^{(g)}
\right\|_2
\nonumber\\
&\qquad\leq
nTc_{\varepsilon}^2c_m^2L,
\end{align}
and
\begin{align}
&
\left\|
\sum_{g=1}^{nT}
\mathbb{E}
\left\{
\boldsymbol{\xi}_{G_k}^{(g)}
\varepsilon^{(g)}
\left(
\boldsymbol{\xi}_{G_k}^{(g)}
\varepsilon^{(g)}
\right)^T
\mid
\boldsymbol{\sigma}(\mathcal{F}^{(g-1)})
\right\}
\right\|_2
\nonumber\\
&\qquad\leq
c_{\varepsilon}^2
\sum_{g=1}^{nT}
\left\|
\boldsymbol{\xi}_{G_k}^{(g)}
\left(\boldsymbol{\xi}_{G_k}^{(g)}\right)^T
\right\|_2
\nonumber\\
&\qquad\leq
nTc_{\varepsilon}^2c_m^2L.
\end{align}

By the matrix Freedman inequality
\citep[Corollary~1.3]{Tropp2011}, we have
\begin{align}
&P\left(
\left\|
\sum_{i,t}
\boldsymbol{\xi}_{G_k,i,t}
\varepsilon_{i,t}
\right\|_2
\geq
nT\widetilde{\lambda}
\right)
\leq
(L+1)
\exp
\left\{
-\frac{nT\widetilde{\lambda}^2/2}
{
c_{\varepsilon}^2c_m^2L
+
c_{\varepsilon}c_m\sqrt{L}
\widetilde{\lambda}/3
}
\right\}.
\label{azuma_entry}
\end{align}
Let
\[
\widetilde{\lambda}
=
c_{\lambda}\sqrt{L}
\frac{\log(nT\vee d)}{\sqrt{nT}},
\]
where $c_{\lambda}>0$ is a sufficiently large constant. By
\eqref{azuma_entry},
\begin{align}
&P\left(
\left\|
\sum_{i,t}
\boldsymbol{\xi}_{G_k,i,t}
\varepsilon_{i,t}
\right\|_2
\geq
nT\widetilde{\lambda}
\right)
\leq
(L+1)
\exp\left[
-\frac{
c_{\lambda}^2L\{\log(nT\vee d)\}^2/2
}{
c_{\varepsilon}^2c_m^2L
+
c_{\varepsilon}c_mc_{\lambda}
L\log(nT\vee d)/(3\sqrt{nT})
}
\right].
\end{align}
Using
\[
\frac{x}{a+b}
\geq
\frac{1}{2}
\min\left\{
\frac{x}{a},
\frac{x}{b}
\right\},
\]
the exponent in the preceding display is bounded above by
\[
-\min\left\{
\frac{c_{\lambda}^2\{\log(nT\vee d)\}^2}
{4c_{\varepsilon}^2c_m^2},
\frac{3c_{\lambda}\sqrt{nT}\log(nT\vee d)}
{4c_{\varepsilon}c_m}
\right\}.
\]
Since $L\asymp(nT)^{1/(2m-1)}$, we have
\[
\log(L+1)
\leq
C_3\log(nT\vee d)
\]
for some constant $C_3>0$. Therefore, by choosing $c_{\lambda}$
sufficiently large, there exists a constant $C_4>2$ such that
\begin{equation}\label{azuma_entry2}
P\left(
\left\|
\sum_{i,t}
\boldsymbol{\xi}_{G_k,i,t}
\varepsilon_{i,t}
\right\|_2
\geq
nT\widetilde{\lambda}
\right)
\leq
\exp\left\{
-C_4\log(nT\vee d)
\right\}.
\end{equation}

With $L\asymp(nT)^{1/(2m-1)}$, we have
\[
L^{1/2-m}
\asymp
\frac{1}{\sqrt{nT}}.
\]
Therefore,
\[
2d_0c_mC_2L^{1/2-m}
\asymp
\frac{d_0}{\sqrt{nT}}.
\]
Under the condition $d_0=o(\sqrt{L})$,
\[
2d_0c_mC_2L^{1/2-m}
=
o\left\{
\sqrt{L}
\frac{\log(nT\vee d)}{\sqrt{nT}}
\right\}.
\]
Thus, for all sufficiently large $nT$,
\begin{equation}
2d_0c_mC_2L^{1/2-m}
\leq
\sqrt{L}
\frac{\log(nT\vee d)}{\sqrt{nT}}.
\end{equation}
Recalling that
\[
\widetilde{\lambda}
=
\lambda
-
2d_0c_mC_2L^{1/2-m},
\]
we may take
\[
\lambda
=
c_1\sqrt{L}
\frac{\log(nT\vee d)}{\sqrt{nT}},
\qquad
c_1=c_{\lambda}+1.
\]

Finally, taking the finite union of \eqref{azuma_entry2} over all
$k=1,\dots,2d$, we obtain
\begin{equation}
P\left\{
\max_{k=1,\dots,2d}
\left\|
\frac{1}{nT}
\sum_{i,t}
\boldsymbol{\xi}_{G_k,i,t}
(\varepsilon_{i,t}-e_{i,t})
\right\|_2
\leq
c_1\sqrt{L}
\frac{\log(nT\vee d)}{\sqrt{nT}}
\right\}
\geq
1-\exp\left\{
-c_2\log(nT\vee d)
\right\}.
\end{equation}

Note that $\|\boldsymbol{\xi}_{G_0,i,t}\|_2=1.$ Using the same argument
as above, we have
\begin{equation}
P\left\{
\left\|
\frac{1}{nT}
\sum_{i,t}
\boldsymbol{\xi}_{G_0,i,t}
(\varepsilon_{i,t}-e_{i,t})
\right\|_2
\leq
c_1
\frac{\log(nT\vee d)}{\sqrt{nT}}
\right\}
\geq
1-\exp\left\{
-c_2\log(nT\vee d)
\right\}.
\end{equation}
Therefore, there exists a good event that,
\[
P(\mathcal{G}_1)
\geq
1-\exp\{-c_2\log(nT\vee d)\}.
\]
On the event $\mathcal{G}_1$,
$(\boldsymbol{\alpha}^{*},\boldsymbol{\beta}^{*})_{\oplus}$
satisfies all the constraints in \eqref{dant_est} and is therefore
feasible. This completes the proof.
\end{proof}

\begin{lemma}
\label{Asy_Sigma_infty}
There exist positive constants $c_3$ and $c_4$ and a good event
$\mathcal{G}_2$ satisfying
\[
P(\mathcal{G}_2)
\geq
1-\exp\left\{-c_4\log(nT\vee d)\right\}
\]
such that, on $\mathcal{G}_2$,
\[
\left\|
\widehat{\boldsymbol{\Sigma}}
-
\boldsymbol{\Sigma}
\right\|_{\max}
\leq
c_3L\log(nT\vee d)
\sqrt{
\frac{\log(nT\vee d)}{nT}
}.
\]
\end{lemma}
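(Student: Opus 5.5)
Here $\widehat{\boldsymbol{\Sigma}}=(nT)^{-1}\sum_{i,t}\boldsymbol{\xi}_{i,t}(\boldsymbol{\xi}_{i,t}-\gamma\boldsymbol{U}_{i,t+1})^\top$ and $\boldsymbol{\Sigma}$ is its expectation, as in the proof of Lemma~\ref{eigen_sigma}. I would control the entries one at a time and then take a union bound over the $O(d^2L^2)$ index pairs. The union bound costs only a factor $\log(dL)\lesssim\log(nT\vee d)$ in the exponent. By Remark~\ref{basis_bound}, every entry of $\boldsymbol{\xi}_{i,t}$ and $\boldsymbol{U}_{i,t+1}$ is at most $\sqrt L$ in absolute value. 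Hence each summand
\[
Z_{i,t}^{(j,l)}=\xi_{i,t,j}\bigl(\xi_{i,t,l}-\gamma U_{i,t+1,l}\bigr)
\]
is bounded by $2L$. This explains the factor $L$ in the claimed rate.

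\textbf{Martingale part.} I would split each entry as
\[
Z^{(j,l)}_{i,t}=\xi_{i,t,j}\bigl(\xi_{i,t,l}-\gamma u_l(X_{i,t},A_{i,t})\bigr)+\gamma\,\xi_{i,t,j}\bigl(u_l(X_{i,t},A_{i,t})-U_{i,t+1,l}\bigr)=:W_{i,t}+M_{i,t}.
\]
The term $M_{i,t}$ is a martingale difference with respect to the filtration $\mathcal F^{(g)}$ built in the proof of Lemma~\ref{feasiable_dantzig}, and it is bounded by $2L$. The scalar Freedman/Azuma inequality then gives
\[
\Bigl|(nT)^{-1}\textstyle\sum_{i,t} M_{i,t}\Bigr|\lesssim L\sqrt{\log(nT\vee d)/(nT)}
\]
with probability at least $1-\exp\{-C\log(nT\vee d)\}$, after choosing constants and taking the union bound.

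\textbf{Mixing part.} The term $W_{i,t}=f(X_{i,t},A_{i,t})$ is a bounded function of the state--action chain, with $|f|\le 2L$. Its centered sum is where Assumption~\ref{a3} enters. Across trajectories the $n$ blocks are i.i.d. Within a trajectory, I would use the standard blocking/coupling argument. Split $\{0,\dots,T-1\}$ into alternating blocks of length $q\asymp\log(nT\vee d)/|\log\rho|$. By Berbee's coupling lemma, replace the odd (respectively even) blocks by independent copies, at total-variation cost at most $nT\,\beta(q)\le nT C_\beta\rho^{q}$. With $q$ chosen as above, this cost is $\le\exp\{-C\log(nT\vee d)\}$. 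Each block sum is bounded by $2Lq$, and there are about $nT/q$ independent blocks. Hoeffding's inequality then gives
\[
\Bigl|(nT)^{-1}\textstyle\sum (W-\mathbb E W)\Bigr|\lesssim Lq\sqrt{\frac{\log(nT\vee d)}{nT/q}}\cdot\frac{1}{q}\sqrt{q}\cdot\sqrt q .
\]
Crudely bounding this yields $L\,q\sqrt{\log(nT\vee d)/(nT)}\asymp L\log(nT\vee d)\sqrt{\log(nT\vee d)/(nT)}$, which is exactly the stated rate; the extra $\log$ factor comes from the block length.

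\textbf{Main obstacle.} Care is needed with non-stationarity. The chain starts from $\mu_0$, not $\mu$, so $\mathbb E W_{i,t}$ depends on $t$. Since $\boldsymbol{\Sigma}$ is defined with the average $\bar\mu$, the centering is still exact. The mixing coefficient is for the (possibly non-stationary) chain, and the action $A_{i,t}$ depends only on $X_{i,t}$ through $b$, so $(X_{i,t},A_{i,t})$ inherits the same $\beta$-mixing rate. The delicate part is verifying the coupling step and the block Hoeffding bound in this non-stationary setting. Finally, I would combine the two parts, take the union over all $(j,l)$, and define $\mathcal G_2$ as the intersection of the good events.
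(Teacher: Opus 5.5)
Your argument is correct, and it is organized somewhat differently from the paper's. Your $M_{i,t}$ is exactly the paper's $\Theta_{i,t}-\Theta^*_{i,t}$, and your $W_{i,t}$ is $\Theta^*_{i,t}=\mathbb{E}(\Theta_{i,t}\mid X_{i,t},A_{i,t})$. The differences are in how $W_{i,t}$ and the trajectories are handled.

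\textbf{The $W$ term and mixing.} The paper splits $\Theta^*_{i,t}$ further into $(\Theta^*_{i,t}-\Theta^{**}_{i,t})+(\Theta^{**}_{i,t}-\mathbb{E}\Theta_{i,t})$. The action randomness is then a second martingale difference, and blocking is applied only to a function of $X_{i,t}$, which is exactly what Assumption~\ref{a3} covers. Keeping the action inside $W_{i,t}=f(X_{i,t},A_{i,t})$, as you do, requires the state--action chain to be $\beta$-mixing. That is true, but your stated reason (that $A_{i,t}$ depends on $X_{i,t}$ only through $b$) is not the real point, since $A_{i,t}$ also drives $X_{i,t+1}$. The clean justification uses that $(X_t,A_t)$ is itself Markov. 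The conditional law of $(X_{t+k},A_{t+k})$ given $(X_t,A_t)$ and its marginal share the same kernel $b$ for the action, so their total-variation distance is that of $X_{t+k}$ alone. Hence $\beta_{(X,A)}(k)\le\beta_X(k-1)$. Alternatively, peel off $W_{i,t}-\mathbb{E}(W_{i,t}\mid X_{i,t})$ as a further martingale difference, which is the paper's middle term.

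\textbf{Handling the trajectories.} The paper first proves a within-trajectory tail bound, then truncates and applies Hoeffding across the $n$ trajectories, treating bounded $T$ separately. This two-stage scheme is where its $\{\log(nT\vee d)\}^{3/2}$ comes from. You instead run Azuma over the concatenated filtration and pool all coupled blocks from all trajectories into one independent family. The coupling is done once on the state--action process, so it is shared by all entries. This avoids truncation and covers short trajectories automatically: if $T<q$, each trajectory is a single block and no coupling is needed. It is essentially the scalar version of the paper's own argument for Lemma~\ref{Asy_Sigma_spectral}.

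\textbf{Fix the Hoeffding display.} The display in your mixing step is garbled. The expression you write equals $Lq^{3/2}\sqrt{\log(nT\vee d)/(nT)}$, and replacing it by $Lq\sqrt{\log(nT\vee d)/(nT)}$ is not an upper bound. Done correctly: there are at most $2nT/q$ (or $n$, if $T<q$) independent centered block sums, each bounded by $4Lq$ (respectively $4LT$). The deviation is then of order $L\sqrt{q\,nT\log(nT\vee d)}$ (respectively $LT\sqrt{n\log(nT\vee d)}$). After dividing by $nT$, both are at most $L\sqrt{q\log(nT\vee d)/(nT)}\asymp L\log(nT\vee d)/\sqrt{nT}$. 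This is smaller than the lemma's rate by a factor $\sqrt{\log(nT\vee d)}$, so your conclusion holds with room to spare, and your pooled route is slightly sharper than the paper's.
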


\begin{proof}
Throughout this proof, $C_1,C_2,\ldots$ denote positive constants
that are local to the present proof and are re-indexed from $C_1$.
They are distinct from the lowercase constants
$c_1,c_2,\ldots$ appearing in the statements of the results.

The main argument of this proof follows from the proof of Lemma~3 in
\cite{shi2022statistical}. The major difference is the matrix norm considered
here.

Let $\Theta(l,k)$ be the $(l,k)$th entry of
\[
\widehat{\boldsymbol{\Sigma}}
=
\frac{1}{nT}
\sum_{i=1}^n\sum_{t=0}^{T-1}
\boldsymbol{\xi}_{i,t}
\left(
\boldsymbol{\xi}_{i,t}
-
\gamma\boldsymbol{U}_{i,t+1}
\right)^\top,
\]
and let $\Theta_{i,t}(l,k)$ be the corresponding entry of
$\boldsymbol{\xi}_{i,t}
(\boldsymbol{\xi}_{i,t}-\gamma\boldsymbol{U}_{i,t+1})^\top$.
The following argument holds for every pair $(l,k)$ and does not depend on
the specific values of $(l,k)$. Thus, we suppress $(l,k)$ and write
$\Theta$ and $\Theta_{i,t}$ for simplicity.

For $1\leq l,k\leq 2(d+1)L$, define
$\Theta_i=T^{-1}\sum_{t=0}^{T-1}\Theta_{i,t}$. We have
\[
|\Theta_{i,t}|
\leq
\|\boldsymbol{\xi}_{i,t}\|_\infty
\left(
\|\boldsymbol{\xi}_{i,t}\|_\infty
+
\|\boldsymbol{U}_{i,t+1}\|_\infty
\right)
\leq
2\|\boldsymbol{\Phi}(X_{i,t})\|_\infty^2
\leq 2L.
\]
Consequently,
\begin{equation}
\label{theta_uni_bound}
\left|
\Theta_{i,t}-\mathbb{E}\Theta_{i,t}
\right|
\leq 4L.
\end{equation}
We divide the proof into two scenarios: (i) $T$ is bounded and
(ii) $T$ diverges.

We first consider the case where $T$ is bounded. By
\eqref{theta_uni_bound},
\[
\left|
\Theta_i-\mathbb{E}\Theta_i
\right|
\leq
\frac{1}{T}
\sum_{t=0}^{T-1}
\left|
\Theta_{i,t}-\mathbb{E}\Theta_{i,t}
\right|
\leq 4L.
\]
Since $\Theta_1,\ldots,\Theta_n$ are independent, Hoeffding's inequality
gives
\[
\Pr\left(
\left|
\frac{1}{n}\sum_{i=1}^n
\{\Theta_i-\mathbb{E}\Theta_i\}
\right|>\tau
\right)
\leq
2\exp\left\{
-\frac{n\tau^2}{32L^2}
\right\}.
\]
Taking $\tau=C_1L\sqrt{\log(nT\vee d)/n}$, we obtain
\[
\left|
\Theta-\mathbb{E}\Theta
\right|
\leq
C_1L\sqrt{\frac{\log(nT\vee d)}{n}}
\]
with probability at least
$1-\exp\{-C_2\log(nT\vee d)\}$. Since $T$ is bounded, the factor
$T^{1/2}$ can be absorbed into the constant $C_1$. Therefore,
\begin{equation}
\label{theta_bounded_T}
\left|
\Theta-\mathbb{E}\Theta
\right|
\leq
C_1L
\sqrt{
\frac{\log(nT\vee d)}{nT}
}
\end{equation}
with probability at least
$1-\exp\{-C_2\log(nT\vee d)\}$.

We next consider the case where $T$ diverges. We first derive a tighter
high-probability bound for
$\Theta_i-\mathbb{E}\Theta_i$ than the deterministic bound of order $L$.
Define
\[
\Theta_{0,t}^*
=
\mathbb{E}
\left[
\Theta_{0,t}\mid X_{0,t},A_{0,t}
\right],
\qquad
\Theta_{0,t}^{**}
=
\mathbb{E}
\left[
\Theta_{0,t}^*\mid X_{0,t}
\right].
\]
We use the decomposition
\begin{equation}
\label{theta_decomposition}
\Theta_{0,t}-\mathbb{E}\Theta_{0,t}
=
\left(
\Theta_{0,t}-\Theta_{0,t}^*
\right)
+
\left(
\Theta_{0,t}^*-\Theta_{0,t}^{**}
\right)
+
\left(
\Theta_{0,t}^{**}-\mathbb{E}\Theta_{0,t}
\right).
\end{equation}

Consider the first term in \eqref{theta_decomposition}. Let
\[
\mathcal{F}_t
=
\boldsymbol{\sigma}
\left\{
X_{0,0},A_{0,0},X_{0,1},\ldots,
X_{0,t},A_{0,t},X_{0,t+1}
\right\}.
\]
Then $\Theta_{0,t}-\Theta_{0,t}^*$ is $\mathcal{F}_t$-measurable and
\[
\mathbb{E}
\left[
\Theta_{0,t}-\Theta_{0,t}^*
\mid
\mathcal{F}_{t-1}
\right]
=
\mathbb{E}
\left[
\mathbb{E}
\left[
\Theta_{0,t}-\Theta_{0,t}^*
\mid
X_{0,t},A_{0,t}
\right]
\middle|
\mathcal{F}_{t-1}
\right]
=0.
\]
Thus, $\{\Theta_{0,t}-\Theta_{0,t}^*\}_{t\geq0}$ is a martingale
difference sequence with respect to $\{\mathcal{F}_t\}_{t\geq0}$.
Moreover, by \eqref{theta_uni_bound},
$|\Theta_{0,t}-\Theta_{0,t}^*|\leq 4L$. Azuma's inequality therefore
gives
\begin{equation}
\label{theta_martingale_bound}
\Pr\left(
\left|
\sum_{t=0}^{T-1}
\left(
\Theta_{0,t}-\Theta_{0,t}^*
\right)
\right|>\tau
\right)
\leq
2\exp\left\{
-\frac{\tau^2}{32L^2T}
\right\}.
\end{equation}

Consider the second term in \eqref{theta_decomposition}. Since the
behavior policy is a fixed stationary Markov policy and \(X_{0,t}\) is
\(\mathcal F_{t-1}\)-measurable,
\[
\begin{aligned}
\mathbb{E}
\left[
\Theta_{0,t}^*-\Theta_{0,t}^{**}
\mid \mathcal F_{t-1}
\right]
&=
\mathbb{E}
\left[
\Theta_{0,t}^*
\mid \mathcal F_{t-1}
\right]
-
\Theta_{0,t}^{**} \\
&=
\mathbb{E}
\left[
\Theta_{0,t}^*
\mid X_{0,t}
\right]
-
\Theta_{0,t}^{**}
=0.
\end{aligned}
\]
Therefore,
\(\{\Theta_{0,t}^*-\Theta_{0,t}^{**}\}_{t\geq0}\) is also a
martingale difference sequence with respect to
\(\{\mathcal F_t\}_{t\geq0}\). Moreover,
\(
|\Theta_{0,t}^*-\Theta_{0,t}^{**}|\leq 4L
\).
By Azuma's inequality,
\begin{equation}
\label{theta_action_bound}
\Pr\left(
\left|
\sum_{t=0}^{T-1}
\left(
\Theta_{0,t}^*-\Theta_{0,t}^{**}
\right)
\right|>\tau
\right)
\leq
2\exp\left\{
-\frac{\tau^2}{32L^2T}
\right\}.
\end{equation}
We now consider the last term in \eqref{theta_decomposition}. Let
$\widetilde{\Theta}_{0,t}
=
\Theta_{0,t}^{**}-\mathbb{E}\Theta_{0,t}$.
Then $\widetilde{\Theta}_{0,t}$ is a deterministic function of
$X_{0,t}$ and $|\widetilde{\Theta}_{0,t}|\leq 4L$. By Assumption~(A4),
the Markov chain $\{X_{0,t}\}_{t\geq0}$ is exponentially
$\beta$-mixing, so that $\beta(q)\leq C_\beta\rho^q$ for some
$\rho\in(0,1)$.

Following the proof of Theorem~4.2 in \cite{CHEN2015447}, we extend the
probability space and construct coupled blocks using Berbee's lemma.
Let $m=\lfloor T/q\rfloor$ and define
\[
Y_k
=
\left\{
X_{0,(k-1)q},\ldots,X_{0,kq-1}
\right\},
\qquad
k=1,\ldots,m.
\]
There exist coupled blocks $Y_k^*$ such that $Y_k^*$ and $Y_k$ are
identically distributed,
$\Pr(Y_k\neq Y_k^*)\leq\beta(q)$, the odd-numbered blocks
$Y_1^*,Y_3^*,\ldots$ are independent, and the even-numbered blocks
$Y_2^*,Y_4^*,\ldots$ are independent.

Let $X_{0,t}^*$ denote the coupled process within these blocks and define
$\widetilde{\Theta}_{0,t}^*
=
\widetilde{\Theta}_{0,t}(X_{0,t}^*)$.
Let $I_o$ and $I_e$ be the sets of time indices belonging to the odd-
and even-numbered complete blocks, respectively, and let
$I_r=\{mq,\ldots,T-1\}$ be the set of remaining indices. Then
$\{0,\ldots,T-1\}=I_o\cup I_e\cup I_r$ and $|I_r|<q$.

By the triangle inequality,
\begin{align}
&\Pr\left(
\left|
\sum_{t=0}^{T-1}
\widetilde{\Theta}_{0,t}
\right|>4\tau
\right)
\nonumber\\
&\quad\leq
\Pr\left(
\left|
\sum_{t\in I_r}
\widetilde{\Theta}_{0,t}
\right|>\tau
\right)
+
\Pr\left(
\left|
\sum_{t\in I_o}
\widetilde{\Theta}_{0,t}^*
\right|>\tau
\right)
+
\Pr\left(
\left|
\sum_{t\in I_e}
\widetilde{\Theta}_{0,t}^*
\right|>\tau
\right)
\nonumber\\
&\qquad+
\Pr\left(
\sum_{k=1}^m
\boldsymbol{1}\{Y_k\neq Y_k^*\}>0
\right).
\label{theta_block_decomposition}
\end{align}

When $\tau>4qL$, the first probability on the right-hand side of
\eqref{theta_block_decomposition} is zero. For the odd-numbered blocks,
the block sums
\[
\sum_{j=0}^{q-1}
\widetilde{\Theta}_{0,(k-1)q+j}^*,
\qquad k=1,3,\ldots,
\]
are independent, mean-zero random variables bounded in absolute value
by $4qL$. Hence, Hoeffding's inequality gives
\[
\Pr\left(
\left|
\sum_{t\in I_o}
\widetilde{\Theta}_{0,t}^*
\right|>\tau
\right)
\leq
2\exp\left\{
-\frac{C_3\tau^2}{qL^2T}
\right\}.
\]
The same argument applies to the even-numbered blocks. Moreover, by the
union bound and Berbee's coupling,
\[
\Pr\left(
\sum_{k=1}^m
\boldsymbol{1}\{Y_k\neq Y_k^*\}>0
\right)
\leq
m\beta(q)
\leq
\frac{T}{q}C_\beta\rho^q.
\]
Combining these results, we obtain
\begin{equation}
\label{theta_mixing_bound}
\Pr\left(
\left|
\sum_{t=0}^{T-1}
\widetilde{\Theta}_{0,t}
\right|>4\tau
\right)
\leq
4\exp\left\{
-\frac{C_4\tau^2}{qL^2T}
\right\}
+
\frac{T}{q}C_\beta\rho^q.
\end{equation}

Combining \eqref{theta_martingale_bound},
\eqref{theta_action_bound}, and \eqref{theta_mixing_bound}, we have
\begin{align}
&\Pr\left(
\left|
\sum_{t=0}^{T-1}
\left(
\Theta_{0,t}-\mathbb{E}\Theta_{0,t}
\right)
\right|>6\tau
\right)
\nonumber\\
&\quad\leq
4\exp\left\{
-\frac{C_5\tau^2}{L^2T}
\right\}
+
4\exp\left\{
-\frac{C_6\tau^2}{qL^2T}
\right\}
+
\frac{T}{q}C_\beta\rho^q
\nonumber\\
&\quad\leq
8\exp\left\{
-\frac{C_7\tau^2}{qL^2T}
\right\}
+
\frac{T}{q}C_\beta\rho^q.
\label{theta_trajectory_tail}
\end{align}

Set
\[
q
=
\left\lceil
-\frac{5\log(nT\vee d)}{\log\rho}
\right\rceil
\]
and
$\tau=\max\{C_1L\log(nT\vee d)\sqrt{T},\,5qL\}$.
When $T$ is sufficiently large,
$\tau=C_1L\log(nT\vee d)\sqrt{T}$.
Substituting these choices into \eqref{theta_trajectory_tail} and
absorbing fixed constants into $C_1$ gives
\begin{equation}
\label{theta_single_trajectory}
\Pr\left(
\left|
\sum_{t=0}^{T-1}
\left(
\Theta_{0,t}-\mathbb{E}\Theta_{0,t}
\right)
\right|
>
C_1L\log(nT\vee d)\sqrt{T}
\right)
\leq
\exp\left\{
-C_8\log(nT\vee d)
\right\}.
\end{equation}

Since
$\{R_{i,t},X_{i,t},A_{i,t}\}_{t\geq0}$,
$i=1,\ldots,n$, are independent copies of
$\{R_{0,t},X_{0,t},A_{0,t}\}_{t\geq0}$, the bound
\eqref{theta_single_trajectory} holds for every $i$. Define
\[
Z_i
=
\sum_{t=0}^{T-1}
\left(
\Theta_{i,t}-\mathbb{E}\Theta_{i,t}
\right),
\qquad
B
=
C_1L\log(nT\vee d)\sqrt{T}.
\]
Then $Z_1,\ldots,Z_n$ are independent and mean zero, and
$\Pr(|Z_i|>B)\leq\exp\{-C_8\log(nT\vee d)\}$. Define
\[
\overline{Z}_i
=
Z_i\boldsymbol{1}\{|Z_i|\leq B\}
-
\mathbb{E}
\left[
Z_i\boldsymbol{1}\{|Z_i|\leq B\}
\right].
\]
The variables $\overline{Z}_1,\ldots,\overline{Z}_n$ are independent,
mean zero, and satisfy $|\overline{Z}_i|\leq 2B$. Therefore,
Hoeffding's inequality gives
\begin{equation}
\label{theta_truncated_hoeffding}
\Pr\left(
\left|
\sum_{i=1}^n\overline{Z}_i
\right|>x
\right)
\leq
2\exp\left\{
-\frac{x^2}{8nB^2}
\right\}.
\end{equation}

Since $\mathbb{E}Z_i=0$ and $|Z_i|\leq 4LT$,
\[
\left|
\mathbb{E}
\left[
Z_i\boldsymbol{1}\{|Z_i|\leq B\}
\right]
\right|
=
\left|
\mathbb{E}
\left[
Z_i\boldsymbol{1}\{|Z_i|>B\}
\right]
\right|
\leq
4LT\exp\left\{
-C_8\log(nT\vee d)
\right\}.
\]
Consequently, after adjusting the constants, the total truncation bias is
smaller than

\noindent $C_1L\{\log(nT\vee d)\}^{3/2}\sqrt{nT}$.

Taking
$x=c_3L\{\log(nT\vee d)\}^{3/2}\sqrt{nT}$ in
\eqref{theta_truncated_hoeffding}, and using
\[
\Pr\left(
\max_{1\leq i\leq n}|Z_i|>B
\right)
\leq
n\exp\left\{
-C_8\log(nT\vee d)
\right\},
\]
we obtain
\begin{align}
&\Pr\left(
\left|
\sum_{i=1}^n\sum_{t=0}^{T-1}
\left(
\Theta_{i,t}-\mathbb{E}\Theta_{i,t}
\right)
\right|
>
c_3
L\{\log(nT\vee d)\}^{3/2}\sqrt{nT}
\right)
\nonumber\\
&\quad\leq
\exp\left\{
-(c_3/C_1)^2\log(nT\vee d)/8
\right\}.
\label{theta_all_trajectories}
\end{align}
Dividing both sides of the event in
\eqref{theta_all_trajectories} by $nT$ gives
\begin{equation}
\label{theta_entry_bound}
\Pr\left(
\left|
\Theta-\mathbb{E}\Theta
\right|
<
c_3L\log(nT\vee d)
\sqrt{
\frac{\log(nT\vee d)}{nT}
}
\right)
\geq
1-
\exp\left\{
-C_9\log(nT\vee d)
\right\}.
\end{equation}

Finally, define
\[
\mathcal{G}_2
=
\left\{
\left\|
\widehat{\boldsymbol{\Sigma}}
-
\boldsymbol{\Sigma}
\right\|_{\max}
<
c_3L\log(nT\vee d)
\sqrt{
\frac{\log(nT\vee d)}{nT}
}
\right\}.
\]
Taking a finite union over all entries of the matrix and combining
\eqref{theta_bounded_T} and \eqref{theta_entry_bound}, we obtain
\[
P(\mathcal{G}_2)
\geq
1-
\{2(d+1)L\}^2
\exp\left\{
-\min(C_2,C_9)
\log(nT\vee d)
\right\}
\geq
1-
\exp\left\{
-c_4\log(nT\vee d)
\right\}.
\]
This completes the proof.
\end{proof}

\begin{lemma}[Matrix concentration inequalities]
\label{lem:matrix_concentration}
Let \(p_1,p_2\geq1\) and write \(D=p_1+p_2\).

\begin{itemize}
\item
Let \(\{\boldsymbol{Z}_j\}_{j=1}^m\) be independent, mean-zero
random matrices of dimension \(p_1\times p_2\). Suppose that
\(\|\boldsymbol{Z}_j\|_2\leq R\) almost surely, and define
\[
\nu^2
=
\max\left\{
\left\|
\sum_{j=1}^m
\mathbb{E}
\left(
\boldsymbol{Z}_j\boldsymbol{Z}_j^\top
\right)
\right\|_2,
\left\|
\sum_{j=1}^m
\mathbb{E}
\left(
\boldsymbol{Z}_j^\top\boldsymbol{Z}_j
\right)
\right\|_2
\right\}.
\]
Then, for any \(u>0\),
\[
\Pr\left[
\left\|
\sum_{j=1}^m\boldsymbol{Z}_j
\right\|_2
>
\sqrt{2\nu^2\{u+\log D\}}
+
\frac{2R}{3}\{u+\log D\}
\right]
\leq
\exp(-u).
\]
In particular, if \(\nu^2\leq mR^2\), then
\[
\Pr\left[
\left\|
\sum_{j=1}^m\boldsymbol{Z}_j
\right\|_2
>
R\sqrt{2m\{u+\log D\}}
+
\frac{2R}{3}\{u+\log D\}
\right]
\leq
\exp(-u).
\]

\item Let \(\{\boldsymbol{X}_j,\mathcal{F}_j\}_{j=1}^m\) be an adapted
sequence of random matrices of dimension \(p_1\times p_2\) satisfying
\(\mathbb{E}(\boldsymbol{X}_j\mid\mathcal{F}_{j-1})
=\boldsymbol{0}\). Suppose that
\(\|\boldsymbol{X}_j\|_2\leq R_j\) almost surely, where
\(R_1,\ldots,R_m\) are deterministic. Then, for any \(u>0\),
\[
\Pr\left[
\left\|
\sum_{j=1}^m\boldsymbol{X}_j
\right\|_2
>
\sqrt{
8\{u+\log D\}
\sum_{j=1}^m R_j^2
}
\right]
\leq
\exp(-u).
\]
In particular, if \(R_j\leq R\) for all \(j\), then
\[
\Pr\left[
\left\|
\sum_{j=1}^m\boldsymbol{X}_j
\right\|_2
>
R\sqrt{8m\{u+\log D\}}
\right]
\leq
\exp(-u).
\]
\end{itemize}
\end{lemma}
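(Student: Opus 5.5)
The plan is to reduce both parts to the self-adjoint matrix inequalities of \citet{Tropp2011} by Hermitian dilation. For a $p_1\times p_2$ matrix $\boldsymbol{M}$, define the $D\times D$ symmetric matrix
\[
\mathscr{S}(\boldsymbol{M})=\begin{pmatrix}\boldsymbol{0} & \boldsymbol{M}\\ \boldsymbol{M}^\top & \boldsymbol{0}\end{pmatrix}.
\]
I will use three standard facts about it.
\begin{itemize}
\item The map $\mathscr S$ is linear, so it preserves mean-zero and martingale-difference properties.
\item It preserves the norm: $\lambda_{\max}\{\mathscr S(\boldsymbol M)\}=\|\mathscr S(\boldsymbol M)\|_2=\|\boldsymbol M\|_2$.
\item Its square is block diagonal: $\mathscr S(\boldsymbol M)^2=\mathrm{diag}(\boldsymbol M\boldsymbol M^\top,\boldsymbol M^\top\boldsymbol M)$.
\end{itemize}
Consequently $\|\sum_j\boldsymbol Z_j\|_2=\lambda_{\max}\{\sum_j\mathscr S(\boldsymbol Z_j)\}$. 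It therefore suffices to bound the upper tail of the largest eigenvalue of a sum of $D\times D$ symmetric matrices. This is why the dimensional factor is $D=p_1+p_2$.

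\textbf{First part.} The dilated summands $\mathscr S(\boldsymbol Z_j)$ are independent, symmetric and mean zero, with $\lambda_{\max}\{\mathscr S(\boldsymbol Z_j)\}\leq R$. By the block-diagonal square,
\[
\Big\|\sum_j\mathbb E\,\mathscr S(\boldsymbol Z_j)^2\Big\|_2=\nu^2 .
\]
Tropp's matrix Bernstein inequality then gives
\[
\Pr\Big[\lambda_{\max}\Big\{\sum_j\mathscr S(\boldsymbol Z_j)\Big\}\geq s\Big]\leq D\exp\Big\{-\frac{s^2/2}{\nu^2+Rs/3}\Big\}.
\]
Write $a=u+\log D$. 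It remains to check that $s=\sqrt{2\nu^2a}+2Ra/3$ makes the exponent at least $a$, i.e.\ that $s^2/2\geq a\nu^2+aRs/3$. Expanding both sides:
\begin{itemize}
\item the left side is $a\nu^2+\tfrac23Ra\sqrt{2\nu^2a}+\tfrac29R^2a^2$;
\item the right side is $a\nu^2+\tfrac13Ra\sqrt{2\nu^2a}+\tfrac29R^2a^2$.
\end{itemize}
So the inequality holds, and the tail bound becomes $D\exp(-a)=\exp(-u)$. The special case follows by substituting $\nu^2\leq mR^2$.

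\textbf{Second part.} The dilations $\mathscr S(\boldsymbol X_j)$ form a symmetric martingale-difference sequence with respect to $\{\mathcal F_j\}$. They satisfy $\mathscr S(\boldsymbol X_j)^2\preceq\|\boldsymbol X_j\|_2^2\boldsymbol I\preceq R_j^2\boldsymbol I$, where $R_j^2\boldsymbol I$ is deterministic. Tropp's matrix Azuma inequality therefore applies with $\sigma^2=\|\sum_jR_j^2\boldsymbol I\|_2=\sum_jR_j^2$, giving the bound $D\exp\{-s^2/(8\sigma^2)\}$. Taking $s=\sqrt{8a\sum_jR_j^2}$ with $a=u+\log D$ again yields $\exp(-u)$, and the special case $R_j\le R$ is immediate.

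\textbf{Main obstacle.} The only delicate step is confirming that the cited self-adjoint results are stated with the trace-dimension factor equal to the dimension $D$ of the dilated matrices. One must also confirm that they apply to the one-sided $\lambda_{\max}$ tail, so that no extra factor of $2$ enters and the constants match exactly. The algebra for $s$ in the Bernstein case is the only calculation requiring care, and it was verified above.
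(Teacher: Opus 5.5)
Your proposal is correct and follows the paper's proof essentially step for step. Both reduce to $D\times D$ self-adjoint sums by Hermitian dilation, apply matrix Bernstein with the same threshold $\sqrt{2\nu^2 a}+2Ra/3$, and verify the same inequality (the paper checks $t^2-2x(\nu^2+Rt/3)=\tfrac{2R}{3}x\sqrt{2\nu^2x}\geq 0$, which is twice your difference). Both then apply matrix Azuma with dominating matrices $R_j\boldsymbol I$ and variance $\sum_j R_j^2$ for the martingale part. The only cosmetic difference is that the paper cites the rectangular Bernstein bound directly, whereas you apply the self-adjoint version to the dilation; the two arguments are the same.
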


\begin{proof}
For a \(p_1\times p_2\) matrix \(\boldsymbol{M}\), define its
self-adjoint dilation as
\[
\mathcal{D}(\boldsymbol{M})
=
\begin{pmatrix}
\boldsymbol{0} & \boldsymbol{M}\\
\boldsymbol{M}^\top & \boldsymbol{0}
\end{pmatrix}.
\]
The dilation satisfies
\[
\lambda_{\max}\{\mathcal{D}(\boldsymbol{M})\}
=
\|\boldsymbol{M}\|_2
\]
and
\[
\mathcal{D}(\boldsymbol{M})^2
=
\begin{pmatrix}
\boldsymbol{M}\boldsymbol{M}^\top & \boldsymbol{0}\\
\boldsymbol{0} & \boldsymbol{M}^\top\boldsymbol{M}
\end{pmatrix}.
\]

For part (i), the rectangular matrix Bernstein inequality
\citep[Theorem~1.6]{tropp2012user} gives, for every \(t>0\),
\[
\Pr\left(
\left\|
\sum_{j=1}^m\boldsymbol{Z}_j
\right\|_2
>t
\right)
\leq
D\exp\left\{
-\frac{t^2/2}{\nu^2+Rt/3}
\right\}.
\]
Let \(x=u+\log D\) and set
\[
t
=
\sqrt{2\nu^2x}
+
\frac{2R}{3}x.
\]
A direct calculation gives
\[
t^2
-
2x\left(
\nu^2+\frac{Rt}{3}
\right)
=
\frac{2R}{3}x\sqrt{2\nu^2x}
\geq0.
\]
It follows that
\[
\frac{t^2/2}{\nu^2+Rt/3}
\geq x.
\]
Therefore,
\[
\Pr\left(
\left\|
\sum_{j=1}^m\boldsymbol{Z}_j
\right\|_2
>t
\right)
\leq
D\exp(-x)
=
\exp(-u),
\]
which proves the first assertion. The second follows immediately from
\(\nu^2\leq mR^2\).

For part (ii), consider the self-adjoint matrix differences
\(\mathcal{D}(\boldsymbol{X}_j)\). They satisfy
\[
\mathbb{E}
\left[
\mathcal{D}(\boldsymbol{X}_j)
\mid\mathcal{F}_{j-1}
\right]
=
\boldsymbol{0}
\]
and
\[
\mathcal{D}(\boldsymbol{X}_j)^2
\preceq
R_j^2\boldsymbol{I}_D.
\]
The matrix Azuma inequality and its rectangular extension
\citep[Theorem~7.1 and Remark~7.3]{tropp2012user} therefore imply that
\[
\Pr\left(
\left\|
\sum_{j=1}^m\boldsymbol{X}_j
\right\|_2
>t
\right)
\leq
D\exp\left\{
-\frac{t^2}{
8\sum_{j=1}^mR_j^2
}
\right\}.
\]
Taking
\[
t
=
\sqrt{
8\{u+\log D\}
\sum_{j=1}^mR_j^2
}
\]
yields the desired probability bound. The final assertion follows by
using \(\sum_{j=1}^mR_j^2\leq mR^2\).
\end{proof}

\begin{lemma}
\label{Asy_Sigma_spectral}
There exist constants \(c_5>0\) and
\(c_6>0\) and a good event \(\mathcal G_3\) satisfying
\[
\Pr(\mathcal G_3)
\geq
1-\exp\{-c_6\log(nT\vee d)\}
\]
such that, on \(\mathcal G_3\),
\[
\left\|
\widehat{\boldsymbol{\Sigma}}
-
\boldsymbol{\Sigma}
\right\|_2
\leq
c_5dL
\left\{
\frac{\log(nT\vee d)}{\sqrt{nT}}
+
\frac{\{\log(nT\vee d)\}^2}{nT}
\right\}.
\]
\end{lemma}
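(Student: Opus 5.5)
The plan is to repeat the scheme of Lemma~\ref{Asy_Sigma_infty}, but to work with whole matrices and replace scalar Hoeffding/Azuma by the matrix inequalities of Lemma~\ref{lem:matrix_concentration}.

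\textbf{Setup and scales.} Write $\widehat{\boldsymbol\Sigma}-\boldsymbol\Sigma=(nT)^{-1}\sum_{i,t}(\boldsymbol\Theta_{i,t}-\mathbb E\boldsymbol\Theta_{i,t})$ with $\boldsymbol\Theta_{i,t}=\boldsymbol\xi_{i,t}(\boldsymbol\xi_{i,t}-\gamma\boldsymbol U_{i,t+1})^\top$. By Remark~\ref{basis_bound}, $\|\boldsymbol\xi_{i,t}\|_2^2\le 2+dc_m^2L$, and the same bound holds for $\|\boldsymbol U_{i,t+1}\|_2^2$. Hence $\|\boldsymbol\Theta_{i,t}\|_2\le C_1dL$ almost surely, and every centered piece below is bounded by $2C_1dL$. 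The matrix dimension is $D=4(dL+1)$, and since $L$ is polynomial in $nT$, $\log D\lesssim\log(nT\vee d)$. Throughout I take $u=c\log(nT\vee d)$.

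\textbf{Case 1: $T$ bounded.} The trajectory sums $\boldsymbol Z_i=\sum_t(\boldsymbol\Theta_{i,t}-\mathbb E\boldsymbol\Theta_{i,t})$ are independent, mean zero, and satisfy $\|\boldsymbol Z_i\|_2\le 2C_1dLT$. Part (i) of Lemma~\ref{lem:matrix_concentration} with $m=n$ gives
\[
\Big\|\sum_i\boldsymbol Z_i\Big\|_2\lesssim dLT\big\{\sqrt{n\log(nT\vee d)}+\log(nT\vee d)\big\}.
\]
Dividing by $nT$ and absorbing the bounded $T$ yields the claimed bound.

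\textbf{Case 2: $T$ diverging.} Use the three-term decomposition \eqref{theta_decomposition} at the matrix level, namely $\boldsymbol\Theta-\boldsymbol\Theta^*$, then $\boldsymbol\Theta^*-\boldsymbol\Theta^{**}$, then $\boldsymbol\Theta^{**}-\mathbb E\boldsymbol\Theta$.

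\emph{Martingale pieces.} The first two pieces are matrix martingale differences. I will use the concatenated filtration over $g=1,\dots,nT$ from the proof of Lemma~\ref{feasiable_dantzig}, refined so that $A_{i,t}$ is revealed after $X_{i,t}$; this also handles the second piece. Part (ii) of Lemma~\ref{lem:matrix_concentration} with $R=2C_1dL$ and $m=nT$ then gives a bound $dL\sqrt{nT\log(nT\vee d)}$, which is within the target after dividing by $nT$.

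\emph{Mixing piece.} The third piece, $\widetilde{\boldsymbol\Theta}_{i,t}$, is a function of $X_{i,t}$ only. Here I apply Berbee's coupling within each trajectory, exactly as in Lemma~\ref{Asy_Sigma_infty}, with block length $q=\lceil -5\log(nT\vee d)/\log\rho\rceil$. The trailing incomplete block is treated as an extra block. Across all $i$, the coupled odd blocks are mutually independent, because trajectories are independent; the same holds for the even blocks. Each block sum is mean zero and bounded by $2C_1qdL$, and there are about $nT/q$ blocks. Matrix Bernstein gives
\[
qdL\Big\{\sqrt{(nT/q)\log(nT\vee d)}+\log(nT\vee d)\Big\}\lesssim dL\Big\{\log(nT\vee d)\sqrt{nT}+\{\log(nT\vee d)\}^2\Big\}.
\]
The coupling failure probability is at most $n(T/q)C_\beta\rho^q\le\exp\{-c\log(nT\vee d)\}$ by the choice of $q$.

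\textbf{Combining.} Summing the three high-probability bounds and dividing by $nT$ gives the stated rate on an event $\mathcal G_3$ of probability at least $1-\exp\{-c_6\log(nT\vee d)\}$.

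\textbf{Main obstacle.} The hard step is the mixing term. The coupled blocks must preserve the per-trajectory nonstationary marginals, so that they remain mean zero after subtracting $\mathbb E\boldsymbol\Theta_{i,t}$. One must also check that joint independence across trajectories and odd blocks justifies a single Bernstein application rather than the truncation argument used in Lemma~\ref{Asy_Sigma_infty}. If this joint coupling is delicate, the fallback is to bound each trajectory's sum by matrix Bernstein and then apply matrix Bernstein again over the independent $i$ with truncation. This mirrors \eqref{theta_truncated_hoeffding} and costs only logarithmic factors, which the $\{\log(nT\vee d)\}^2/(nT)$ term can absorb.
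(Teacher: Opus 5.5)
Your proposal is correct and follows essentially the same route as the paper. For bounded $T$ you apply matrix Bernstein over independent trajectories. For diverging $T$ you use the three-term decomposition: matrix Azuma on a refined filtration (revealing $X_{i,t}$, then $A_{i,t}$, then $X_{i,t+1}$) handles the two martingale pieces, and Berbee coupling of within-trajectory block sums with $q\asymp\log(nT\vee d)$, followed by a single matrix Bernstein over the jointly independent odd (resp.\ even) coupled blocks, handles the mixing piece.

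The concern in your last paragraph is resolved exactly as the paper resolves it. Berbee's lemma gives each coupled block sum the same law as the original, so it stays mean zero despite nonstationarity, and choosing the auxiliary randomization independently across $i$ gives joint independence. The truncation fallback is therefore unnecessary.
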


\begin{proof}
Throughout this proof, \(C_1,C_2,\ldots\) denote positive constants
that are local to the present proof and are re-indexed from \(C_1\).
They are distinct from the lowercase constants
\(c_1,c_2,\ldots\) appearing in the statements of the results.

All matrix concentration inequalities used in this proof are equivalent
reformulations of those in \citet{tropp2012user}; see
Lemma~\ref{lem:matrix_concentration} for their derivations.

Let \(p'=2(d+1)L\) and
\[
\boldsymbol{\Theta}_{i,t}
=
\boldsymbol{\xi}_{i,t}
\left(
\boldsymbol{\xi}_{i,t}
-
\gamma\boldsymbol{U}_{i,t+1}
\right)^\top.
\]
Then
\[
\widehat{\boldsymbol{\Sigma}}
-
\boldsymbol{\Sigma}
=
\frac{1}{nT}
\sum_{i=1}^n\sum_{t=0}^{T-1}
\left(
\boldsymbol{\Theta}_{i,t}
-
\mathbb E\boldsymbol{\Theta}_{i,t}
\right).
\]

By Remark~\ref{basis_bound},
\(\|\boldsymbol{\xi}_{i,t}\|_2\leq
\sqrt{1+dc_m^2L}\) and
\(\|\boldsymbol{U}_{i,t+1}\|_2\leq
\sqrt{1+dc_m^2L}\). Hence,
\[
\|\boldsymbol{\Theta}_{i,t}\|_2
\leq
(1+\gamma)(1+dc_m^2L).
\]
Write \(R=(1+\gamma)(1+dc_m^2L)\). Notice that
\(R\leq(1+\gamma)(1+c_m^2)dL\).

Note that \(L\asymp (nT)^{{1}/{(2m-1)}}\). There exists a constant
\(C_1>0\) such that
\[
\log(2p')
\leq
C_1\log(nT\vee d).
\]

We first consider the case where \(T\) is bounded. Let \(C_2>0\)
satisfy \(T\leq C_2\), and define
\[
\boldsymbol Z_i
=
\frac{1}{T}
\sum_{t=0}^{T-1}
\left(
\boldsymbol{\Theta}_{i,t}
-
\mathbb E\boldsymbol{\Theta}_{i,t}
\right).
\]
The matrices \(\boldsymbol Z_1,\ldots,\boldsymbol Z_n\) are independent
and mean zero, with \(\|\boldsymbol Z_i\|_2\leq2R\). Their matrix
variance parameter is bounded by \(4nR^2\). Therefore,
Lemma~\ref{lem:matrix_concentration}(i) gives, for every \(u>0\),
\[
\Pr\left[
\left\|
\frac{1}{n}\sum_{i=1}^n\boldsymbol Z_i
\right\|_2
>
2\sqrt{2}R
\sqrt{\frac{u+\log(2p')}{n}}
+
\frac{4R}{3n}\{u+\log(2p')\}
\right]
\leq
\exp(-u).
\]
Taking \(u=4\log(nT\vee d)\), the quantity \(u+\log(2p')\) is bounded
by \((4+C_1)\log(nT\vee d)\). Since \(T\leq C_2\) and
\(\log(nT\vee d)\geq1\) for sufficiently large \(nT\vee d\), it follows
that
\[
\left\|
\widehat{\boldsymbol{\Sigma}}
-
\boldsymbol{\Sigma}
\right\|_2
\leq
C_3R
\frac{\log(nT\vee d)}{\sqrt{nT}}
\]
with probability at least \(1-(nT\vee d)^{-4}\), where
\[
C_3
=
2\sqrt{2C_2(4+C_1)}
+
\frac{4C_2(4+C_1)}{3}.
\]

We next consider the case where \(T\) diverges. Define
\(\boldsymbol{\Theta}_{i,t}^*
=\mathbb E(\boldsymbol{\Theta}_{i,t}\mid X_{i,t},A_{i,t})\) and
\(\boldsymbol{\Theta}_{i,t}^{**}
=\mathbb E(\boldsymbol{\Theta}_{i,t}^*\mid X_{i,t})\). Then
\[
\begin{aligned}
\boldsymbol{\Theta}_{i,t}
-
\mathbb E\boldsymbol{\Theta}_{i,t}
&=
\left(
\boldsymbol{\Theta}_{i,t}
-
\boldsymbol{\Theta}_{i,t}^*
\right)
+
\left(
\boldsymbol{\Theta}_{i,t}^*
-
\boldsymbol{\Theta}_{i,t}^{**}
\right) \\
&\quad+
\left(
\boldsymbol{\Theta}_{i,t}^{**}
-
\mathbb E\boldsymbol{\Theta}_{i,t}
\right).
\end{aligned}
\]
Accordingly, write
\[
\sum_{i=1}^n\sum_{t=0}^{T-1}
\left(
\boldsymbol{\Theta}_{i,t}
-
\mathbb E\boldsymbol{\Theta}_{i,t}
\right)
=
\boldsymbol S_1+\boldsymbol S_2+\boldsymbol S_3,
\]
where
\[
\boldsymbol S_1
=
\sum_{i,t}
\left(
\boldsymbol{\Theta}_{i,t}
-
\boldsymbol{\Theta}_{i,t}^*
\right),
\qquad
\boldsymbol S_2
=
\sum_{i,t}
\left(
\boldsymbol{\Theta}_{i,t}^*
-
\boldsymbol{\Theta}_{i,t}^{**}
\right),
\]
and
\[
\boldsymbol S_3
=
\sum_{i,t}
\left(
\boldsymbol{\Theta}_{i,t}^{**}
-
\mathbb E\boldsymbol{\Theta}_{i,t}
\right).
\]

Order the trajectories by \(i=1,\ldots,n\). For each \(i\), let
\(\mathcal X_{<i}\) be the sigma-field generated by the complete
trajectories \(1,\ldots,i-1\). Define the within-trajectory filtration
by
\[
\begin{aligned}
\mathcal H_{i,2t}
&=
\sigma\left\{
\mathcal X_{<i},
X_{i,0},A_{i,0},X_{i,1},\ldots,
A_{i,t-1},X_{i,t}
\right\},\\
\mathcal H_{i,2t+1}
&=
\mathcal H_{i,2t}\vee\sigma(A_{i,t}),\\
\mathcal H_{i,2t+2}
&=
\mathcal H_{i,2t+1}\vee\sigma(X_{i,t+1}),
\qquad
t=0,\ldots,T-1.
\end{aligned}
\]
These sigma-fields, ordered first by \(i\) and then by the second
index, form a filtration.

For \(\boldsymbol S_1\), place the increment
\(\boldsymbol{\Theta}_{i,t}-\boldsymbol{\Theta}_{i,t}^*\) between
\(\mathcal H_{i,2t+1}\) and \(\mathcal H_{i,2t+2}\), and place a zero
increment between \(\mathcal H_{i,2t}\) and
\(\mathcal H_{i,2t+1}\). The increment is
\(\mathcal H_{i,2t+2}\)-measurable. Moreover, by the Markov property
and the independence of the trajectories,
\[
\begin{aligned}
\mathbb E\left[
\boldsymbol{\Theta}_{i,t}
-
\boldsymbol{\Theta}_{i,t}^*
\mid
\mathcal H_{i,2t+1}
\right]
&=
\mathbb E\left[
\boldsymbol{\Theta}_{i,t}
\mid
\mathcal H_{i,2t+1}
\right]
-
\boldsymbol{\Theta}_{i,t}^*\\
&=
\mathbb E\left[
\boldsymbol{\Theta}_{i,t}
\mid
X_{i,t},A_{i,t}
\right]
-
\boldsymbol{\Theta}_{i,t}^*\\
&=
\boldsymbol 0.
\end{aligned}
\]
Thus, the summands in \(\boldsymbol S_1\), together with the inserted
zero increments, form a matrix martingale difference sequence.

For \(\boldsymbol S_2\), place the increment
\(\boldsymbol{\Theta}_{i,t}^*-\boldsymbol{\Theta}_{i,t}^{**}\)
between \(\mathcal H_{i,2t}\) and \(\mathcal H_{i,2t+1}\), and place a
zero increment between \(\mathcal H_{i,2t+1}\) and
\(\mathcal H_{i,2t+2}\). Since the behavior policy is a fixed
stationary Markov policy,
\[
\begin{aligned}
\mathbb E\left[
\boldsymbol{\Theta}_{i,t}^*
-
\boldsymbol{\Theta}_{i,t}^{**}
\mid
\mathcal H_{i,2t}
\right]
&=
\mathbb E\left[
\boldsymbol{\Theta}_{i,t}^*
\mid
\mathcal H_{i,2t}
\right]
-
\boldsymbol{\Theta}_{i,t}^{**}\\
&=
\mathbb E\left[
\boldsymbol{\Theta}_{i,t}^*
\mid
X_{i,t}
\right]
-
\boldsymbol{\Theta}_{i,t}^{**}\\
&=
\boldsymbol 0.
\end{aligned}
\]
Therefore, the summands in \(\boldsymbol S_2\), together with the
inserted zero increments, also form a matrix martingale difference
sequence.
Because conditional expectations preserve the almost-sure bound
\(\|\boldsymbol{\Theta}_{i,t}\|_2\leq R\), every increment in
\(\boldsymbol S_1\) and \(\boldsymbol S_2\) has operator norm at most
\(2R\). Lemma~\ref{lem:matrix_concentration}(ii) therefore yields, for
\(j=1,2\) and every \(u>0\),
\[
\Pr\left[
\|\boldsymbol S_j\|_2
>
4\sqrt{2}R
\sqrt{nT\{u+\log(2p')\}}
\right]
\leq
\exp(-u).
\]
Taking \(u=4\log(nT\vee d)\) and combining the two bounds gives
\begin{equation}
\label{Sigma_matrix_martingale}
\frac{\|\boldsymbol S_1\|_2+\|\boldsymbol S_2\|_2}{nT}
\leq
8\sqrt{2(4+C_1)}\,R
\sqrt{
\frac{\log(nT\vee d)}{nT}
}
\end{equation}
with probability at least \(1-2(nT\vee d)^{-4}\).

It remains to control \(\boldsymbol S_3\). Let
\(\widetilde{\boldsymbol{\Theta}}_{i,t}
=\boldsymbol{\Theta}_{i,t}^{**}
-\mathbb E\boldsymbol{\Theta}_{i,t}\). This is a deterministic
matrix-valued function of \(X_{i,t}\), satisfies
\(\mathbb E\widetilde{\boldsymbol{\Theta}}_{i,t}=\boldsymbol 0\), and
has operator norm at most \(2R\).

By Assumption~(A4), the state process is exponentially
\(\beta\)-mixing, with \(\beta(q)\leq C_\beta\rho^q\) for some
\(C_\beta>0\) and \(\rho\in(0,1)\). Set
\[
q
=
\left\lceil
-\frac{5\log(nT\vee d)}{\log\rho}
\right\rceil
\]
and \(m=\lceil T/q\rceil\). By the growth conditions, \(q\leq T\) for
all sufficiently large \(nT\vee d\). Partition
\(\{0,\ldots,T-1\}\) into consecutive blocks
\(\mathcal I_1,\ldots,\mathcal I_m\), each containing \(q\) time points
except possibly the last block.

For each \(i\) and \(k\), let
\(Y_{i,k}=(X_{i,t}:t\in\mathcal I_k)\) and define
\[
\boldsymbol W_{i,k}
=
\sum_{t\in\mathcal I_k}
\widetilde{\boldsymbol{\Theta}}_{i,t}.
\]
Since \(\widetilde{\boldsymbol{\Theta}}_{i,t}\) is a deterministic
function of \(X_{i,t}\), the matrix \(\boldsymbol W_{i,k}\) is a
measurable function of \(Y_{i,k}\), and hence
\(\sigma(\boldsymbol W_{i,k})\subseteq\sigma(Y_{i,k})\).

Fix \(i\) and first consider the odd block sums
\(\boldsymbol W_{i,1},\boldsymbol W_{i,3},\ldots\). Regard each
\(p\times p\) random matrix as a random element of
\(\mathbb R^{p^2}\) through vectorization. Berbee's coupling lemma can
then be applied recursively to this sequence. On an enlarged
probability space, it produces random matrices
\(\boldsymbol W_{i,1}^*,\boldsymbol W_{i,3}^*,\ldots\) such that each
\(\boldsymbol W_{i,k}^*\) has the same distribution as
\(\boldsymbol W_{i,k}\), the coupled odd block sums are mutually
independent, and
\[
\Pr\left(
\boldsymbol W_{i,k}^*
\neq
\boldsymbol W_{i,k}
\right)
\leq
\beta(q)
\]
for every odd \(k\). Indeed, the preceding odd blocks and the current
odd block are separated by an intervening block of length \(q\), and
\[
\beta\left(
\sigma\{\boldsymbol W_{i,1},\ldots,\boldsymbol W_{i,k-2}\},
\sigma(\boldsymbol W_{i,k})
\right)
\leq
\beta(q),
\]
because the sigma-fields generated by the matrix block sums are
contained in those generated by the corresponding state blocks.

Applying the same construction to the even block sums produces
\(\boldsymbol W_{i,2}^*,\boldsymbol W_{i,4}^*,\ldots\), which are
mutually independent and satisfy the same marginal-distribution and
mismatch-probability properties. Since the \(n\) trajectories are
independent, the auxiliary variables in these coupling constructions
may be chosen independently across \(i\). Consequently,
\[
\left\{
\boldsymbol W_{i,k}^*:
1\leq i\leq n,\ k\ {\rm odd}
\right\}
\]
is a mutually independent family, and the same is true for the
even-numbered coupled block sums.

Let \(\mathcal E_{\rm cpl}\) be the event on which
\(\boldsymbol W_{i,k}^*=\boldsymbol W_{i,k}\) for every \(i\) and \(k\).
Since \(m\leq2T/q\),
\[
\begin{aligned}
\Pr(\mathcal E_{\rm cpl}^c)
&\leq
nm\beta(q)\\
&\leq
\frac{2nT}{q}C_\beta\rho^q\\
&\leq
2C_\beta(nT\vee d)^{-4},
\end{aligned}
\]
where the last inequality follows from
\(\rho^q\leq(nT\vee d)^{-5}\).

Each \(\boldsymbol W_{i,k}^*\) is mean zero and satisfies
\(\|\boldsymbol W_{i,k}^*\|_2\leq2qR\). For either parity, there are at
most \(2nT/q\) such block sums. Therefore, the corresponding matrix
variance parameter is bounded by
\[
\frac{2nT}{q}(2qR)^2
=
8nTqR^2.
\]
Lemma~\ref{lem:matrix_concentration}(i) gives, for every \(u>0\),
\[
\Pr\left[
\left\|
\sum_{i=1}^n
\sum_{\substack{1\leq k\leq m\\k\ {\rm odd}}}
\boldsymbol W_{i,k}^*
\right\|_2
>
4R\sqrt{nTq\{u+\log(2p')\}}
+
\frac{4qR}{3}\{u+\log(2p')\}
\right]
\leq
\exp(-u),
\]
and the same bound holds for the even block sums.

On \(\mathcal E_{\rm cpl}\), \(\boldsymbol S_3\) is the sum of the
coupled odd and even block sums. Taking
\(u=4\log(nT\vee d)\) and combining the two parity bounds yields
\begin{equation}
\label{Sigma_matrix_mixing}
\frac{\|\boldsymbol S_3\|_2}{nT}
\leq
8R
\sqrt{
\frac{
q(4+C_1)\log(nT\vee d)
}{nT}
}
+
\frac{
8qR(4+C_1)\log(nT\vee d)
}{3nT}
\end{equation}
except on an event of probability at most
\[
2(nT\vee d)^{-4}
+
2C_\beta(nT\vee d)^{-4}.
\]

Let
\[
C_4=-\frac{5}{\log\rho}+1.
\]
Since
\(q\leq C_4\log(nT\vee d)\) for all sufficiently large
\(nT\vee d\), \eqref{Sigma_matrix_martingale} and
\eqref{Sigma_matrix_mixing} imply
\[
\left\|
\widehat{\boldsymbol{\Sigma}}
-
\boldsymbol{\Sigma}
\right\|_2
\leq
R
\left[
C_5
\frac{\log(nT\vee d)}{\sqrt{nT}}
+
C_6
\frac{\{\log(nT\vee d)\}^2}{nT}
\right].
\]
%where
%\[
%C_5
%=
%8\sqrt{2(4+C_1)}
%+
%8\sqrt{C_4(4+C_1)},
%\qquad
%C_6
%=
%\frac{8C_4(4+C_1)}{3}.
%\]

The total failure probability in the diverging-\(T\) case is at most
\((4+2C_\beta)(nT\vee d)^{-4}\), which is bounded by
\(\exp\{-3\log(nT\vee d)\}\) for all sufficiently large
\(nT\vee d\). The same probability bound holds in the bounded-\(T\)
case after increasing the threshold if necessary.

Define
\[
\mathcal G_3
=
\left\{
\left\|
\widehat{\boldsymbol{\Sigma}}
-
\boldsymbol{\Sigma}
\right\|_2
\leq
c_5dL
\left[
\frac{\log(nT\vee d)}{\sqrt{nT}}
+
\frac{\{\log(nT\vee d)\}^2}{nT}
\right]
\right\}.
\]
Finally, since
\(R\leq(1+\gamma)(1+c_m^2)dL\), the result follows by taking
\[
c_6=3
\]
and any \(c_5\) satisfying
\[
c_5
\geq
(1+\gamma)(1+c_m^2)
\max\{C_3,C_5,C_6\}.
\]
Thus,
\[
\Pr(\mathcal G_3)
\geq
1-\exp\{-c_6\log(nT\vee d)\}.
\]
This completes the proof.
\end{proof}

\section{Proofs of Results in Section \ref{mainsection}}
\subsection{Proof of Theorem \ref{main1}}
\begin{proof}
Throughout this proof, \(C_1,C_2,\ldots\) denote positive constants
that are local to the present proof and are re-indexed from \(C_1\).
They are distinct from the lowercase constants
\(c_1,c_2,\ldots\) appearing in the statements of the results.

Recall that
\(\widehat{\boldsymbol{\kappa}}
=
(\widehat{\boldsymbol{\alpha}}-\boldsymbol{\alpha}^*,
\widehat{\boldsymbol{\beta}}-\boldsymbol{\beta}^*)_{\oplus}\).
Define
\(S=S_0\cup\{d+k:k\in S_1\}\) and
\(\mathcal S=G_0\cup(\cup_{k\in S}G_k)\). By construction,
\(|S|\leq 2d_0\). For convenience, let
\(\widehat{\boldsymbol{\kappa}}_{G_0}
=\widehat{\boldsymbol{\alpha}}-\boldsymbol{\alpha}^*\), and, for
\(k=1,\ldots,2d\), let
\(\widehat{\boldsymbol{\kappa}}_{G_k}
=\widehat{\boldsymbol{\beta}}_{G_k}
-\boldsymbol{\beta}_{G_k}^*\).

By the definition of the estimator,
\(\sum_{k=1}^{2d}\|\widehat{\boldsymbol{\beta}}_{G_k}\|_2
\leq
\sum_{k=1}^{2d}\|\boldsymbol{\beta}_{G_k}^*\|_2\).
Moreover, \(\boldsymbol{\beta}_{G_k}^*=\boldsymbol 0\) for
\(k\notin S\). Hence,
\[
\begin{aligned}
\sum_{k=1}^{2d}\|\boldsymbol{\beta}_{G_k}^*\|_2
&\geq
\sum_{k\in S}
\left\|
\boldsymbol{\beta}_{G_k}^*
+
\widehat{\boldsymbol{\kappa}}_{G_k}
\right\|_2
+
\sum_{k\notin S}
\|\widehat{\boldsymbol{\kappa}}_{G_k}\|_2\\
&\geq
\sum_{k\in S}\|\boldsymbol{\beta}_{G_k}^*\|_2
-
\sum_{k\in S}\|\widehat{\boldsymbol{\kappa}}_{G_k}\|_2
+
\sum_{k\notin S}\|\widehat{\boldsymbol{\kappa}}_{G_k}\|_2.
\end{aligned}
\]
It follows that
\[
\sum_{k\notin S}\|\widehat{\boldsymbol{\kappa}}_{G_k}\|_2
\leq
\sum_{k\in S}\|\widehat{\boldsymbol{\kappa}}_{G_k}\|_2.
\]
Consequently,
\begin{equation}
\label{56}
\begin{aligned}
\sum_{k=0}^{2d}\|\widehat{\boldsymbol{\kappa}}_{G_k}\|_2
&\leq
\|\widehat{\boldsymbol{\kappa}}_{G_0}\|_2
+
2\sum_{k\in S}\|\widehat{\boldsymbol{\kappa}}_{G_k}\|_2\\
&\leq
2\sum_{k\in S\cup\{0\}}
\|\widehat{\boldsymbol{\kappa}}_{G_k}\|_2\\
&\leq
2\sqrt{2d_0+1}\,
\|\widehat{\boldsymbol{\kappa}}_{\mathcal S}\|_2\\
&\leq
2\sqrt{2d_0+1}\,
\|\widehat{\boldsymbol{\kappa}}\|_2.
\end{aligned}
\end{equation}

Since each group contains at most \(L\) coordinates,
\(\|\widehat{\boldsymbol{\kappa}}_{G_k}\|_1
\leq
\sqrt L\|\widehat{\boldsymbol{\kappa}}_{G_k}\|_2\). Therefore,
\[
\|\widehat{\boldsymbol{\kappa}}\|_1
=
\sum_{k=0}^{2d}
\|\widehat{\boldsymbol{\kappa}}_{G_k}\|_1
\leq
\sqrt L
\sum_{k=0}^{2d}
\|\widehat{\boldsymbol{\kappa}}_{G_k}\|_2.
\]
Combining this inequality with \eqref{56} gives
\begin{equation}
\label{kappa_l1_l2}
\|\widehat{\boldsymbol{\kappa}}\|_1
\leq
2\sqrt{L(2d_0+1)}
\|\widehat{\boldsymbol{\kappa}}\|_2.
\end{equation}

We next bound the quadratic form
\(\widehat{\boldsymbol{\kappa}}^\top
\widehat{\boldsymbol{\Sigma}}
\widehat{\boldsymbol{\kappa}}\).
By Lemma~\ref{feasiable_dantzig}, the oracle coefficient vector is
feasible on the event \(\mathcal G_1\). Since the estimated coefficient
vector is feasible by construction, on \(\mathcal G_1\), the triangle
inequality gives
\[
\left\|
\left[
\widehat{\boldsymbol{\Sigma}}
\widehat{\boldsymbol{\kappa}}
\right]_{G_k}
\right\|_2
\leq
2\lambda,
\qquad
k=0,\ldots,2d.
\]
It follows that, on \(\mathcal G_1\),
\[
\begin{aligned}
\left|
\widehat{\boldsymbol{\kappa}}^\top
\widehat{\boldsymbol{\Sigma}}
\widehat{\boldsymbol{\kappa}}
\right|
&=
\left|
\sum_{k=0}^{2d}
\widehat{\boldsymbol{\kappa}}_{G_k}^\top
\left[
\widehat{\boldsymbol{\Sigma}}
\widehat{\boldsymbol{\kappa}}
\right]_{G_k}
\right|\\
&\leq
\sum_{k=0}^{2d}
\|\widehat{\boldsymbol{\kappa}}_{G_k}\|_2
\left\|
\left[
\widehat{\boldsymbol{\Sigma}}
\widehat{\boldsymbol{\kappa}}
\right]_{G_k}
\right\|_2\\
&\leq
2\lambda
\sum_{k=0}^{2d}
\|\widehat{\boldsymbol{\kappa}}_{G_k}\|_2\\
&\leq
4\lambda\sqrt{2d_0+1}\,
\|\widehat{\boldsymbol{\kappa}}\|_2.
\end{aligned}
\]

On the other hand, Lemma~\ref{eigen_sigma} implies
\[
\left|
\widehat{\boldsymbol{\kappa}}^\top
\boldsymbol{\Sigma}
\widehat{\boldsymbol{\kappa}}
\right|
\geq
\frac{\bar c}{2}
\|\widehat{\boldsymbol{\kappa}}\|_2^2.
\]
Therefore,
\[
\begin{aligned}
\left|
\widehat{\boldsymbol{\kappa}}^\top
\widehat{\boldsymbol{\Sigma}}
\widehat{\boldsymbol{\kappa}}
\right|
&\geq
\left|
\widehat{\boldsymbol{\kappa}}^\top
\boldsymbol{\Sigma}
\widehat{\boldsymbol{\kappa}}
\right|
-
\left|
\widehat{\boldsymbol{\kappa}}^\top
(\widehat{\boldsymbol{\Sigma}}-\boldsymbol{\Sigma})
\widehat{\boldsymbol{\kappa}}
\right|\\
&\geq
\frac{\bar c}{2}
\|\widehat{\boldsymbol{\kappa}}\|_2^2
-
\|\widehat{\boldsymbol{\Sigma}}-\boldsymbol{\Sigma}\|_{\max}
\|\widehat{\boldsymbol{\kappa}}\|_1^2.
\end{aligned}
\]
By \eqref{kappa_l1_l2} and
Lemma~\ref{Asy_Sigma_infty}, on the event \(\mathcal G_2\),
\[
\begin{aligned}
\left|
\widehat{\boldsymbol{\kappa}}^\top
\widehat{\boldsymbol{\Sigma}}
\widehat{\boldsymbol{\kappa}}
\right|
\geq
\bigg[
\frac{\bar c}{2}
-
4(2d_0+1)c_3L^2
\log(nT\vee d)
\sqrt{
\frac{\log(nT\vee d)}{nT}
}
\bigg]
\|\widehat{\boldsymbol{\kappa}}\|_2^2.
\end{aligned}
\]
Under \(L\asymp(nT)^{1/(2m-1)}\) with \(m\geq3\), for all sufficiently
large \(nT\),
\begin{equation}
\label{constraint1}
\frac{\bar c}{2}
-
4(2d_0+1)c_3L^2
\log(nT\vee d)
\sqrt{
\frac{\log(nT\vee d)}{nT}
}
\geq
\frac{\bar c}{4}.
\end{equation}
Combining the upper and lower bounds for the quadratic form, on the
event \(\mathcal G_1\cap\mathcal G_2\), yields
\[
\frac{\bar c}{4}
\|\widehat{\boldsymbol{\kappa}}\|_2^2
\leq
4\lambda\sqrt{2d_0+1}\,
\|\widehat{\boldsymbol{\kappa}}\|_2.
\]
Thus,
\(\|\widehat{\boldsymbol{\kappa}}\|_2
\lesssim\lambda\sqrt{d_0}\). Since
\(\lambda\asymp
\sqrt L\log(nT\vee d)/\sqrt{nT}\), we obtain
\[
\|\widehat{\boldsymbol{\kappa}}\|_2
\lesssim
\frac{\log(nT\vee d)}{\sqrt{nT}}
\sqrt{d_0L}.
\]
Equation \eqref{kappa_l1_l2} further gives
\[
\|\widehat{\boldsymbol{\kappa}}\|_1
\lesssim
\frac{\log(nT\vee d)}{\sqrt{nT}}
d_0L.
\]
These bounds hold on the event
\(\mathcal G_1\cap\mathcal G_2\), where
\[
P(\mathcal G_1\cap\mathcal G_2)
\geq
1-
\exp\{-c_2\log(nT\vee d)\}
-
\exp\{-c_4\log(nT\vee d)\}.
\]

Furthermore, if we relax the sparsity assumption and allow the group-norm tail
\[
R_S
=
\sum_{k\notin S}
\|\boldsymbol{\beta}_{G_k}^*\|_2 =o(1).
\]
By the definition of the estimator,
\[
\sum_{k=1}^{2d}
\|\widehat{\boldsymbol{\beta}}_{G_k}\|_2
\leq
\sum_{k=1}^{2d}
\|\boldsymbol{\beta}_{G_k}^*\|_2,
\]
we obtain
\[
\sum_{k\notin S}
\|\widehat{\boldsymbol{\kappa}}_{G_k}\|_2
\leq
\sum_{k\in S}
\|\widehat{\boldsymbol{\kappa}}_{G_k}\|_2
+
2R_S.
\]
Consequently,
\[
\sum_{k=0}^{2d}
\|\widehat{\boldsymbol{\kappa}}_{G_k}\|_2
\leq
2\sqrt{2d_0+1}\,
\|\widehat{\boldsymbol{\kappa}}\|_2
+
2R_S,
\]
and, since each group contains at most \(L\) coordinates,
\[
\|\widehat{\boldsymbol{\kappa}}\|_1
\leq
2\sqrt{L(2d_0+1)}
\|\widehat{\boldsymbol{\kappa}}\|_2
+
2\sqrt{L}R_S.
\]

As in the proof of Theorem~5, feasibility of the estimated and
oracle coefficient vectors implies
\[
\begin{aligned}
\left|
\widehat{\boldsymbol{\kappa}}^\top
\widehat{\boldsymbol{\Sigma}}
\widehat{\boldsymbol{\kappa}}
\right|
&\leq
2\lambda
\sum_{k=0}^{2d}
\|\widehat{\boldsymbol{\kappa}}_{G_k}\|_2\\
&\leq
4\lambda\sqrt{2d_0+1}\,
\|\widehat{\boldsymbol{\kappa}}\|_2
+
4\lambda R_S.
\end{aligned}
\]
On the other hand, letting
\[
\eta_{nT}
=
\|\widehat{\boldsymbol{\Sigma}}-\boldsymbol{\Sigma}\|_{\max} =o(1),
\]
we have
\[
\begin{aligned}
\left|
\widehat{\boldsymbol{\kappa}}^\top
\widehat{\boldsymbol{\Sigma}}
\widehat{\boldsymbol{\kappa}}
\right|
&\geq
\frac{\bar c}{2}
\|\widehat{\boldsymbol{\kappa}}\|_2^2
-
\eta_{nT}
\|\widehat{\boldsymbol{\kappa}}\|_1^2\\
&\geq
\left\{
\frac{\bar c}{2}
-
8L(2d_0+1)\eta_{nT}
\right\}
\|\widehat{\boldsymbol{\kappa}}\|_2^2
-
8L\eta_{nT}R_S^2.
\end{aligned}
\]
 Combining the upper and lower bounds gives
\[
\frac{\bar c}{4}
\|\widehat{\boldsymbol{\kappa}}\|_2^2
\leq
4\lambda\sqrt{2d_0+1}\,
\|\widehat{\boldsymbol{\kappa}}\|_2
+
4\lambda R_S
+
8L\eta_{nT}R_S^2.
\]
Solving this quadratic inequality yields
\[
\|\widehat{\boldsymbol{\kappa}}\|_2
\lesssim
\lambda\sqrt{d_0}
+
\sqrt{\lambda R_S}
+
\sqrt{L\eta_{nT}}\,R_S.
\]

In our setting,
\[
\lambda
\asymp
\frac{\sqrt{L}\log(nT\vee d)}{\sqrt{nT}},
\qquad
\eta_{nT}
\asymp
L\log(nT\vee d)
\sqrt{
\frac{\log(nT\vee d)}{nT}
}.
\]
Therefore,
\[
\begin{aligned}
\|\widehat{\boldsymbol{\kappa}}\|_2
\lesssim{}&
\frac{\sqrt{d_0L}\log(nT\vee d)}{\sqrt{nT}}\\
&+
\frac{L^{1/4}\{\log(nT\vee d)\}^{1/2}}
{(nT)^{1/4}}
\sqrt{R_S}\\
&+
\frac{L\{\log(nT\vee d)\}^{3/4}}
{(nT)^{1/4}}
R_S.
\end{aligned}
\]
If $d_0$ is fixed and \[
R_S
=
O\left(
(nT)^{-\frac{m-1}{2m-1}}
\right),
\]
$\|\widehat{\boldsymbol{\kappa}}\|_2$ keeps the same order as it with original sparsity assumption.

We finally consider the special case where \(d=O(1)\). By
Lemma~\ref{Asy_Sigma_spectral}, on the event \(\mathcal G_3\),
\[
\|\widehat{\boldsymbol{\Sigma}}-\boldsymbol{\Sigma}\|_2
\leq
c_5dL
\left\{
\frac{\log(nT\vee d)}{\sqrt{nT}}
+
\frac{\{\log(nT\vee d)\}^2}{nT}
\right\}.
\]
Since
\[
\left|
\widehat{\boldsymbol{\kappa}}^\top
(\widehat{\boldsymbol{\Sigma}}-\boldsymbol{\Sigma})
\widehat{\boldsymbol{\kappa}}
\right|
\leq
\|\widehat{\boldsymbol{\Sigma}}-\boldsymbol{\Sigma}\|_2
\|\widehat{\boldsymbol{\kappa}}\|_2^2,
\]
condition \eqref{constraint1} can be replaced by
\[
\frac{\bar c}{2}
-
c_5dL
\left\{
\frac{\log(nT\vee d)}{\sqrt{nT}}
+
\frac{\{\log(nT\vee d)\}^2}{nT}
\right\}
\geq
\frac{\bar c}{4}.
\]
Under \(d=O(1)\), it is sufficient that
\[
L
\left\{
\frac{\log(nT\vee d)}{\sqrt{nT}}
+
\frac{\{\log(nT\vee d)\}^2}{nT}
\right\}
=o(1).
\]
Recall that the feasibility requires
\[
\frac{(nT)^{1/(2m)}}{L}
=
o(1).
\]
In particular, if
\(L\asymp(nT)^{1/(2m-1)}\), this condition holds whenever
\(2m-1>2\). On the event
\(\mathcal G_1\cap\mathcal G_3\), the same argument then gives the
preceding \(\ell_2\)- and \(\ell_1\)-error bounds, where
\[
P(\mathcal G_1\cap\mathcal G_3)
\geq
1-
\exp\{-c_2\log(nT\vee d)\}
-
\exp\{-c_6\log(nT\vee d)\}.
\]

\end{proof}
\subsection{Proof of Theorem \ref{main2}}
\begin{proof}
Throughout this proof, \(C_1,C_2,\ldots\) denote positive constants
that are local to the present proof and are re-indexed from \(C_1\).
They are distinct from the lowercase constants
\(c_1,c_2,\ldots\) appearing in the statements of the results.

It follows from the triangle inequality that
\[
|\hat{V}^{\pi}(\tilde{x})-V^{\pi}(\tilde{x})|
\leq
|\hat{V}^{\pi}(\tilde{x})-V^{\pi*}(\tilde{x})|
+
|V^{\pi*}(\tilde{x})-V^{\pi}(\tilde{x})|.
\]
From inequality~\eqref{funcional_est}, we have
\[
|V^{\pi}(\tilde{x})-V^{\pi*}(\tilde{x})|
\leq
d_0C_1L^{-m},
\]
and
\[
\mathbb{E}_{\tilde{x}}
|V^{\pi}(\tilde{x})-V^{\pi*}(\tilde{x})|
\leq
d_0C_1L^{-m}.
\]
Thus, the remaining proof focuses on the estimation error
\[
|\hat{V}^{\pi}(\tilde{x})-V^{\pi*}(\tilde{x})|.
\]
We have
\begin{equation}
\begin{aligned}
|\hat{V}^{\pi}(\tilde{x})-V^{\pi*}(\tilde{x})|
&=
|U^T(\tilde{x})\hat{\boldsymbol{\kappa}}| \\
&=
\left|
\sum_{k=0}^{2d}
U_{G_k}^T(\tilde{x})
\hat{\boldsymbol{\kappa}}_{G_k}
\right| \\
&\leq
\sum_{k=0}^{2d}
\left|
U_{G_k}^T(\tilde{x})
\hat{\boldsymbol{\kappa}}_{G_k}
\right| \\
&\leq
\sum_{k=0}^{2d}
\|U_{G_k}^T(\tilde{x})\|_2
\|\hat{\boldsymbol{\kappa}}_{G_k}\|_2.
\end{aligned}
\end{equation}
Note that
\[
\|U_{G_0}^T(\tilde{x})\|_2
=
\sqrt{
\pi(0\mid\tilde{x})^2+
\pi(1\mid\tilde{x})^2
}
\leq
\sqrt{2}
\leq
\sqrt{L},
\]
and
\[
\|U_{G_k}^T(\tilde{x})\|_2
\leq
\|\Phi_k(\tilde{x})\|_2.
\]
Combined with Remark~\ref{basis_bound}, we have
\begin{equation}
\begin{aligned}
\sum_{k=0}^{2d}
\|U_{G_k}^T(\tilde{x})\|_2
\|\hat{\boldsymbol{\kappa}}_{G_k}\|_2
&\leq
c_m\sqrt{L}
\sum_{k=0}^{2d}
\|\hat{\boldsymbol{\kappa}}_{G_k}\|_2 \\
&\leq
\sqrt{4c_m^2L(2d_0+1)}
\|\hat{\boldsymbol{\kappa}}\|_2.
\end{aligned}
\end{equation}
The second inequality follows from equation~\eqref{56}. Combined with
Theorem~\ref{main1}, on the event
\(\mathcal G_1\cap\mathcal G_2\), we have
\begin{equation}
\sup_{\tilde{x}\in\mathcal{X}^d}
|\hat{V}^{\pi}(\tilde{x})-V^{\pi*}(\tilde{x})|
\lesssim
d_0L
\frac{\log(nT\vee d)}{\sqrt{nT}}.
\end{equation}

Under the choice \(L=(nT)^{1/(2m-1)}\), the functional approximation
error satisfies
\[
|V^{\pi}(\tilde{x})-V^{\pi*}(\tilde{x})|
\leq
d_0C_1L^{-m}
=
d_0C_1(\sqrt{nT})^{-2m/(2m-1)}
\lesssim
d_0(\sqrt{nT})^{-1}.
\]
Therefore, on the same event,
\begin{equation}
\begin{aligned}
%\label{m2e1}
\sup_{\tilde{x}\in\mathcal{X}^d}
|\hat{V}^{\pi}(\tilde{x})-V^{\pi}(\tilde{x})|
&\lesssim
d_0L
\frac{\log(nT\vee d)}{\sqrt{nT}}
+
d_0(\sqrt{nT})^{-1} \\
&\lesssim
d_0L
\frac{\log(nT\vee d)}{\sqrt{nT}}.
\end{aligned}
\end{equation}

Next, we establish the bound for the generalized test error. Assume that
the density \(\mu_{\tilde{x}}\) is uniformly bounded by
\(\tilde{c}_{\mu}\). By definition,
\begin{align}
&
\left|
\mathbb{E}_{\tilde{x}}\hat{V}^{\pi}(\tilde{x})
-
\mathbb{E}_{\tilde{x}}V^{\pi*}(\tilde{x})
\right|
\nonumber\\
&\quad=
\left|
\int_{\tilde{x}\in\mathcal{X}^d}
U^T(\tilde{x})
\hat{\boldsymbol{\kappa}}
\mu_{\tilde{x}}
\,d\tilde{x}
\right|
\nonumber\\
&\quad\leq
\tilde{c}_{\mu}
\sum_{k=0}^{2d}
\int_{\tilde{x}\in\mathcal{X}^d}
\left|
U_{G_k}^T(\tilde{x})
\hat{\boldsymbol{\kappa}}_{G_k}
\right|
\,d\tilde{x}.
\label{eq:integrated_value_first}
\end{align}
It follows from the Cauchy--Schwarz inequality that
\begin{align}
&
\int_{\tilde{x}\in\mathcal{X}^d}
\left|
U_{G_k}^T(\tilde{x})
\hat{\boldsymbol{\kappa}}_{G_k}
\right|
\,d\tilde{x}
\nonumber\\
&\quad=
\int_{\tilde{x}\in\mathcal{X}^d}
\left\{
\hat{\boldsymbol{\kappa}}_{G_k}^T
U_{G_k}(\tilde{x})
U_{G_k}^T(\tilde{x})
\hat{\boldsymbol{\kappa}}_{G_k}
\right\}^{1/2}
\,d\tilde{x}
\nonumber\\
&\quad\lesssim
\left\{
\int_{\tilde{x}\in\mathcal{X}^d}
\hat{\boldsymbol{\kappa}}_{G_k}^T
U_{G_k}(\tilde{x})
U_{G_k}^T(\tilde{x})
\hat{\boldsymbol{\kappa}}_{G_k}
\,d\tilde{x}
\right\}^{1/2}
\nonumber\\
&\quad=
\left[
\hat{\boldsymbol{\kappa}}_{G_k}^T
\left\{
\int_{\tilde{x}\in\mathcal{X}^d}
U_{G_k}(\tilde{x})
U_{G_k}^T(\tilde{x})
\,d\tilde{x}
\right\}
\hat{\boldsymbol{\kappa}}_{G_k}
\right]^{1/2},
\label{eq:integrated_group_bound}
\end{align}
where the first inequality also uses the boundedness of
\(\mathcal{X}^d\).

When \(\pi\) is a stationary deterministic policy, we have
\[
0\leq\pi(a\mid\tilde{x})\leq 1
\]
and
\[
\pi(0\mid\tilde{x})\pi(1\mid\tilde{x})=0.
\]
Therefore,
\[
U_{G_k}(\tilde{x})U_{G_k}^T(\tilde{x})
=
\operatorname{Diag}
\left\{
\Phi_k(\tilde{x})\Phi_k^T(\tilde{x})
\pi(0\mid\tilde{x}),
\,
\Phi_k(\tilde{x})\Phi_k^T(\tilde{x})
\pi(1\mid\tilde{x})
\right\}
\]
is a block-diagonal matrix.

For any
\[
\boldsymbol{a}
=
(\boldsymbol{a}_0^T,\boldsymbol{a}_1^T)^T
\in\mathbb{R}^{2L},
\qquad
\boldsymbol{a}_0,\boldsymbol{a}_1\in\mathbb{R}^L,
\]
we have
\begin{align}
&
\boldsymbol{a}^T
\left\{
\int_{\tilde{x}\in\mathcal{X}^d}
U_{G_k}(\tilde{x})
U_{G_k}^T(\tilde{x})
\,d\tilde{x}
\right\}
\boldsymbol{a}
\nonumber\\
&\quad=
\boldsymbol{a}_0^T
\left\{
\int_{\tilde{x}\in\mathcal{X}^d}
\Phi_k(\tilde{x})
\Phi_k^T(\tilde{x})
\pi(0\mid\tilde{x})
\,d\tilde{x}
\right\}
\boldsymbol{a}_0
\nonumber\\
&\qquad+
\boldsymbol{a}_1^T
\left\{
\int_{\tilde{x}\in\mathcal{X}^d}
\Phi_k(\tilde{x})
\Phi_k^T(\tilde{x})
\pi(1\mid\tilde{x})
\,d\tilde{x}
\right\}
\boldsymbol{a}_1
\nonumber\\
&\quad\leq
\lambda_{\max}
\left\{
\int_{\tilde{x}\in\mathcal{X}^d}
\Phi_k(\tilde{x})
\Phi_k^T(\tilde{x})
\,d\tilde{x}
\right\}
\left(
\|\boldsymbol{a}_0\|_2^2+
\|\boldsymbol{a}_1\|_2^2
\right)
\nonumber\\
&\quad=
\lambda_{\max}
\left\{
\int_{\tilde{x}\in\mathcal{X}^d}
\Phi_k(\tilde{x})
\Phi_k^T(\tilde{x})
\,d\tilde{x}
\right\}
\|\boldsymbol{a}\|_2^2
\nonumber\\
&\quad\lesssim
\|\boldsymbol{a}\|_2^2.
\label{eq:integrated_gram_bound}
\end{align}
The last inequality follows from Lemma~\ref{Bs_bound_uni}. The same
bound holds for the intercept group because
\[
U_{G_0}(\tilde{x})U_{G_0}^T(\tilde{x})
=
\operatorname{Diag}
\left\{
\pi(0\mid\tilde{x}),
\pi(1\mid\tilde{x})
\right\}.
\]
Combining \eqref{eq:integrated_group_bound} and
\eqref{eq:integrated_gram_bound}, we obtain
\begin{equation}
\left|
\mathbb{E}_{\tilde{x}}\hat{V}^{\pi}(\tilde{x})
-
\mathbb{E}_{\tilde{x}}V^{\pi*}(\tilde{x})
\right|
\lesssim
\sum_{k=0}^{2d}
\|\hat{\boldsymbol{\kappa}}_{G_k}\|_2.
\end{equation}
Combining this result with Theorem~\ref{main1}, on the event
\(\mathcal G_1\cap\mathcal G_2\), we obtain
\begin{equation}
\left|
\mathbb{E}_{\tilde{x}}\hat{V}^{\pi}(\tilde{x})
-
\mathbb{E}_{\tilde{x}}V^{\pi*}(\tilde{x})
\right|
\lesssim
d_0\sqrt{L}
\frac{\log(nT\vee d)}{\sqrt{nT}}.
\end{equation}

Moreover, by inequality~\eqref{funcional_est},
\begin{equation}
\begin{aligned}
\left|
\mathbb{E}_{\tilde{x}}
\left[
V^{\pi*}(\tilde{x})-V^{\pi}(\tilde{x})
\right]
\right|
&\leq
\mathbb{E}_{\tilde{x}}
\left|
V^{\pi*}(\tilde{x})-V^{\pi}(\tilde{x})
\right| \\
&\leq
d_0C_1L^{-m}.
\end{aligned}
\end{equation}
Therefore, by the triangle inequality, on
\(\mathcal G_1\cap\mathcal G_2\),
\begin{equation}
\begin{aligned}
\left|
\mathbb{E}_{\tilde{x}}\hat{V}^{\pi}(\tilde{x})
-
\mathbb{E}_{\tilde{x}}V^{\pi}(\tilde{x})
\right|
&\leq
\left|
\mathbb{E}_{\tilde{x}}\hat{V}^{\pi}(\tilde{x})
-
\mathbb{E}_{\tilde{x}}V^{\pi*}(\tilde{x})
\right| \\
&\quad+
\left|
\mathbb{E}_{\tilde{x}}V^{\pi*}(\tilde{x})
-
\mathbb{E}_{\tilde{x}}V^{\pi}(\tilde{x})
\right| \\
&\lesssim
d_0\sqrt{L}
\frac{\log(nT\vee d)}{\sqrt{nT}}
+
d_0L^{-m}.
\end{aligned}
\end{equation}
Under the choice \(L=(nT)^{1/(2m-1)}\), the approximation error
\(d_0L^{-m}\) is dominated by the preceding estimation error. Hence,
on the same event,
\begin{equation}
\left|
\mathbb{E}_{\tilde{x}}\hat{V}^{\pi}(\tilde{x})
-
\mathbb{E}_{\tilde{x}}V^{\pi}(\tilde{x})
\right|
\lesssim
d_0\sqrt{L}
\frac{\log(nT\vee d)}{\sqrt{nT}}.
\end{equation}

Finally,
\[
P(\mathcal G_1\cap\mathcal G_2)
\geq
1-
\exp\{-c_2\log(nT\vee d)\}
-
\exp\{-c_4\log(nT\vee d)\}.
\]
This completes the proof.
\end{proof}

\begin{proof}

It follows from the triangle inequality that
\[
|\hat{V}^{\pi}(\tilde{x})-V^{\pi}(\tilde{x})|
\leq
|\hat{V}^{\pi}(\tilde{x})-V^{\pi*}(\tilde{x})|
+
|V^{\pi*}(\tilde{x})-V^{\pi}(\tilde{x})|.
\]
From inequality~(\ref{funcional_est}), we have
\[
|V^{\pi}(\tilde{x})-V^{\pi*}(\tilde{x})|
\leq
d_0 C L^{-m},
\]
and
\[
\mathbb{E}_{\tilde{x}}
|V^{\pi}(\tilde{x})-V^{\pi*}(\tilde{x})|
\leq
d_0 C L^{-m}.
\]
Thus, the remaining proof focuses on the estimation error
\[
|\hat{V}^{\pi}(\tilde{x})-V^{\pi*}(\tilde{x})|.
\]

    \begin{equation}
    \begin{aligned}
       |\hat{V}^{\pi}(\tilde{x})-V^{\pi*}(\tilde{x})| = |U^T(\tilde{x})\hat{\boldsymbol{\kappa}}|&= |\sum_{k=0}^{2d}U_{G_k}^T(\tilde{x})\hat{\boldsymbol{\kappa}}_{G_k}|\leq \sum_{k=0}^{2d}|U_{G_k}^T(\tilde{x})\hat{\boldsymbol{\kappa}}_{G_k}|\\&\leq \sum_{k=0}^{2d}\|U_{G_k}^T(\tilde{x})\|_2\|\hat{\boldsymbol{\kappa}}_{G_k}\|_2.           
    \end{aligned}
\end{equation}
Note that$\|U^T_{G_0}(\tilde{x})\|_2 = \sqrt{\pi(0\mid \tilde{x})^2+\pi(1\mid \tilde{x})^2}\leq \sqrt{2} \leq \sqrt{L}$ and $\|U^T_{G_k}(\tilde{x})\|_2\leq \|\Phi_k(\tilde{x})\|_2$. Combined with the remark\ref{basis_bound}, we have
\begin{equation}
\sum_{k=0}^{2d}\|U_{G_k}^T(\tilde{x})\|_2\|\hat{\boldsymbol{\kappa}}_{G_k}\|_2\leq c_m\sqrt{L}\sum_{k=0}^{2d}\|\hat{\boldsymbol{\kappa}}_{G_k}\|_2\leq \sqrt{4c_m^2L(2d_0+1)}\|\hat{\boldsymbol{\kappa}}\|_2
\end{equation}
The second inequality follows from the equation(\ref{56}). Combined with Theorem\ref{main1}, we have 
\begin{equation}
        \sup_{\tilde{x}\in \mathcal{X}^d }|\hat{V}^{\pi}(\tilde{x})-V^{\pi*}(\tilde{x})|\lesssim d_0L\frac{\log(nT \vee d)}{\sqrt{nT}}
\end{equation}
with probability at least $1- \exp\{-c_3\log(nT \vee d)\}$. With the condition $L = (nT)^{1/(2m-1)}$, the functional approximation error $|\hat{V}^{\pi}(\tilde{x})-V^{\pi*}(\tilde{x})| \lesssim (\sqrt{nT})^{(-2m)/(2m-1)} \leq \sqrt{nT}^{-1}$ almost surely. With the same probability, we have
\begin{equation}\begin{aligned}\label{m2e1}
         \sup_{\tilde{x}\in \mathcal{X}^d }|\hat{V}^{\pi}(\tilde{x})-V^{\pi}(\tilde{x})|&\lesssim d_0L\frac{\log(nT \vee d)}{\sqrt{nT}}+d_0(\sqrt{nT})^{-1}\\   & \lesssim d_0L\frac{\log(nT \vee d)}{\sqrt{nT}}
\end{aligned}
\end{equation}
Next, we will show the bound of generalized test error. We assume $\mu_{\tilde{x}}\leq \tilde{c}_{\mu}$. By the definition
\begin{equation}
        |\mathbb{E}_{\tilde{x}}\hat{V}^{\pi}(\tilde{x})-\mathbb{E}_{\tilde{x}}V^{\pi*}(\tilde{x})|=\bigg|\int_{\tilde{x}\in \mathcal{X}} U^T(\tilde{x})\hat{\boldsymbol{\kappa}} \mu_{\tilde{x}}dx\bigg|\leq \tilde{c}_{\mu}\sum_{k=0}^{2d}\bigg|\int_{\tilde{x}\in \mathcal{X}} U^T_{G_k}(\tilde{x})\hat{\boldsymbol{\kappa}}_{G_k} dx\bigg|
\end{equation}
It follows from Jensen's inequality that
\begin{equation}
\begin{aligned}
        \bigg|\int_{\tilde{x}\in \mathcal{X}} U^T_{G_k}(\tilde{x})\hat{\boldsymbol{\kappa}}_{G_k} dx\bigg|&\leq    \int_{\tilde{x}\in \mathcal{X}} |U^T_{G_k}(\tilde{x})\hat{\boldsymbol{\kappa}}_{G_k} |dx \\ 
        &=\int_{\tilde{x}\in \mathcal{X}} \bigg(\hat{\boldsymbol{\kappa}}^T_{G_k}U_{G_k}(\tilde{x})U^T_{G_k}(\tilde{x})\hat{\boldsymbol{\kappa}}_{G_k}\bigg)^{1/2} dx \\
        &\leq \bigg(\int_{\tilde{x}\in \mathcal{X}} \hat{\boldsymbol{\kappa}}^T_{G_k}U_{G_k}(\tilde{x})U^T_{G_k}(\tilde{x})\hat{\boldsymbol{\kappa}}_{G_k} dx \bigg)^{1/2}\\
        &= \bigg(\hat{\boldsymbol{\kappa}}^T_{G_k}\bigg\{\int_{\tilde{x}\in \mathcal{X}} U_{G_k}(\tilde{x})U^T_{G_k}(\tilde{x}) dx \bigg\}\hat{\boldsymbol{\kappa}}_{G_k}\bigg)^{1/2}
\end{aligned}
\end{equation}
When $\pi $ is a stationary deterministic policy,   we have $\pi(\cdot|\tilde{x})\leq 1$ and $\pi(0\mid \tilde{x})\pi(1\mid \tilde{x}) = 0$. Therefore,
 $ U_{G_k}(\tilde{x})U^T_{G_k}(\tilde{x}) 
 = \text{Diag}(\Phi_k(\tilde{x})\Phi_k^T(\tilde{x})\pi(0\mid \tilde{x}),\Phi_k(\tilde{x})\Phi_k^T(\tilde{x})\pi(1\mid \tilde{x}))$ is a block diagonal matrix. For $\forall \boldsymbol{a} = (\boldsymbol{a}_0^T,\boldsymbol{a}_1^T)^T \in \mathbb{R}^{2L}, \boldsymbol{a}_0,\boldsymbol{a}_1 \in \mathbb{R}^{L}$
 \begin{equation}
 \begin{aligned}
      \boldsymbol{a}^T\bigg(\int_{\tilde{x}\in \mathcal{X}} U_{G_k}(\tilde{x})U^T_{G_k}(\tilde{x}) dx\bigg)\boldsymbol{a} &= \boldsymbol{a}_0^T\bigg(\int_{\tilde{x}\in \mathcal{X}}\Phi_k(\tilde{x})\Phi_k^T(\tilde{x}) \pi(0\mid \tilde{x})dx\bigg)\boldsymbol{a}_0\\
    &\quad +\boldsymbol{a}_1^T\bigg(\int_{\tilde{x}\in \mathcal{X}}\Phi_k(\tilde{x})\Phi_k^T(\tilde{x}) \pi(1\mid \tilde{x})dx\bigg)\boldsymbol{a}_1\\
     &\leq \lambda_{max}\bigg(\int_{\tilde{x}\in \mathcal{X}}\Phi_k(\tilde{x})\Phi_k^T(\tilde{x}) dx\bigg)(\|\boldsymbol{a}_0\|_2^2+\|\boldsymbol{a}_1\|_2^2)\\
     &\leq \lambda_{max}\bigg(\int_{\tilde{x}\in \mathcal{X}}\Phi_k(\tilde{x})\Phi_k^T(\tilde{x}) dx\bigg)\|\boldsymbol{a}\|_2\\
     & \lesssim \|\boldsymbol{a}\|_2
 \end{aligned}
 \end{equation}
 The last the inequality follows from lemma \ref{Bs_bound_uni} and the fact:  $U_{G_0}(\tilde{x})U^T_{G_0} = \text{Diag}(\pi(0\mid \tilde{x}),\pi(1\mid \tilde{x}))$. To summarize, we proved 
 \begin{equation}
     |\mathbb{E}_{\tilde{x}}\hat{V}^{\pi}(\tilde{x})-\mathbb{E}_{\tilde{x}}V^{\pi*}(\tilde{x})|\lesssim \sum_{k=0}^{2d}\|\hat{\boldsymbol{\kappa}}_{G_k}\|_2
 \end{equation}
 Combining this result with the proof of Theorem~\ref{main1}, we obtain
\begin{equation}
\left|
\mathbb{E}_{\tilde{x}}\hat{V}^{\pi}(\tilde{x})
-
\mathbb{E}_{\tilde{x}}V^{\pi*}(\tilde{x})
\right|
\lesssim
d_0\sqrt{L}
\frac{\log(nT\vee d)}{\sqrt{nT}}
\end{equation}
with probability at least
$1-\exp\{-c_3\log(nT\vee d)\}$.

Moreover, by inequality~(\ref{funcional_est}),
\begin{equation}
\begin{aligned}
\left|
\mathbb{E}_{\tilde{x}}
\left[
V^{\pi*}(\tilde{x})-V^{\pi}(\tilde{x})
\right]
\right|
&\leq
\mathbb{E}_{\tilde{x}}
\left|
V^{\pi*}(\tilde{x})-V^{\pi}(\tilde{x})
\right| \\
&\leq
d_0 C L^{-m}.
\end{aligned}
\end{equation}
Therefore, by the triangle inequality,
\begin{equation}
\begin{aligned}
\left|
\mathbb{E}_{\tilde{x}}\hat{V}^{\pi}(\tilde{x})
-
\mathbb{E}_{\tilde{x}}V^{\pi}(\tilde{x})
\right|
&\leq
\left|
\mathbb{E}_{\tilde{x}}\hat{V}^{\pi}(\tilde{x})
-
\mathbb{E}_{\tilde{x}}V^{\pi*}(\tilde{x})
\right| \\
&\quad+
\left|
\mathbb{E}_{\tilde{x}}V^{\pi*}(\tilde{x})
-
\mathbb{E}_{\tilde{x}}V^{\pi}(\tilde{x})
\right| \\
&\lesssim
d_0\sqrt{L}
\frac{\log(nT\vee d)}{\sqrt{nT}}
+
d_0L^{-m}.
\end{aligned}
\end{equation}
Under the choice $L=(nT)^{1/(2m-1)}$, the approximation error
$d_0L^{-m}$ is dominated by the preceding estimation error. Hence,
with the same probability,
\begin{equation}
\left|
\mathbb{E}_{\tilde{x}}\hat{V}^{\pi}(\tilde{x})
-
\mathbb{E}_{\tilde{x}}V^{\pi}(\tilde{x})
\right|
\lesssim
d_0\sqrt{L}
\frac{\log(nT\vee d)}{\sqrt{nT}}.
\end{equation}
\end{proof}

\subsection{Proof of Theorem \ref{main3}}
\begin{proof}
Throughout this proof, $C_1,C_2,\ldots$ denote positive constants
that are local to the present proof and are re-indexed from $C_1$.
They are distinct from the lowercase constants
$c_1,c_2,\ldots$ appearing in the statements of the results.

Denote
\[
q^{\pi *}_{s,a}(x_s)
=
\Phi_s^{T}(x_s)\boldsymbol{\beta}_{s,a}^*.
\]
In this proof, we omit the superscript $\pi$ from
$q$, $\widehat q$, and $q^*$ for simplicity. The expectation is taken
with respect to the reference distribution $x\sim\mathbb G$.

Consider an active feature $x_s$, where $s\in\mathcal K$. Under the
condition in Theorem~\ref{main3}, there exists an action
$a\in\{0,1\}$ such that
\[
\mathbb E\left\{q_{s,a}^2(x_s)\right\}
\geq
v.
\]
By \eqref{funcional_est},
\[
\left|
q_{s,a}(x_s)-q_{s,a}^*(x_s)
\right|
\leq
C_1L^{-m}
\]
uniformly over all action-state pairs. We have
\begin{equation}
\begin{aligned}
\label{96}
\mathbb E\left\{q_{s,a}^2(x_s)\right\}
&=
\mathbb E
\left[
\left\{
q_{s,a}^*(x_s)
+
q_{s,a}(x_s)-q_{s,a}^*(x_s)
\right\}^2
\right] \\
&\leq
2\mathbb E\left\{\left(q_{s,a}^*(x_s)\right)^2\right\}
+
2\mathbb E
\left[
\left\{
q_{s,a}(x_s)-q_{s,a}^*(x_s)
\right\}^2
\right] \\
&\leq
2\mathbb E\left\{\left(q_{s,a}^*(x_s)\right)^2\right\}
+
2C_1^2L^{-2m}.
\end{aligned}
\end{equation}
Thus,
\[
\mathbb E\left\{\left(q_{s,a}^*(x_s)\right)^2\right\}
\geq
\frac{v}{2}-C_1^2L^{-2m}.
\]
Since
\[
L\asymp(nT)^{1/(2m-1)},
\]
for all sufficiently large $nT$,
\[
\mathbb E\left\{\left(q_{s,a}^*(x_s)\right)^2\right\}
\geq
\frac{v}{4}.
\]

On the event $\mathcal G_1\cap\mathcal G_2$, Theorem~\ref{main1}
implies that
\[
\|\widehat{\boldsymbol{\kappa}}\|_2
\leq
c_{\kappa}\sqrt{L}
\frac{\log(nT\vee d)}{\sqrt{nT}}.
\]
Using the same inequality as in \eqref{96}, we have
\begin{equation}
\begin{aligned}
\label{97}
\mathbb E\left\{\left(q_{s,a}^*(x_s)\right)^2\right\}
&=
\mathbb E
\left[
\left\{
\widehat q_{s,a}(x_s)
+
q_{s,a}^*(x_s)-\widehat q_{s,a}(x_s)
\right\}^2
\right] \\
&\leq
(1+0.5)
\mathbb E\left\{\widehat q_{s,a}^2(x_s)\right\}
+
(1+2)
\mathbb E
\left[
\left\{
\widehat q_{s,a}(x_s)-q_{s,a}^*(x_s)
\right\}^2
\right] \\
&\leq
1.5\mathbb E\left\{\widehat q_{s,a}^2(x_s)\right\}
+
3\mathbb E
\left\{
\|\Phi_s(x_s)\|_2^2
\|\widehat{\boldsymbol{\kappa}}_{G_s}\|_2^2
\right\} \\
&\leq
1.5\mathbb E\left\{\widehat q_{s,a}^2(x_s)\right\}
+
3
\left\{
c_mc_{\kappa}L
\frac{\log(nT\vee d)}{\sqrt{nT}}
\right\}^2.
\end{aligned}
\end{equation}
The last inequality follows from
\[
\|\widehat{\boldsymbol{\kappa}}_{G_s}\|_2
\leq
\|\widehat{\boldsymbol{\kappa}}\|_2
\]
and the definition
\[
c_{\kappa}
=
\frac{
8c_1\sqrt{2d_0+1}
}{
\bar c
}.
\]

Suppose that there exists an index $s$ such that
$s\in\mathcal K$ but $s\notin\widehat{\mathcal K}$. Then, for every
$a\in\{0,1\}$,
\begin{equation}
\begin{aligned}
\label{98}
\mathbb E\left\{\widehat q_{s,a}^2(x_s)\right\}
&\leq
\mathbb E
\left\{
\|\Phi_s(x_s)\|_2^2
\|\widehat{\boldsymbol{\beta}}_{G_s}\|_2^2
\right\} \\
&\leq
(c_m\sqrt L)^2h^2 \\
&\leq
\frac{v}{8}.
\end{aligned}
\end{equation}
Combining \eqref{96}, \eqref{97}, and \eqref{98}, we obtain
\[
\frac{v}{4}
\leq
\frac{3v}{16}
+
3
\left\{
c_mc_{\kappa}L
\frac{\log(nT\vee d)}{\sqrt{nT}}
\right\}^2.
\]
Consequently,
\[
\frac{v}{16}
\leq
3
\left\{
c_mc_{\kappa}L
\frac{\log(nT\vee d)}{\sqrt{nT}}
\right\}^2,
\]
and hence
\[
v
\lesssim
\left\{
L
\frac{\log(nT\vee d)}{\sqrt{nT}}
\right\}^2.
\]
However, under
\[
L\asymp(nT)^{1/(2m-1)},
\]
we have
\[
L\frac{\log(nT\vee d)}{\sqrt{nT}}
\longrightarrow
0
\]
as $nT\to\infty$. This contradicts the assumption that $v$ is bounded
away from zero. Therefore, on the event
$\mathcal G_1\cap\mathcal G_2$,
\[
\mathcal K
\subseteq
\widehat{\mathcal K}.
\]

Since $\mathcal G_1\cap\mathcal G_2$ occurs with the probability stated
in Theorem~\ref{main1}, the desired result follows.
\end{proof}
\section{Numerical Result}\label{app:numerical}

\begin{table}[p]
\centering

\begin{adjustbox}{angle=0,center,max width=\textheight}
\begin{minipage}{0.95\textheight}
\centering
\caption{Simulation results for $d=10$. Each entry reports bias (SD) / RMSE.}
\label{tab:d10_rho07}

\scriptsize
\renewcommand{\arraystretch}{1.25}
\setlength{\tabcolsep}{5pt}

\begin{tabular}{c c c c}
\toprule
$n \backslash T$ & $T=20$ & $T=30$ & $T=40$ \\
\midrule

$n=20$
&
\rescellfirst{ 0.12}{0.46}{0.46}{-0.17}{0.01}{0.17}{ 0.02}{0.17}{0.17}
&
\rescell{ 0.01}{0.25}{0.25}{-0.16}{0.02}{0.16}{-0.03}{0.16}{0.16}
&
\rescell{ 0.05}{0.15}{0.15}{-0.14}{0.06}{0.15}{ 0.01}{0.12}{0.12}
\\[1.2em]
\hline

$n=30$
&
\rescellfirst{-0.02}{0.19}{0.19}{-0.17}{0.00}{0.17}{ 0.02}{0.10}{0.10}
&
\rescell{-0.02}{0.14}{0.14}{-0.16}{0.01}{0.17}{-0.03}{0.11}{0.11}
&
\rescell{ 0.03}{0.12}{0.12}{-0.15}{0.04}{0.15}{ 0.02}{0.10}{0.10}
\\[1.2em]
\hline

$n=40$
&
\rescellfirst{ 0.03}{0.11}{0.11}{-0.17}{0.00}{0.17}{ 0.02}{0.08}{0.08}
&
\rescell{-0.01}{0.09}{0.08}{-0.17}{0.01}{0.17}{-0.01}{0.08}{0.08}
&
\rescell{ 0.03}{0.09}{0.10}{-0.15}{0.03}{0.15}{ 0.02}{0.09}{0.09}
\\[1.2em]
\hline

$n=50$
&
\rescellfirst{ 0.04}{0.13}{0.13}{-0.17}{0.00}{0.17}{ 0.01}{0.07}{0.07}
&
\rescell{ 0.01}{0.09}{0.09}{-0.17}{0.01}{0.17}{ 0.01}{0.08}{0.08}
&
\rescell{ 0.03}{0.07}{0.07}{-0.15}{0.02}{0.16}{ 0.02}{0.07}{0.07}
\\

\bottomrule
\end{tabular}
\end{minipage}
\end{adjustbox}

\end{table}

\begin{table}[p]
\centering

\begin{adjustbox}{angle=0,center,max width=\textheight}
\begin{minipage}{0.95\textheight}
\centering
\caption{Simulation results for $d=100$. Each entry reports bias (SD) / RMSE.}
\label{tab:d100_rho07}

\scriptsize
\renewcommand{\arraystretch}{1.25}
\setlength{\tabcolsep}{5pt}

\begin{tabular}{c c c c}
\toprule
$n \backslash T$ & $T=20$ & $T=30$ & $T=40$ \\
\midrule

$n=20$
&
\rescellfirst{ 0.09}{0.18}{0.19}{ 0.73}{0.19}{0.75}{ 0.03}{0.21}{0.20}
&
\rescell{ 0.07}{0.24}{0.24}{ 0.42}{0.23}{0.47}{-0.01}{0.18}{0.17}
&
\rescell{-0.03}{0.28}{0.27}{ 0.23}{0.19}{0.29}{-0.06}{0.20}{0.20}
\\[1.2em]
\hline

$n=30$
&
\rescellfirst{ 0.07}{0.15}{0.15}{ 0.66}{0.15}{0.67}{ 0.02}{0.14}{0.13}
&
\rescell{-0.02}{0.22}{0.21}{ 0.43}{0.16}{0.46}{-0.02}{0.16}{0.16}
&
\rescell{-1.83}{5.64}{5.65}{ 0.26}{0.14}{0.29}{-0.05}{0.16}{0.16}
\\[1.2em]
\hline

$n=40$
&
\rescellfirst{ 0.09}{0.11}{0.14}{ 0.66}{0.13}{0.67}{ 0.03}{0.12}{0.11}
&
\rescell{-1.34}{2.95}{3.10}{ 0.44}{0.09}{0.44}{ 0.00}{0.15}{0.14}
&
\rescell{ 0.58}{1.78}{1.78}{ 0.25}{0.14}{0.29}{-0.05}{0.12}{0.13}
\\[1.2em]
\hline

$n=50$
&
\rescellfirst{ 0.05}{0.16}{0.16}{ 0.65}{0.11}{0.66}{ 0.01}{0.10}{0.10}
&
\rescell{ 7.68}{17.78}{18.53}{ 0.42}{0.09}{0.43}{-0.03}{0.12}{0.12}
&
\rescell{ 0.27}{1.49}{1.44}{ 0.24}{0.15}{0.28}{-0.04}{0.12}{0.12}
\\

\bottomrule
\end{tabular}
\end{minipage}
\end{adjustbox}

\end{table}

\begin{table}[p]
\begin{adjustbox}{angle=0,center,max width=\textheight}
\begin{minipage}{0.95\textheight}
\centering
\caption{Simulation Results of Cartpole Environment. Each entry reports bias (SD) / RMSE}
\label{table1}

\scriptsize
\renewcommand{\arraystretch}{1.25}
\setlength{\tabcolsep}{4pt}

\begin{tabular}{c c c c}
\toprule
$n \backslash T$ & $T=50$ & $T=60$ & $T=70$ \\
\midrule

$n=50$
&
\rescellfirst{-0.33}{12.85}{12.53}{-12.85}{0.07}{12.86}{8.18}{3.28}{8.78}
&
\rescell{-9.46}{58.79}{58.08}{-7.38}{0.20}{7.38}{2.66}{1.04}{2.85}
&
\rescell{0.59}{5.43}{5.32}{-3.25}{0.33}{3.26}{1.75}{0.61}{1.85}
\\[1.2em]
\hline

$n=60$
&
\rescellfirst{-0.88}{14.76}{14.41}{-12.84}{0.07}{12.84}{6.25}{2.32}{6.65}
&
\rescell{0.95}{25.33}{24.71}{-7.52}{0.21}{7.53}{2.97}{1.00}{3.12}
&
\rescell{-0.31}{4.06}{3.97}{-3.43}{0.17}{3.44}{1.66}{0.91}{1.88}
\\[1.2em]
\hline

$n=70$
&
\rescellfirst{-6.14}{16.76}{17.45}{-12.87}{0.08}{12.87}{6.57}{3.13}{7.24}
&
\rescell{-8.80}{24.89}{25.81}{-7.49}{0.16}{7.49}{2.52}{0.90}{2.67}
&
\rescell{0.42}{6.61}{6.45}{-3.47}{0.19}{3.48}{1.54}{0.64}{1.66}
\\[1.2em]
\hline

$n=80$
&
\rescellfirst{-82.87}{357.27}{357.95}{-12.88}{0.05}{12.88}{4.61}{1.34}{4.79}
&
\rescell{-3.05}{18.03}{17.83}{-7.56}{0.13}{7.56}{2.12}{0.94}{2.31}
&
\rescell{2.41}{9.88}{9.93}{-3.46}{0.15}{3.46}{1.30}{0.41}{1.36}
\\
\bottomrule
\end{tabular}
\end{minipage}
\end{adjustbox}
\end{table}

\clearpage

\bibhang=1.7pc
\bibsep=2pt
\fontsize{9}{14pt plus.8pt minus .6pt}\selectfont
\renewcommand\bibname{\large \bf References}
%\begin{thebibliography}{11}
\expandafter\ifx\csname
natexlab\endcsname\relax\def\natexlab#1{#1}\fi
\expandafter\ifx\csname url\endcsname\relax
  \def\url#1{\texttt{#1}}\fi
\expandafter\ifx\csname urlprefix\endcsname\relax\def\urlprefix{URL}\fi

% \vskip .3cm
%\centerline{(Received ???? 20??; accepted ???? 20??)}\par

\appendix

\vskip 0.2in
\bibliography{sample}
\end{document}